\DeclareMathOperator{\Cat}{Cat}
\DeclareMathOperator{\KL}{KL}
\newtheorem{theorem}{Theorem}
\DeclareMathOperator{\expect}{\mathbb{E}}
\DeclareMathOperator{\ELBO}{\mathcal{L}}
\newcommand*\intd{\mathop{}\!\mathrm{d}}
\newcommand*\circled[1]{\tikz[baseline=(char.base)]{
            \node[shape=circle,draw,inner sep=1pt] (char) {#1};}}
\renewcommand{\vector}[1]{\boldsymbol{\mathbf{#1}}}
\renewcommand{\v}{\vector}
\newcommand{\vs}[1]{\vec{#1}}
\newcommand*\vv[1]{\vec{\v{#1}}}
\newcommand*\vvs[1]{\vec{#1}}
\newcommand{\B}[1]{\boldsymbol{#1}}
\title{Multi-Facet Clustering Variational Autoencoders}  %
\author{  %
Fabian Falck \textbf{\thanks{Equal contribution.}} $^{\ ,1,5}$ \,\,\, Haoting Zhang $^{*,2,5}$ \,\,\, Matthew Willetts $^{3,6}$ \\
\bf George Nicholson $^1$ \,\,\, Christopher Yau $^{4,5,6}$ \,\,\, Chris Holmes $^{1,5,6}$ \\
$^1$University of Oxford\, $^2$University of Cambridge\, $^3$University College London \\
$^4$University of Manchester\, $^5$Health Data Research UK\, $^6$The Alan Turing Institute \\
\texttt{fabian.falck@stats.ox.ac.uk, hz381@cl.cam.ac.uk, mwilletts@turing.ac.uk,} \\
\texttt{george.nicholson@stats.ox.ac.uk, cyau@turing.ac.uk, cholmes@stats.ox.ac.uk} \\
}
\begin{document}

\maketitle

\begin{abstract}
Work in deep clustering focuses on finding a \textit{single} partition of data. 
However, high-dimensional data, such as images, typically feature \textit{multiple} interesting characteristics one could cluster over.
For example, images of objects against a background could be clustered over the shape of the object and separately by the colour of the background.
In this paper, we introduce \textit{Multi-Facet Clustering Variational Autoencoders (MFCVAE)}, a novel class of variational autoencoders with a hierarchy of latent variables, each with a Mixture-of-Gaussians prior, that learns multiple clusterings simultaneously, and is trained fully unsupervised and end-to-end.
MFCVAE uses a progressively-trained ladder architecture which leads to highly stable performance.
We provide novel theoretical results for optimising the ELBO analytically with respect to the categorical variational posterior distribution, correcting earlier influential theoretical work.  %
On image benchmarks, we demonstrate that our approach separates out and clusters over different aspects of the data in a disentangled manner.
We also show other advantages of our model: the compositionality of its latent space and that it provides controlled generation of samples.

\end{abstract}

\section{Introduction}
\label{sec:Introduction}

\textit{Clustering} is the task of finding structure by partitioning samples in a finite, unlabeled dataset according to statistical or geometric notions of similarity~\cite{murphy2012machine, 2005clusteringSurvey, hansen1997cluster}.%
For example, we might group items along axes of empirical variation in the data, or maximise internal homogeneity and external separation of items within and between clusters with respect to a specified distance metric.
The choice of similarity measure and how one consequently validates clustering quality is fundamentally a subjective one: it depends on what is useful for a particular task~\cite{2005clusteringSurvey, von2012clustering}.
In this work, we are interested in uncovering abstract, latent characteristics/facets/aspects/levels of the data to understand and characterise the data-generative process. 
We further assume a fully exploratory, unsupervised setting without prior knowledge on the data, which could be exploited while fitting the clustering algorithm, and in particular without given ground-truth partitions at training time.

When being faced with high-dimensional data such as images, speech or electronic health records, items typically have more than one abstract characteristic.
Consider the example of the MNIST dataset \cite{lecun2010mnist}:
MNIST images possess at least two such characteristics: The digit class, which might impose the largest amount of statistical variation, and the style of the digit (e.g. stroke width).  %
This naturally raises a question: By which characteristic is a clustering algorithm supposed to partition the data?
In MNIST, both digit class and (the sub-categories of) style would be perfectly reasonable candidates to answer this question. 
In our exploratory setting described above, there is not one ``correct'' partition of the data.

Deep learning based clustering algorithms, so-called \textit{deep clustering}, were particularly successful in recent years in dealing with high-dimensional data by compressing the inputs into a lower-dimensional latent space in which clustering is computationally tractable \cite{min2018survey, aljalbout2018taxonomy}.
However, almost all of these deep clustering algorithms find only a \textit{single} partition of the data, typically the one corresponding to the given class label in a supervised dataset~\cite{dec, dcn, vade, imsat, spectralnet, yang2019deep, clustergan, acol-gar}. 
When evaluating their model, said approaches validate clustering performance by treating the one supervision label (e.g. digit class in the case of MNIST) as the de-facto ``ground truth clustering''.
We argue that restricting our view to a single facet $C_1$ rather than all or at least multiple facets $(C_1, C_2, \dots, C_J)$ is an arbitrary, incomplete choice of formulating the problem of clustering a high-dimensional dataset.

To this end, we propose \textit{Multi-Facet Clustering Variational Autoencoders (MFCVAE)}, a principled, probabilistic model which finds multiple characteristics of the data simultaneously through its multiple Mixtures-of-Gaussians (MoG) prior structure.  %
Our contributions are as follows: 
(a) \textit{Multi-Facet Clustering Variational Autoencoders (MFCVAE)}, a novel class of probabilistic deep learning models for unsupervised,  multi-facet clustering in high-dimensional data that can be optimised end-to-end. 
(b) Novel theoretical results for the optimisation of the corresponding ELBO, correcting and extending an influential, related paper for the single-facet case.
(c) Demonstrating MFCVAE's stable empirical performance in terms of multi-facet clustering of various levels of abstraction, compositionality of facets, generative, unsupervised classification, and diversity of generation.  %

\begin{figure}[t]
    \centering
    \includegraphics[width=\linewidth]{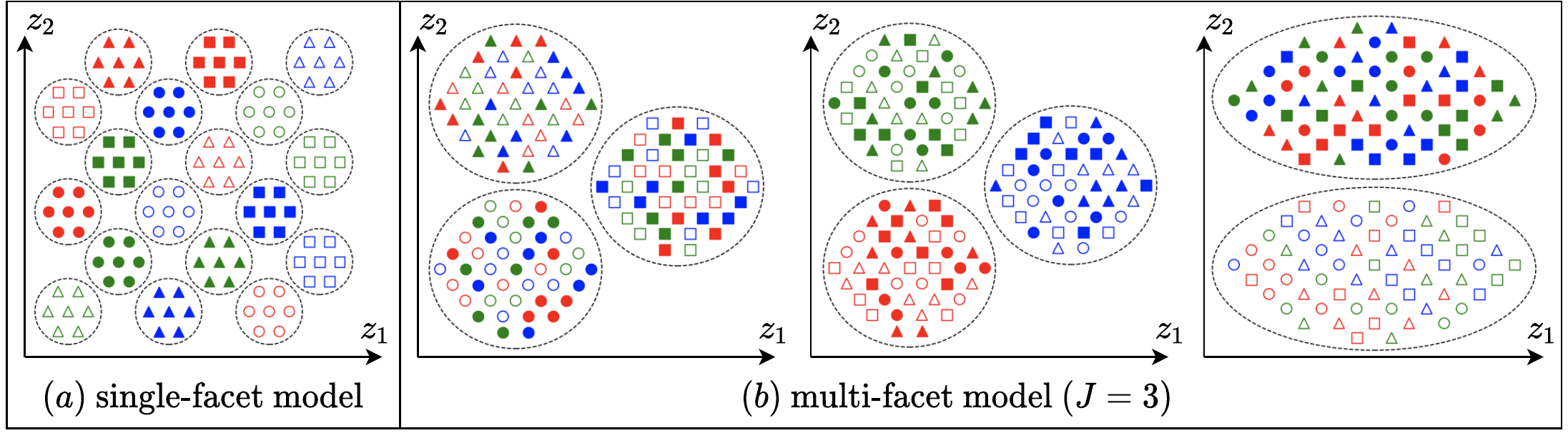}
    \caption{
    Latent space of a (a) single-facet model and a (b) multi-facet model ($J=3$) with two dimensions ($z_1$, $z_2$) per facet.
    Both models perfectly separate the abstract characteristics of the data. However, the multi-facet model disentangles them into three sensible partitions (one per facet) and its required clusters scale linearly as opposed to exponentially w.r.t. the number of aspects in the data.
    }
    \label{fig:intuition}
\end{figure}

\section{Multi-facet clustering} 
\label{sec:Multi-facet clustering}

High-dimensional data are inherently structured according to a number of abstract characteristics, and in an exploratory setting, it is clear that arbitrarily clustering by one of them is insufficient. 
However, the question remains whether these multiple facets should also be explicitly \textit{represented} by the model. 
In particular, one might argue that a single partition could be used to represent all cross-combinations\footnote{Note that in practice, not all cross-combinations of facets might be present. 
For example, in a dataset like MNIST, one might not observe `right-tilted zeros', even though we observe `right-tilted' digits and `zeros'.} of facets $\mathcal{C}= C_1 \times C_2 \times \dots \times C_J$ where $C_j = \{1,2,\ldots, K_j\}$, as in Fig.~\ref{fig:intuition} (a).
In this work, we explain that explicitly representing and clustering by multiple facets, as we do in MFCVAE and illustrated in Fig.~\ref{fig:intuition} (b), has the following four properties that are especially desirable in an unsupervised learning setting: 

\textbf{(a) Discovering a multi-facet structure. }
We adopt a probabilistically principled, unsupervised approach, specifying an independent, multiple Mixtures of Gaussians (MoG) prior on the latent space.
This induces a disentangled representation across facets, meaning that in addition to examples assigned to certain clusters being homogeneous, the facets (and their corresponding clusters) represent different abstract characteristics of the data (such as digit class or digit style).
Because of this multi-facet structure, the total number of clusters required to represent a given multi-partition structure of the data scales \textit{linearly} w.r.t. the number of data characteristics. 
In comparison, the number of clusters required in a single-facet model scales \textit{exponentially} (see Fig.~\ref{fig:intuition}, and Appendix~\ref{app:MixtureMoG} for details).
 
\textbf{(b) Compositionality of facets. }
A multi-facet model has a compositional advantage: different levels of abstraction of the data are represented in separate latent variables. 
As we will show, this allows qualitatively diverse characteristics to be meaningfully combined.

\textbf{(c) Generative, unsupervised classification. }
Our method joins a myriad of single-facet clustering models in being able to accurately identify known class structures given by the label in standard supervised image datasets.
However, in contrast to previous work, we are also able find interesting characteristics in other facets with homogeneous clusters. %
We stress that while we compare generative classification performance against other models to demonstrate statistical competitiveness, this task is \textit{not} the main motivation for our fully unsupervised model.

\textbf{(d) Diversity of generated samples. }In a generative sense, the structure of the latent space allows us to compose new, synthetic examples by a set of $J$ pairs of (continuous, discrete) latent variables. 
We can in particular intervene on each facet separately. %
This yields a rich set of options and fine-grained control for interventions and the diversity of generated examples. 

We illustrate these four properties in our experiments in Section~\ref{sec:Experiments}.

\section{Multi-Facet Clustering Variational Autoencoders}
\label{sec:Multi-Facet Clustering Variational Autoencoders}

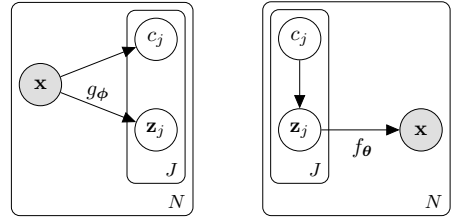
\begin{wrapfigure}{r}{0.47\linewidth}
  \centering
  \vspace{\dimexpr0.1\baselineskip-\topskip}
    \scalebox{0.8}{
    \beginpgfgraphicnamed{recognition_model}
    \begin{tikzpicture}
      \node[obs] (x) {$\mathbf{x}$}; %
      \node[latent, below right=2 of x, yshift=+1.168cm] (z) {$\mathbf{z}_j$}; %
      \node[latent, above right=2 of x, yshift=-1.168cm] (c) {$c_j$};
      \edge[] {x} {z}; %
      \edge[] {x} {c}; %
      \plate {zc} {%
        (z)(c)
      } {$J$} ;
      \plate{} {%
        (z)(c)(x)
        (zc.north)(zc.south)(zc.east)
      }{$N$} ;
      \node[const, right=.4 of x, yshift=-0.15cm] (g) {$g_{\bm{\phi}}$} ; %
    \end{tikzpicture}
    \endpgfgraphicnamed
    }
    \hspace{0.6cm}
    \scalebox{0.8}{
    \beginpgfgraphicnamed{generative_model}
    \begin{tikzpicture}
      \node[latent] (z) {$\mathbf{z}_j$}; %
      \node[latent, above=0.8 of z] (c) {$c_j$}; %
      \node[obs, right=1.3 of z] (x) {$\mathbf{x}$}; %
      \edge[] {c} {z}; %
      \edge[] {z} {x}; %
      \plate {zc} {%
        (z)(c)
      } {$J$} ;
      \plate[xshift=0cm, yshift=0.0cm] {} {
        (z)(c)(x)
        (zc.north)(zc.south)(zc.west)
      }{$N$} ;
      \node[const, right=.5 of z, yshift=-0.3cm] (f) {$f_{\bm{\theta}}$} ; %
    \end{tikzpicture}
    \endpgfgraphicnamed
    }
  \caption{
  Graphical model of MFCVAE. [Left] Variational posterior, $q_{\phi}(\vv{z},\v{c} | \v{x})$. [Right] Generative model, $p_{\theta}(\v{x},\vv{z},\v{c})$.
  }
  \label{fig:graph_model}
\end{wrapfigure}

Our model comprises $J$ latent facets, each learning its own unique clustering of samples via a Mixture-of-Gaussians (MoG) distribution:
\begin{equation}
    c_j \sim \Cat (\v{\pi}_j),\quad
    \v{z}_j \mid c_j \sim \mathcal{N}(\bm{\mu}_{c_j}, \bm{\Sigma}_{c_j})
    \label{eqn:gen1}
\end{equation}
where $\v{\pi}_j$ is the $j$th facet's $K_j$-dimensional vector of mixing weights,
and $(\bm{\mu}_{c_j}, \bm{\Sigma}_{c_j})$ are the mean and covariance of the $c_j$th mixture component in facet $j$
($\bm{\Sigma}_{c_j}$ can be either diagonal or full).

The multi-facet generative model (Fig.~\ref{fig:graph_model} [Right]) is thus structured as
\begin{align}
    p_{{\theta}}(\v{x},\vv{z},\v{c}) = p_{{\theta}}(\v{x} | \vv{z})p_{{\theta}}(\vv{z}|\v{c})p_{{\theta}}(\v c) = p_{{\theta}}(\v{x} | \vv{z})\prod_{j=1}^J p_{{\theta}}(\v{z}_j|c_j)p_{{\theta}}(c_j),
        \label{eqn:facet_independence_assumption}
\end{align}
where $\v c = \{c_1,c_2,...,c_J\}$, $\vv{z}=\{\v{z}_1,\v{z}_2,...,\v{z}_J\}$, and $p_{\theta}(\v{x} | \vv{z})$ is a user-defined likelihood model, for example a product of Bernoulli or Gaussian distributions, which is parameterised with a deep neural network $f(\vv{z}; {\theta})$. 
Importantly, this structure in Eq.~\eqref{eqn:facet_independence_assumption} encodes prior independence across facets, i.e.\  $p_{{\theta}}(\vv{z},\v{c})=\prod_j p_\theta(\v{z}_{j}, c_{j})$, thereby encouraging facets to learn clusterings that span distinct subspaces of $\vv{z}$.
The overall marginal prior $p_\theta(\vv{z})$ can be interpreted as a product of independent MoGs.

\subsection{VaDE tricks}
\label{sec:vade_tricks}

To train this model, we wish to optimise the evidence lower bound (ELBO) of the data marginal likelihood using an amortised variational posterior $q_\phi(\vv{z}, \v{c} | \v{x})$ (Fig.~\ref{fig:graph_model} [Left]), parameterised by a neural network $g(\v{x}; \phi)$, within which we will perform Monte Carlo (MC) estimation where necessary to approximate expectations  %
\begin{equation}
  \log p(\mathcal{D}) \geq \ELBO(\mathcal{D};\theta,\phi) = \expect_{\v x \sim \mathcal{D}}\left[\mathbb{E}_{q_\phi(\vv{z}, \v{c} | \v{x})}[\log \frac{p_\theta(\v{x}, \vv{z}, \v{c})}{q_\phi(\vv{z}, \v{c} | \v{x})}]\right].
  \label{eqn:jensen}
\end{equation}

What should we choose for $q_\phi(\vv{z}, \v{c} | \v{x})$?
Training deep generative models with discrete latent variables can be challenging, as reparameterisation tricks so far developed, such as the Gumbel-Softmax trick \cite{maddison2016concrete,jang2016categorical}, necessarily introduce bias into the optimisation, and become unstable when a discrete latent variable has a high cardinality.
Our setting where we have multiple discrete latent variables is even more challenging.
First, the bias from using the Gumbel-Softmax trick compounds when there is a hierarchy of dependent latent variables, leading to poor optimisation \cite{lievin2019towards}.
Second, we cannot necessarily avail ourselves of advances in obtaining good estimators for discrete latent variables as either they do not carry over to the hierarchical case \cite{grathwohl2018backprop}, or are restricted to binary latent variables \cite{pervez2020low}.
Third, we wish for light-weight optimisation, avoiding the introduction of additional neural networks whenever possible as this simplifies both training and neural specification.

Thus, we sidestep these problems, bias from relaxations of discrete variables \textit{and} the downsides of additional amortised-posterior neural networks for the discrete latent variables, by developing the hierarchical version of the \textit{VaDE trick}.
This trick was first developed for clustering VAEs with a \textit{single} Gaussian mixture in the generative model~\cite{vade}.
Informally, the idea (for a single-facet model) is to define a Bayes-optimal posterior for the discrete latent variable using the responsibilities of the constituent components of the mixture model; these responsibilities are calculated using samples taken from the amortised posterior for the continuous latent variable. 

Estimating the ELBO for models of this form does not require us to take MC samples from discrete distributions---the data likelihood is conditioned only on the continuous latent variable $\vv{z}$, which we sample using the reparameterization trick~\cite{kingma2013auto}, and the posterior for $\vv{z}$ is conditioned only on $\v x$.
Thus, when calculating the ELBO, we can cheaply marginalise out discrete latent variables where needed.
In other words, we do not have to perform multiple forward passes through the decoder as neither it nor the $\vv z$ samples we feed it depend on $c$.

As it is fundamental to our method, we now briefly recapitulate the original VaDE trick for VAEs with a single latent mixture (correcting a misapprehension in the original form of this idea) and will then cover our hierarchical extension\footnote{We note that the original VaDE paper, besides the misapprehension discussed in Section~\ref{sec:vade_tricks} and Appendix~\ref{app:misapp}, proposed a highly complex training algorithm with various pre-training heuristics which we significantly simplified while maintaining or increasing performance (details in Appendix~\ref{app:diff_to_vade}).}.

\textbf{Single-Facet VaDE Trick:}
Consider a single facet model, so the generative model is $p_\theta(\v x, \v z, c) = p_\theta(\v x | \v z)p_\theta(\v z |c)p_\theta(c)$.
Introduce a posterior $q_\phi(\v{z}, c | \v{x}) = q_\phi(\v z|\v x)q_\phi(c| \v x)$ where $q_\phi(\v z|\v x)$ is a multivariate Gaussian with diagonal covariance.
The ELBO for this model for one datapoint is
\begin{equation}
\ELBO(\v x;\theta,\phi) =\mathbb{E}_{q_\phi(\v{z}, c | \v{x})}[\log \frac{p_\theta(\v x | \v z)p_\theta(\v z |c)p_\theta(c)}{q_\phi(\v z|\v x)q_\phi(c|\v x)}]= \mathbb{E}_{q_\phi(\v{z}, c | \v{x})}[\log \frac{p_\theta(\v x | \v z)p_\theta(\v z )p_\theta(c|\v z)}{q_\phi(\v z|\v x)q_\phi(c|\v x)}],
\label{eq:elbo_initial}
\end{equation}
where we have chosen to rewrite the generative model factorisation, $p_\theta(\v z) = \sum_c p_\theta(\v z | c)p_\theta(c)$ is the marginal mixture of Gaussians, and $p_\theta(c|\v z)= p_\theta(\v z | c)p_\theta(c)/p_\theta(\v{z})$ is the Bayesian posterior for $c$.

Expanding out the ELBO, we get
\begin{equation}
\label{eq:expand_elbo_single_vade}
\ELBO(\v x;\theta,\phi) =\mathbb{E}_{q_\phi(\v{z} | \v{x})}\log p_\theta(\v x | \v z) - \KL\left[q_\phi(\v z | \v x)||p_\theta(\v z)\right] - \expect_{q_\phi(\v z | \v x)}\KL\left[q_\phi(c|\v x)||p_\theta(c|\v z)\right].
\end{equation}
We can \textit{define} $q_\phi(c|\v x)$ such that
$\expect_{q_\phi(\v z | \v x)}\KL\left[q_\phi(c|\v x)||p_\theta(c|\v z)\right]$ is \textit{minimal}, by construction, which is the case if we choose $q_\phi(c|\v x)\propto\exp\left(\expect_{q_\phi(\v z | \v x)}\log p_\theta(c|\v z)\right)$ as we will show in Theorem \ref{thm:1}.
This means that we can simply use samples from the posterior for $\v z$ to define the posterior for $c$, using Bayes' rule within the latent mixture model.

\textbf{Remark:} We note, however, that in the original description of this idea in \cite{vade}, it was claimed that $\expect_{q_\phi(\v z | \v x)}\KL\left[q_\phi(c|\v x)||p_\theta(c|\v z)\right]$ could, in general, be set to \textit{zero}, which is not the case.
Rather, this $\KL$ can be minimised, in general, to a \textit{non-zero} value.
We discuss this misapprehension in more detail and why the empirical results in~\cite{vade} are still valid in Appendix~\ref{app:misapp}.
\begin{theorem}(Single-Facet VaDE Trick)
\label{thm:1}
For any probability distribution $q_\phi(\v z | \v x)$, the distribution $q_\phi(c|\v x)$ that minimises $\expect_{q_\phi(\v z | \v x)}\KL\left[q_\phi(c|\v x)||p_\theta(c|\v z)\right]$ in (\ref{eq:expand_elbo_single_vade}) is
\begin{align}
\underset{q_\phi(c|\v x)}{\mathrm{argmin}} \expect_{q_\phi(\v z | \v x)}\KL\left[q_\phi(c|\v x)||p_\theta(c|\v z)\right] &= \v{\pi}(c| q_{\phi}(\v{z} | \v{x})) 
\end{align}
with the minimum value attained being 
\begin{align}
\underset{q_\phi(c|\v x)}{\mathrm{min}} \expect_{q_\phi(\v z | \v x)}\KL\left[q_\phi(c|\v x)||p_\theta(c|\v z)\right] &= -\log Z(q_{\phi}(\v{z} | \v{x})) \\
   \text{where} \hspace*{1.2cm} \v{\pi}(c| q_{\phi}(\v{z} | \v{x})) &:= \frac{\exp \left( \mathbb{E}_{q_{\phi}(\v{z} | \v{x})} \log p(c | \v{z})\right)}{Z(q_{\phi}(\v{z} | \v{x}))}\ \ \text{for}\ c =1, \ldots, K  \\
    Z(q_{\phi}(\v{z} | \v{x})) & := \sum_{c=1}^{K}\exp \left( \mathbb{E}_{q_{\phi}(\v{z} | \v{x})} \log p(c | \v{z})\right)\ .
\end{align}
\textit{Proof:}\hspace{3mm}See Appendix~\ref{app:proof1}.$\hspace{1mm}\square$
\end{theorem}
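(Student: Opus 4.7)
The plan is to reduce the constrained optimisation over the categorical $q_\phi(c|\v x)$ to a standard KL-minimisation problem whose solution is the normalised exponential in the statement, and whose optimal value is exactly the $-\log Z$ constant that the original VaDE paper erroneously set to zero.

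First, since $q_\phi(c|\v x)$ has no $\v z$-dependence, I would pull it outside the expectation over $q_\phi(\v z|\v x)$ and interchange the expectation with the finite sum over $c$ to write
\[
\expect_{q_\phi(\v z|\v x)}\KL\bigl[q_\phi(c|\v x)\,\|\,p_\theta(c|\v z)\bigr] = \sum_{c=1}^{K} q_\phi(c|\v x)\log q_\phi(c|\v x) \;-\; \sum_{c=1}^{K} q_\phi(c|\v x)\,a_c,
\]
where $a_c := \expect_{q_\phi(\v z|\v x)}\log p_\theta(c|\v z)$ is a constant in the optimisation variable $q_\phi(c|\v x)$.

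The decisive algebraic move is to recognise the right-hand side as a shifted KL divergence against a carefully chosen reference. Defining $p^\star(c) := e^{a_c}/Z$ with $Z := \sum_{c=1}^{K} e^{a_c}$ and substituting $a_c = \log p^\star(c) + \log Z$ turns the objective into
\[
\sum_{c=1}^{K} q_\phi(c|\v x)\log\frac{q_\phi(c|\v x)}{p^\star(c)} \;-\; \log Z \;=\; \KL\bigl[q_\phi(c|\v x)\,\|\,p^\star\bigr] \;-\; \log Z.
\]
Non-negativity of the KL divergence, with equality iff its arguments coincide (by strict convexity of $x\log x$), immediately delivers both claims at once: the unique minimiser is $q_\phi(c|\v x) = p^\star(c)$, which is precisely the definition of $\v\pi(c\mid q_\phi(\v z|\v x))$ in the statement, and the minimum value attained is $-\log Z(q_\phi(\v z|\v x))$.

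As a sanity check I would note that the simplex constraints $\sum_c q_\phi(c|\v x)=1$ and $q_\phi(c|\v x)\geq 0$ are automatically fulfilled by $p^\star$, and that an equivalent Lagrange-multiplier derivation on $\sum_c q_c(\log q_c - a_c) - \lambda(\sum_c q_c - 1)$ recovers the same answer through the stationarity condition $\log q_c + 1 - a_c - \lambda = 0$, i.e.\ $q_c \propto e^{a_c}$. I do not anticipate a substantive obstacle; the argument is essentially a one-line rewrite once $p^\star$ is identified. The only subtlety worth flagging is ensuring each $a_c$ is finite so that $p^\star$ and $Z$ are well-defined, which is immediate because $p_\theta(c|\v z)$ is obtained by Bayes' rule from a Gaussian-mixture prior with strictly positive mixing weights and non-degenerate components.
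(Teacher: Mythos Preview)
Your proposal is correct and follows essentially the same route as the paper: both interchange the expectations, collapse the objective to $\KL\bigl[q_\phi(c|\v x)\,\|\,\v\pi\bigr]-\log Z$ for the normalised-exponential reference $\v\pi$, and conclude from non-negativity of $\KL$. Your additional remarks (Lagrange-multiplier sanity check, finiteness of $a_c$) are sound but not needed for the argument.
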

\textbf{Multi-facet VaDE Trick: }
In this work, we consider the case of having $J$ facets, each with its own pair of variables $(\v z_j, c_j)$. 
Perhaps surprisingly, we do \textit{not} have to make a mean-field assumption \textit{between} the $J$ facets for $\v c$ once we have made one for $\vv z$.
In other words, once we have chosen that $q_\phi(\vv{z},\v{c}|\v{x})=q_\phi(\v{c}|\v{x})\prod_{j=1}^J q_\phi(\v{z}_j|\v{x})$, where $q_\phi(\v z_j|\v x)$ is defined to be a multivariate Gaussian with diagonal covariance for each $j$, the optimal $q_\phi(\v{c}|\v{x})$ similarly factorises\footnote{We also provide the VaDE trick for the general form of the posterior for $\vv{z}$, i.e. without assuming the factorisation $q_\phi(\vv{z}|\v{x})=\prod_{j=1}^J q_\phi(\v{z}_j|\v{x})$, in Appendix~\ref{app:proof3}.}.
We formalise this:
\begin{theorem}(Multi-Facet VaDE Trick for factorized $q_\phi(\vv z | \v x)$, $p(\vv z, \v c)$)
\label{thm:2}
For any factorized probability distribution $q_\phi(\vv z | \v x) = \prod_j q_\phi(\v z_j | \v x)$, the distribution $q_\phi(\v c|\v x)$ that minimises $\expect_{q_\phi(\vv z | \v x)}\KL\left[q_\phi(\v c|\v x)||p_\theta(\v c|\vv z)\right]$ under factorized prior $p(\vv z, \v c) = \prod_j p(\v z_j, c_j)$ of (\ref{eqn:facet_independence_assumption}) is
\begin{align}
    \underset{q_\phi(\v c|\v x)}{\mathrm{argmin}} \expect_{q_\phi(\vv z | \v x)}\KL\left[q_\phi(\v c|\v x)||p_\theta(\v c|\vv z)\right] &= \prod_j \v{\pi}_j( c_j | q_\phi(\v z_j | \v x))
\end{align}
where the minimum value is attained at
\begin{align}
\underset{q_\phi(\v c|\v x)}{\mathrm{min}} \expect_{q_\phi(\vv z | \v x)}\KL\left[q_\phi(\v c|\v x)||p_\theta(\v c|\vv z)\right] &= -\sum_j \log Z_j(q_\phi(\v z_j | \v x)) \\
    \text{where} \hspace*{1.2cm}  \v{\pi}_j( c_j | q_\phi(\v z_j | \v x)) &:= \frac{\exp(\expect_{q_\phi(\v z_j | \v x)} \log p_\theta(c_j|\v z_j))}{Z_j(q_\phi(\v z_j | \v x))}  \text{, for}\ c_j =1, \ldots, K_j \label{eq:opt_q}\\
    Z_j(q_\phi(\v z_j | \v x)) & := \sum_{c_j=1}^{K_j}\exp(\expect_{q_\phi(\v z_j | \v x)} \log p_\theta(c_j|\v z_j))\ .
\end{align}
\textit{Proof:}\hspace{3mm}See Appendix~\ref{app:proof2}.$\hspace{1mm}\square$
\end{theorem}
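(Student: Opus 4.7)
The plan is to reduce the multi-facet problem to an application of Theorem~1 per facet, once we have shown that the optimal $q_\phi(\v c | \v x)$ factorises across $j$. The starting point is to observe that the factorised prior $p(\vv z, \v c) = \prod_j p(\v z_j, c_j)$ immediately implies, via Bayes' rule, that the Bayesian posterior of $\v c$ given $\vv z$ factorises as $p_\theta(\v c | \vv z) = \prod_j p_\theta(c_j | \v z_j)$. Combined with the assumed factorisation $q_\phi(\vv z | \v x) = \prod_j q_\phi(\v z_j | \v x)$, this is what makes the expected log-likelihood term inside the KL split cleanly across facets.

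Next I would expand the objective. Writing $F(q) := \expect_{q_\phi(\vv z | \v x)} \KL[q(\v c) \,\|\, p_\theta(\v c | \vv z)]$, the factorisations above give
\begin{equation*}
F(q) = -H(q) - \sum_{j=1}^J \expect_{c_j \sim q_j}\, \expect_{q_\phi(\v z_j | \v x)} \log p_\theta(c_j | \v z_j),
\end{equation*}
where $q_j(c_j) = \sum_{\v c_{-j}} q(\v c)$ is the $j$th marginal of $q$, and we used that the integrand in the second term depends on $\v c$ only through $c_j$.

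The key step — and I expect the main obstacle, although it is short — is to argue that the optimal $q$ must itself factorise. Define $\tilde q(\v c) := \prod_j q_j(c_j)$, the product of the marginals of $q$. Since the second term of $F$ depends only on the marginals $q_j$, we have $F(q) - F(\tilde q) = H(\tilde q) - H(q) = \sum_j H(q_j) - H(q) \geq 0$, where the inequality is the subadditivity of entropy (equivalently, non-negativity of the mutual information of $c_1, \ldots, c_J$ under $q$). Therefore $F(q) \geq F(\tilde q)$ for any $q$, and the minimiser may be sought within the family of fully factorised distributions $q(\v c) = \prod_j q_j(c_j)$.

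For such a factorised $q$, the objective decouples:
\begin{equation*}
F(q) = \sum_{j=1}^J \expect_{q_\phi(\v z_j | \v x)} \KL\bigl[q_j(c_j) \,\|\, p_\theta(c_j | \v z_j)\bigr],
\end{equation*}
so each $q_j(c_j) = q_\phi(c_j | \v x)$ can be optimised independently. Applying Theorem~1 to each summand (with $(\v z, c) \mapsto (\v z_j, c_j)$ and $q_\phi(\v z | \v x) \mapsto q_\phi(\v z_j | \v x)$) yields the per-facet minimiser $\v\pi_j(c_j | q_\phi(\v z_j | \v x))$ and minimum value $-\log Z_j(q_\phi(\v z_j | \v x))$ as stated. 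Summing the minima over $j$ gives the overall minimum $-\sum_j \log Z_j(q_\phi(\v z_j | \v x))$, completing the proof.
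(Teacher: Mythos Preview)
Your proof is correct, but it takes a different route from the paper's. The paper rewrites the objective directly as
\[
\expect_{q_\phi(\vv z | \v x)}\KL\left[q_\phi(\v c|\v x)\,\|\,p_\theta(\v c|\vv z)\right]
= \KL\!\left[q_\phi(\v c|\v x)\,\Big\|\,\textstyle\prod_j \v{\pi}_j(c_j|q_\phi(\v z_j|\v x))\right] - \sum_j \log Z_j,
\]
by pulling the expectation over $\vv z$ inside the log of the denominator and using the factorisations of $p_\theta(\v c|\vv z)$ and $q_\phi(\vv z|\v x)$; the minimiser and minimum value can then simply be read off from the non-negativity of KL. In particular, the factorisation of the optimal $q_\phi(\v c|\v x)$ is a \emph{consequence} of this identity rather than a separate lemma. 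By contrast, you first prove factorisation via entropy subadditivity (comparing $q$ to the product of its marginals), then decouple and invoke Theorem~1 facet-wise. Your argument is slightly longer but has the virtue of being modular---it explicitly reduces the multi-facet case to $J$ independent applications of the single-facet result---whereas the paper's approach is shorter and exposes the single-KL structure of the objective in one line.
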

Note that we use Eq.~\eqref{eq:opt_q} as the probability distribution of assigning input $\v x$ to clusters of facet $j$.

Armed with these theoretical results, we can now write the ELBO for our model, with the optimal posterior for $\v c$, in a form that trivially admits stochastic estimation and does not necessitate extra recognition networks for $\v c$,
\begin{align}
    \ELBO^{\mathrm{MFCVAE}}(\mathcal{D};\theta,\phi) =& \expect_{\v x \sim \mathcal{D}}\Big[ \expect_{q_\phi({\vv{z}|\v{x})}} \log p_\theta(\v x | \vv z ) \nonumber \\
    &- \sum_{j=1}^J\left[ \expect_{q_\phi(c_j|\v x)}\KL(q_\phi(\v{z}_j|\v x) ||p_\theta(\v{z}_j|c_j)) + \KL(q_\phi(c_j|\v x)||p(c_j))\right]\Big]
    \label{eqn:mfc_elbo}
\end{align}
where the optimal $q_\phi(c_j|\v x)$ is given by Eq.~\eqref{eq:opt_q} for each $j$.

To obtain the posterior distributions for $\v c$,
we take MC samples from $q_\phi(\vv{z}|\v{x})$ and use these to construct the posterior as in Eq.~\eqref{eq:opt_q}. 
We found one MC sample ($L=1$; for each facet and for each $\v x$) to be sufficient. We derive the complete MC estimator which we use as the loss function of our model and ablations on two alternative forms in Appendix~\ref{app:mc_estimator}.

\subsection{Neural implementation and training algorithm}  
\label{sec:Neural implementation and training algorithm}
It is worth pausing here to consider what neural architecture best suits our desire for learning multiple disentangled facets, and then further how we can best train our model to robustly elicit from it well-separated facets.
In the introduction, we discussed the different plausible ways to cluster high-dimensional data, such as in MNIST digits by stroke thickness and class identity. 
These different aspects intuitively correspond to different levels of abstraction about the image.
It is thus natural that these levels would be best captured by different depths of the neural networks in each amortised posterior. 
These ideas have motivated the use of \textit{ladder networks} in deep generative models that aim to learn different facets of the input data into different layers of latent variables. 
Here, we take inspiration from \textit{Variational Ladder Autoencoders (VLAEs)}~\cite{vlae}: %
A VLAE architecture has a deterministic ``backbone'' in both the recognition and generative model. 
The different layers of latent variables branch out from these at different depths along. 
This inductive bias naturally leads to stratification and does so without having to bear the computational cost of training a completely separate encoder (say) for each layer.  %
Here, we use this ladder architecture for MFCVAE, as illustrated in Fig.~\ref{fig:ladder_arch}, and refer to Appendix~\ref{app:Neural architectures and Variational Ladder Autoencoder} for further implementation details.

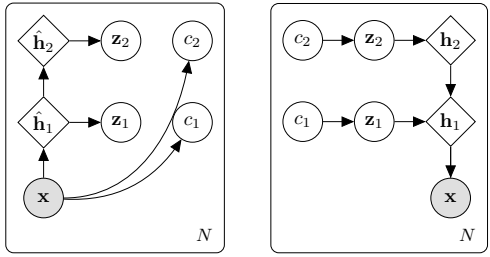
\begin{wrapfigure}{r}{0.5\linewidth}
  \centering
    \scalebox{0.75}{
        \beginpgfgraphicnamed{recognition_model_ladder}
        \begin{tikzpicture}
            \node[obs] (x) {$\v{x}$} ; %
            \node[det, above=0.52cm of x,minimum size=25pt] (h1) {$\hat{\v{h}}_1$} ; %
            \node[det, above=0.52cm of h1,minimum size=25pt] (h2) {$\hat{\v{h}}_2$} ; %
            \node[latent, right=0.55cm of h1] (z1) {$\v{z}_1$} ; %
            \node[latent, right=0.55cm of h2] (z2) {$\v{z}_2$} ; %
            \node[latent, right=0.55cm of z1] (c1) {$c_1$} ; %
            \node[latent, right=0.55cm of z2] (c2) {$c_2$} ; %
        	\edge {x} {h1} ; %
        	\edge {h1} {h2} ; %
        	\edge {h1} {z1} ; %
        	\edge {h2} {z2} ; %
        	\draw [->] (x) to[bend right] (c1);
        	\draw [->] (x) to[out=0,in=-100] (c2);
            \plate[inner sep=0.2cm] {plate1} {(x) (c1) (c2) (z1) (z2) (h1) (h2) } {\scalebox{1}{{$N$}}};
        \end{tikzpicture}
        \endpgfgraphicnamed
    }
    \hspace{0.3cm}
    \scalebox{0.75}{
        \beginpgfgraphicnamed{generative_model_ladder}
        \begin{tikzpicture}
            \node[obs] (x) {$\v{x}$} ; %
            \node[det, above=0.55cm of x,minimum size=25pt] (h1) {$\v{h}_1$} ; %
            \node[det, above=0.55cm of h1,minimum size=25pt] (h2) {$\v{h}_2$} ; %
            \node[latent, left=0.55cm of h1] (z1) {$\v{z}_1$} ; %
            \node[latent, left=0.55cm of h2] (z2) {$\v{z}_2$} ; %
            \node[latent, left=0.55cm of z1] (c1) {$c_1$} ; %
            \node[latent, left=0.55cm of z2] (c2) {$c_2$} ; %
        	\edge {c2} {z2}; %
        	\edge {c1} {z1} ;
        	\edge {z2} {h2}; %
        	\edge {z1} {h1} ;
        	\edge {h2} {h1} ; %
        	\edge {h1} {x};
            \draw (h1)  --  (x) ;
            \plate[inner sep=0.2cm] {plate1} {(x) (c1) (c2) (z1) (z2) (h1) (h2)}
            {\scalebox{1}{{$N$}}};
        \end{tikzpicture}
        \endpgfgraphicnamed
    }
  \caption{
  Ladder-MFCVAE architecture. [Left] Variational posterior. [Right] Generative model.  %
  }
  \label{fig:ladder_arch}
\end{wrapfigure}

Further, we found \textit{progressive training}~\cite{ProVLAE}, previously shown to help VLAEs learn layer-by-layer disentangled representations, to be of great use in making each facet consistently represent the same aspects of data.
The general idea of progressive training is to start with training a single facet (typically the one corresponding to the deepest recognition and generative neural networks) for a certain number of epochs, and progressively and smoothly loop in the other facets one after the other. 
We discuss the details of our progressive training schedule in Appendix~\ref{app:Progressive training algorithm}. 
We find that both the VLAE architecture and progressive training are jointly important to stabilise training and get robust qualitative and quantitative results as we show in Appendix~\ref{app:stability}.

\section{Experiments}
\label{sec:Experiments}

In the following, we demonstrate the usefulness of our model and its prior structure in four experimental analyses: (a) discovering a multi-facet structure (b) compositionality of latent facets (c) generative, unsupervised classification, and (d) diversity of generated samples from our model. 
We train our model on three image datasets: MNIST~\cite{lecun2010mnist}, 3DShapes (two configurations)~\cite{3dshapes18} and SVHN~\cite{svhn}. 
We refer to Appendices~\ref{app:Experimental details} and~\ref{app:Additional experimental results} for experimental details and further results. 
We also provide our code implementing MFCVAE, using \textit{PyTorch Distributions}~\cite{pytorch}, and reproducing our results at \textcolor{blue}{\href{https://github.com/FabianFalck/mfcvae}{\url{https://github.com/FabianFalck/mfcvae}}}.

\subsection{Discovering a multi-facet structure}
\label{sec:Discovering a multi-facet structure}

We start by demonstrating that our model can discover a multi-facet structure in data. 
Fig.~\ref{fig:orth_cluster_examples} visualises input examples representative of clusters in a two-facet ($J=2$) model.
For each facet $j$, input examples $\B{x}$ with latent variable $\B{z}_j$ are assigned to latent cluster $c_j = \mathrm{argmax}_{c_j}  \v{\pi}_j( c_j | q_\phi(\v z_j | \v x))$ according to Eq.~\eqref{eq:opt_q}. 
Surprisingly, we find that we can represent the two most striking data characteristics---digit class and style (mostly in the form of stroke width, e.g. `bold', `thin') in MNIST, object shape and floor colour in 3DShapes (configuration 1), and digit class and background colour in SVHN---in two separate facets of the data. %
In each facet, clusters are homogeneous w.r.t. a value from the represented characteristic.
When comparing our results on MNIST with LTVAE~\cite{ltvae}, the model closest to ours in its attempt to learn a clustered latent space of multiple facets, LTVAE struggles to separate data characteristics into separate facets (c.f. \cite{ltvae} Fig.~5; in particular, both facets learn digit class, i.e. this characteristic is not properly disentangled between facets), whereas MFCVAE better isolates the two.

\begin{figure}[t]
    \centering
    \includegraphics[width=.9\linewidth]{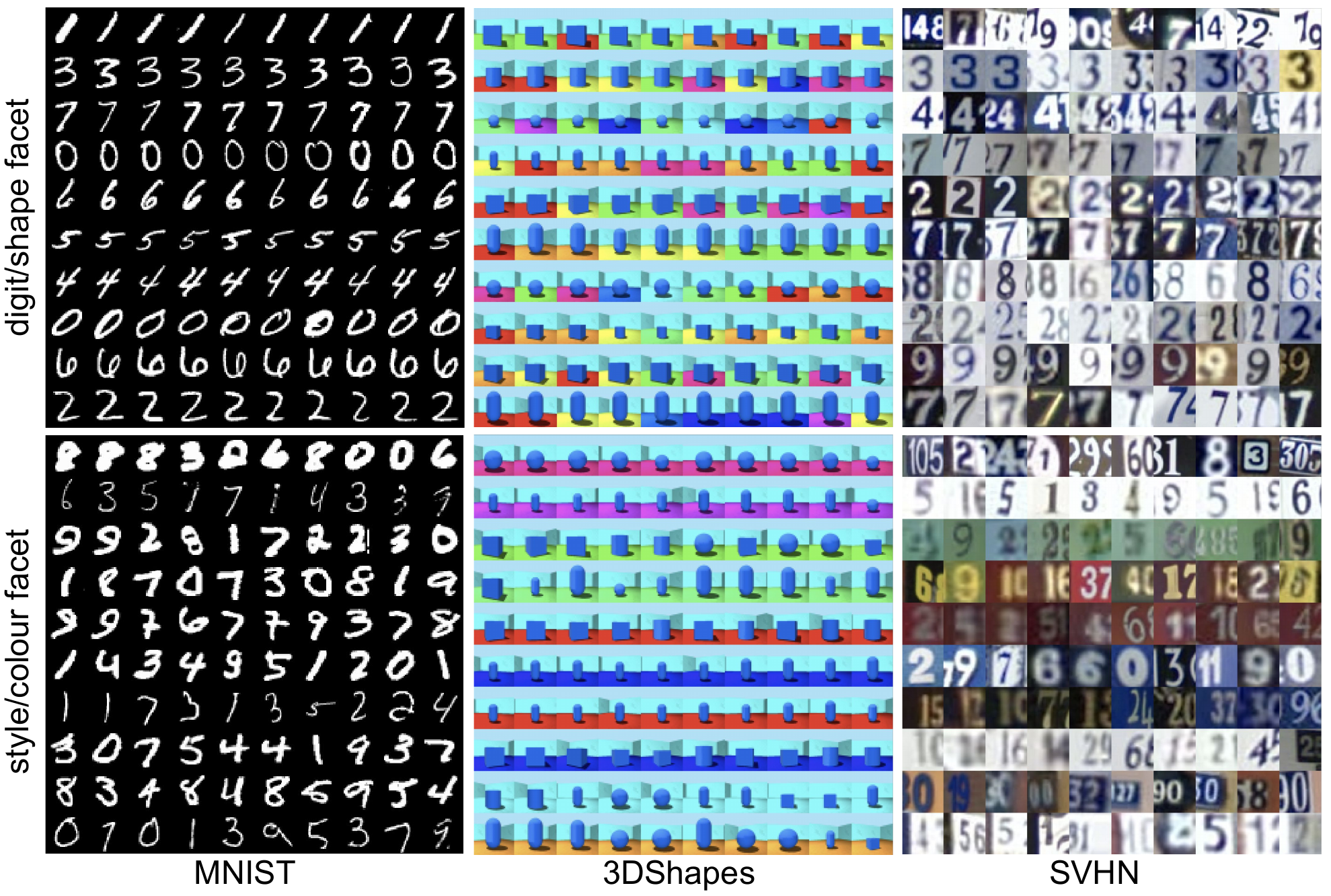}
    \caption{
    Input examples for clusters of MFCVAE with two-facets ($J=2$) trained on MNIST, 3DShapes and SVHN.
    Clusters (rows) in each facet $j$ are sorted in decreasing order by the average assignment probability of test inputs over each cluster.
    Inputs (columns) are sorted in decreasing order by their assignment probability $\mathrm{max}_{c_j}  \v{\pi}_j( c_j | q_\phi(\v z_j | \v x))$.
    We visualise the first 10 clusters and inputs from the test set (see Appendix~\ref{app:Discovering a multi-facet structure} for all clusters).
    }
    \label{fig:orth_cluster_examples}
\end{figure}

To quantitatively assess the degree of disentanglement in the learned multi-facet structure of our model, we perform a set of supervised experiments.
For each dataset, we formulate three classification tasks, for which we use latent embeddings $\v{z}_1$, $\v{z}_2$ and $\vv z$, respectively, sampled from their corresponding amortised posterior, as inputs, and the label present in the dataset (e.g. digit class in MNIST) as the target.
For each task and dataset, we train (on the training inputs) a multi-layer perceptron of one hidden layer with 100 hidden units and a ReLU activation, and an output layer followed by a softmax activation, which are the default hyperparameters in the Python package $\texttt{sklearn}$.
Table~\ref{tab:supervised_experiment} shows test accuracy of these experiments.
We find that the supervised classifiers predict the supervised label with high accuracy when presented with latent embeddings which we found to cluster the abstract characteristic corresponding to this label, or with the concatenation of both latent embeddings. 
However, when presented with latent embeddings corresponding to the ``non-label'' facet, the classifier should---if facets are strongly disentangled---not be presented with useful information to learn the supervised mapping, and this is indeed what we find, observing significantly worse performance.
This demonstrates the multi-facet structure of the latent space, which learns separate abstract characteristics of the data.

\begin{table}[t]
\caption{Supervised classification experiment to assess the disentanglement of MFCVAE's multi-facet structure on all three datasets. 
Values report test accuracy in \%. 
Error bars are the sample standard deviation across 3 runs.}
\medbreak
\label{tab:supervised_experiment}
\centering
\begin{tabular}{ ccccccc  } \toprule
  &   MNIST    & \multicolumn{2}{c}{3DShapes config. 1}    & \multicolumn{2}{c}{3DShapes config. 2}  & SVHN \\ 
  & digit class & object shape & floor colour & object shape & wall colour & digit class  \\ \midrule
$\v{z}_1$ &  17.34 (0.24) & 95.00 (0.45) & 20.00 (0.68) & 98.26 (0.16) & 73.40 (1.48) & 69.46 (0.36)  \\
$\v{z}_2$ &  94.95 (0.04) & 32.43 (1.38) & 100.00 (0.00) & 24.41 (1.34) & 100.00 (0.00) & 22.30 (0.16)  \\
$\vv{z}$ & 95.27 (0.07) & 95.18 (0.42) & 100.00 (0.00) & 98.19 (0.30) & 99.97 (0.06) & 70.39 (0.29)  \\ \bottomrule
\end{tabular}
\end{table}

\subsection{Compositionality of latent facets}
\label{sec:Compositionality of latent facets}

A unique advantage of the prior structure of MFCVAE compared to other unsupervised generative models, say a VAE with an isotropic Gaussian prior, is that it allows different abstract characteristics to be composed in the separated latent space.  
Here, we show how this enables interventions on a per-facet basis, illustrated with a two-facet model where style/colour is learned in one facet and digit/shape is learned in the other facet.
Let us have two inputs $\v{x}^{(1)}$ and $\v{x}^{(2)}$ assigned to two different style clusters according to Eq.~\eqref{eq:opt_q} (and two different digit clusters).
For both inputs, we obtain their latent representation $\tilde{\v{z}}_j$ as the modes of $q_\phi(\v{z}_j | \v{x})$, respectively.
Now, we swap the style/colour facet's representation, i.e. $\tilde{\v{z}}_1$ of both inputs for MNIST, and $\tilde{\v{z}}_2$ of both inputs for 3DShapes and SVHN, and pass these together with their unchanged digit/shape representation ($\tilde{\v{z}}_2$ for MNIST and $\tilde{\v{z}}_1$ for 3DShapes and SVHN) through the decoder $f(\vv{z}; {\theta})$ to get reconstructions $\hat{\v{x}}^{(1)} = f(\{ \tilde{\v{z}}_1^{(1)}, \tilde{\v{z}}_2^{(2)} \}; {\theta})$ and $\hat{\v{x}}^{(2)} = f(\{ \tilde{\v{z}}_1^{(2)}, \tilde{\v{z}}_2^{(1)} \}; {\theta})$ which we visualise in Fig.~\ref{fig:compositionality} (see Appendix~\ref{app:Compositionality of facets} for a more rigorous explanation of this swapping procedure).

Surprisingly, by construction of this intervention in our multi-facet model, we observe reconstructions that ``swap'' their style/background colour, yet in most cases preserve their digit/shape.
This intervention is successful across a wide set of clusters on MNIST and 3DShapes.
It works less so on SVHN where we hypothesise that this is due to the much more diverse dataset and (consequently) the model reaching a lower fit (see Section~\ref{sec:Generative classification}).
We show further examples including failure cases in Appendix~\ref{app:Compositionality of facets} which show that our model learns a multi-facet structure allowing complex inventions.

\begin{figure}[t]
    \centering
    \includegraphics[width=0.6\linewidth]{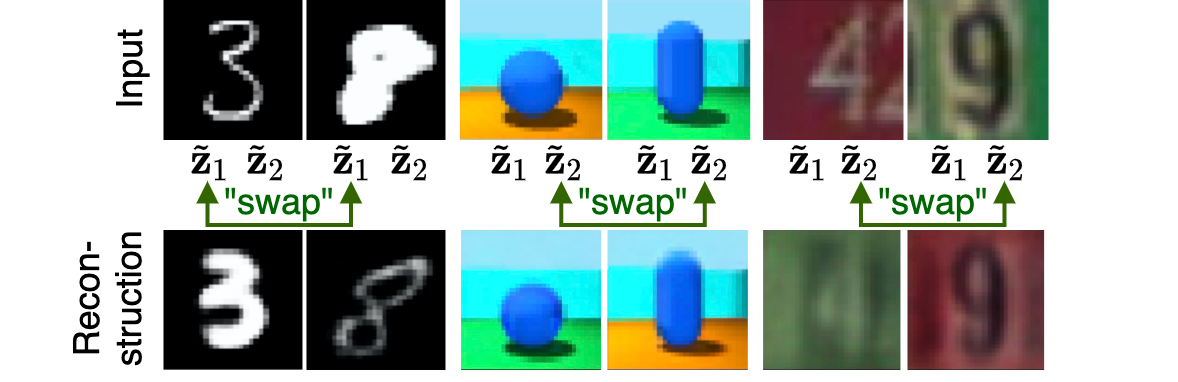}
    \caption{Reconstructions of two input examples when swapping their latent style/colour.}
    \label{fig:compositionality}
\end{figure}

\subsection{Generative, unsupervised classification}
\label{sec:Generative classification}

Recall our fully unsupervised, exploratory setting of clustering where the goal is to identify and characterise multiple meaningful latent structures \textit{de novo}.
In practice, we have no ground-truth data partition---if labels were available, the task would be better formulated as a supervised classification in the first place. 
That said, it is often reasonable to assume that the class label in a supervised dataset represents a semantically meaningful latent structure that contributes to observed variation in the data. 
Indeed, this assumption underlies the common approach for benchmarking clustering models on labelled data: the class label is hidden during training; afterwards it is revealed as a pseudo ground-truth partitioning of the data for assessing clustering ``accuracy''. 
MFCVAE aims to capture multiple latent structures and can be deployed as a multi-facet generative classifier, as distinct from standard single-facet discriminative classifiers~\cite[p.30]{murphy2012machine}. 
But we emphasise that high classification accuracy is attained as a by-product, and is \textit{not} our core goal---we do not explicitly target label accuracy, nor does high label accuracy necessarily correspond to the ``best'' multi-facet clustering.

Following earlier work, in Table \ref{tab:classification_table}, we report classification performance on MNIST and SVHN in terms of \textit{unsupervised clustering accuracy} on the test set, which intuitively measures homogeneity w.r.t. a set of ground-truth clusters in each facet (see Appendix~\ref{app:Generative, unsupervised classification} for a formal definition). 
We compare our method against commonly used single-facet (SF) and multi-facet (MF), generative (G) and non-generative (NG) deep clustering approaches (we use results as reported) of both deterministic and probabilistic nature.
We report the mean and standard deviation (if available) of accuracy over $T$ runs with different random seeds, where $T=10$ for MFCVAE.
For VaDE \cite{vade}, we report results from the original paper, and our two implementations, one with a multi-layer perceptron encoder and decoder architecture, one using convolutional layers.
Models marked with $^\eta$ explicitly state that they instead report the best result obtained from $R$ restarts with different random seeds (DEC: $R=20$, VaDE: $R=10$).
Both of these types of reporting in previous work---not providing error bars over several runs and picking the best run (while not providing error bars)---ignore stability of the model w.r.t. initialisation.  %
We further discuss this issue and the importance of stability in deep clustering approaches in Appendix~\ref{app:stability}.

\begin{table}
\caption{Unsupervised clustering accuracy ($\%$) of single-facet (SF) and multi-facet (MF), generative (G) and non-generative (NG) models on the test set.
Error bars (if available) are the sample standard deviation across multiple runs. 
Results marked with $^{\eta}$ do not provide error bars.}
\medbreak
\label{tab:classification_table}
\centering
\begin{tabular}{ccc} \toprule
Method               & MNIST            & SVHN             \\ \midrule
DEC (\cite{dec}; SF; NG)         & 84.3 $^{\eta}$ & 11.9 (0.4)   \\
VaDE (\cite{vade}; MLP; SF; G)    & 94.46 $^{\eta}$; 89.09 (3.32) & 27.03 (1.53) \\
VaDE (\cite{vade}; conv.; SF; G)  & 92.65 (1.14) & 30.80 (1.99) \\
IMSAT (\cite{imsat}; SF; NG)       & 98.4  (0.4)  & 57.3  (3.9)  \\
ACOL-GAR (\cite{acol-gar}; SF; NG)    & 98.32 (0.08) & 76.80 (1.30) \\
VLAC (\cite{willetts2019disentangling}; MF; G)         & -                & 37.8 (2.2)   \\
LTVAE (\cite{ltvae}; MF; G)        & 86.3             & -                \\
MFCVAE (ours; MF; G) & 92.02 (3.18) & 56.25 (0.93) \\ \bottomrule
\end{tabular}
\end{table}

MFCVAE is able to recover the assumed ground-truth clustering stably. 
It achieves competitive performance compared to other probabilistic deep clustering models, but is clearly outperformed by ACOL-GAR on SVHN, a single-facet, non-generative and deterministic model which does not possess three of the four properties demonstrated in Sections \ref{sec:Discovering a multi-facet structure}, \ref{sec:Diversity of generated samples}) and \ref{sec:Compositionality of latent facets}). 
Besides the results presented in the table, we also note that MFCVAE performs strongly on 3DShapes, obtaining $99.46\% \pm 1.10\%$ for floor colour and $88.47\% \pm 1.82\%$ for object shape on configuration 1, and $100.00\% \pm 0.00\%$ for wall colour and $90.05\% \pm 2.65\%$ for object shape on configuration 2.
Lastly, it is worth noting that we report classification performance for the same hyperparameter configurations and training runs of our model that are used in all experimental sections and in particular for Fig.~\ref{fig:orth_cluster_examples}, \ref{fig:compositionality} and \ref{fig:sample_generation}, i.e. our trained model has a pronounced multi-facet characteristic.
In contrast, while it is somewhat unclear, LTVAE seems to report its clustering performance when trained with only a single facet, not when performing multi-facet clustering~\cite{ltvae}.

\begin{figure}[t]
    \centering
    \includegraphics[width=.9\linewidth]{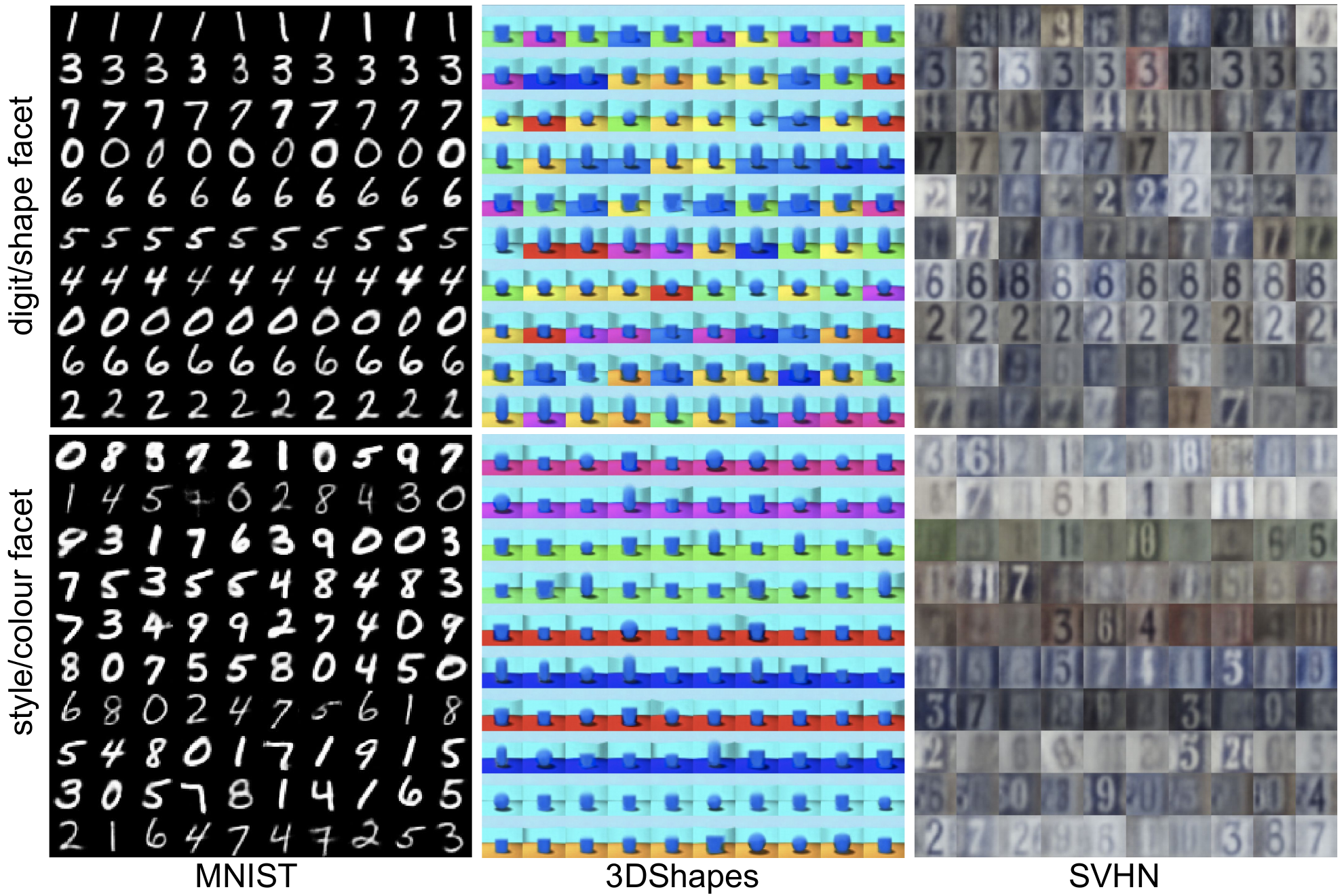}
    \caption{
    Synthetic samples generated from MFCVAE with two facets ($J=2$) trained on MNIST, 3DShapes, and SVHN.
    For each cluster $c_j$ in facet $j$, $\v{z}_j$ is sampled from $p(\v{z}_j | c_j)$ and $\v{z}_{j'}$ is sampled from $p(\v{z}_{j'})$ for the other facet $j' \neq j$.
    Each row corresponds to 10 random samples from a cluster.
    Clusters (rows) are sorted and selected (and are from the same trained model) as in Fig.~\ref{fig:orth_cluster_examples} (see Appendix~\ref{app:Diversity of generated samples} for visualisation of all clusters and comparison with LTVAE).}
    \label{fig:sample_generation}
\end{figure}

\subsection{Diversity of generated samples}
\label{sec:Diversity of generated samples}

We lastly show that MFCVAE enables diverse generation of synthetic examples for each given cluster in the different facets, as a downstream task in addition to clustering.
To obtain synthetic examples for a cluster $c_j$ in facet $j$, we sample $\v{z}_j$ from $p(\v{z}_j | c_j)$, and sample $\v{z}_{j'}$ from $p(\v{z}_{j'})$ for all other facets $j' \neq j$.
We then take the modes of $p_{\theta}(\v x | \vv z)$ where $\vv z = (\v z_1, \dots, \v z_j, \dots, \v z_J)$ as the generated images.
Fig.~\ref{fig:sample_generation} shows synthetic examples generated from the models ($J=2$) trained on MNIST, 3DShapes and SVHN.

For all three datasets, we observe synthetic samples that are homogeneous w.r.t. the characteristic value (e.g. `red background') of a cluster in the chosen facet (as we sample this continuous latent variable from the conditional distribution), but heterogeneous and diverse w.r.t. all other facets (as we sample all other continuous latent variables from their marginal distribution).
For example, on MNIST, when fixing a cluster in the digit facet, we observe generated samples that have the same digit class (e.g. all `1'), but are diverse in style (e.g. different `stroke width').
Conversely, when fixing a cluster in the style facet, we get samples homogeneous in style, but heterogeneous in digit class.
Likewise, on 3DShapes, fixing a cluster in the wall colour facet produces generations diverse in shape, but having the same wall color, and conversely when fixing the shape facet.
Besides, in all clusters, generated samples are diverse w.r.t. other factors of variation on 3DShapes, such as orientation and scale.
On SVHN, while less strong than in Fig.~\ref{fig:orth_cluster_examples}, these patterns extend here to the two facets style (background colour is particularly distinct) and digit class. 
These results are consistent with and underline the observed disentanglement of facets that we found in our previous experimental analyses.
We also compare sample generation performance between MFCVAE and LTVAE and assess the diversity of generations quantitatively in Appendix~\ref{app:Diversity of generated samples}.

\section{Related work}
\label{sec:related_work}

Within the deep generative framework, various deep clustering models have been proposed.
VaDE~\cite{vade} is the most important prior work, a probabilistic model which has been highly influential in deep clustering.
Related approaches, GM-VAEs~\cite{gmvae} and GM-DGMs~\cite{Nalisnick2016_gm, Willetts2018_sus, Willetts2020}, have similar overall performance and explicitly represent the discrete clustering latent variable during training.
Non-parametric approaches include DLDPMMs~\cite{Nalisnick2016_gm}, and HDP-VAEs~\cite{hdpvae}. 
Further, many non-generative methods for clustering have been proposed that use neural components~\cite{dec, dcn, clustergan, imsat, acol-gar, gmvae, spectralnet}.
All these approaches, however, propose single-facet models. %

Hierarchical VAEs can both be a way to learn more powerful models~\cite{Kingma2016,lvae,biva,Vahdat2020,Child2020}, but can also enable to separate out representations where each layer of latent variables represents a different aspect of the data.
Variational Ladder Autoencoders (VLAEs)~\cite{vlae} aim to do the latter: to learn independent sets of latent variables, each representing some part of the data; but each group of latent variables within this set has a $\mathcal{N}(\v 0, \v 1)$ prior, so it does not perform clustering.
Recently, progressive training for VLAEs has been proposed~\cite{ProVLAE} which sharpens the separation between layers.
Here, we also mention disentanglement methods \cite{higgins2016beta,mathieu2019disentangling,kim2018disentangling,chen2018isolating} which likewise attempt to find separated latent variables. 
However, rather than discovering facets through the prior and a hierarchical structure, these techniques attempt to find statistically-independent representations via regularisation, leading the loss to deviate from the ELBO. 
Unfortunately, these methods require lucky selection of hyperparameters to work \cite{rolinek2019variational, locatello2019challenging}, and do not provide a clustered latent space.

Learning multiple clusterings simultaneously has been studied in the case of very low dimensional datasets~\cite{Cui2007, Davidson2008, Qi2009, muller2012discovering} under the names \textit{alternative clusterings} and \textit{non-redundant clustering}.
However, when it comes to clustering high-dimensional data like images, approaches are rare.
The recently proposed LTVAE~\cite{ltvae} aims to perform this task, proposing a variational autoencoder with a latent tree model prior for a set of continuous latent variables $\vv z$, of which each $\v z_j$ has a GMM prior.
The neural components are trained via stochastic gradient ascent under the ELBO; this is interleaved with a heuristic (hill-climbing) search algorithm to grow or prune the tree structure and message-passing to learn its nodes' GMM parameters of the current structure of the tree prior in a manner reminiscent of SVAEs~\cite{svae}, rendering the entire training algorithm \textit{not} end-to-end differentiable (in contrast to MFCVAE).
LTVAE learns multiple clusterings over the data, however, lacks a proper disentanglement of facets, as discussed in Section~\ref{sec:Discovering a multi-facet structure}.

\section{Conclusion}
\label{sec:Conclusion}

We introduced Multi-Facet Clustering Variational Autoencoders (MFCVAE), a novel class of probabilistic deep learning models for unsupervised, multi-partition clustering in high-dimensional data which is end-to-end differentiable.
We provided novel theoretical results for optimising its ELBO, correcting and extending an influential related paper for the single-facet case.
We demonstrated MFCVAE's empirical performance in terms of multi-facet clustering of various levels of abstraction, and the usefulness of its prior structure for composing, classifying and generating samples, achieving state-of-the-art performance among deep probabilistic multi-facet models.

An important limitation of our work shared with many other deep clustering algorithms is the lack of a procedure to find good hyperparameters through a metric known at training time.  %
Future work should explore: MFCVAE with $J>2$; automatic tuning of hyperparameters $J$ and $K_j$; application to large-scale datasets of other modalities; and regularising the model facet-wise to further enforce disentangled representations in the latent space~\cite{peebles2020hessian}.
While we successfully stabilised model training, further work will be key to harness the full potential of deep clustering models.

\begin{ack}

FF and HZ acknowledge the receipt of studentship awards from the Health Data Research UK-The Alan Turing Institute Wellcome PhD Programme in Health Data Science (Grant Ref: 218529/Z/19/Z). 
HZ acknowledges the receipt of Wellcome Cambridge Trust Scholarship. 
MW is grateful for the support of UCL Computer Science and The Alan Turing Institute.
GN acknowledges support from the Medical Research Council Programme Leaders award MC\_UP\_A390\_1107.
CY is funded by a UKRI Turing AI Fellowship (Ref: EP/V023233/1).
CH acknowledges support from the Medical Research Council Programme Leaders award MC\_UP\_A390\_1107, The Alan Turing Institute, Health Data Research, U.K., and the U.K. Engineering and Physical Sciences Research Council through the Bayes4Health programme grant.

The authors report no competing interests.

We thank Tomas Lazauskas, Jim Madge and Oscar Giles from the Alan Turing Institute’s Research Engineering team for their help and support.
We thank Adam Huffman, Jonathan Diprose, Geoffrey Ferrari and Colin Freeman from the Biomedical Research Computing team at the University of Oxford for their help and support.
We thank Angela Wood and Ben Cairns for their support and useful discussions.

\end{ack}

\bibliographystyle{unsrt}
\bibliography{2_references.bib}

\clearpage
\appendix

\section{\texorpdfstring{$J$}{J} Independent Mixture of Gaussians prior on \texorpdfstring{$z$}{z}}
\label{app:MixtureMoG}

Let $p(\bm{z}_j)$ be the marginal distribution of $\bm{z}_j$ as follows%
\begin{align}
    p(\bm{z}_j) &= \sum_{c_j = 1}^{K_j} p(c_j, \bm{z}_j) \\
                &= \sum_{c_j = 1}^{K_j} p(c_j) p(\bm{z}_j | c_j) \\
                &= \sum_{c_j = 1}^{K_j} p(c_j) \mathcal{N}(\bm{z}_j | \bm{\mu}_{c_j}, \bm{\Sigma}_{c_j}) \label{eq:above}
\end{align} 
where $p(c_j)$ is a categorical distribution. Thus, $p(\bm{z}_j)$ is a Mixture-of-Gaussians (MoG).

Let us now derive $p(\vv{z})$, the marginal distribution of $\vv{z}$, as follows
\begin{align}
    p(\vv{z}) &= p(\bm{z}_1, \bm{z}_2, \dots, \bm{z}_J) \\
              &= \prod_{j=1}^{J} p(\bm{z}_j) \label{eq:ass2Used} \\ %
              &= \prod_{j=1}^{J} \sum_{c_j = 1}^{K_j} p(c_j) \mathcal{N}(\bm{z}_j | \bm{\mu}_{c_j}, \bm{\Sigma}_{c_j}) \label{eq:aboveUsed}    %
\end{align}
where Eq.~\eqref{eq:ass2Used} follows from the indepdendence assumption of facets, and Eq.~\eqref{eq:aboveUsed} uses Eq.~\eqref{eq:above}. 
The resulting marginal of $\vv z$ is our prior of $J$ independent MoGs.  %

\textbf{Linear (rather than exponential) complexity of number of clusters. }
Besides its representational advantages, the multi-facet prior structure features a computational advantage: Given multiple known partitions of a dataset, the total number of clusters over all facets required to represent these partitions scales \textit{linearly} w.r.t. the number of such partitions.
In comparison, the number of clusters required in a single-facet model suffers from combinatorial explosion and scales \textit{exponentially}. 

To understand this, let us consider a hypothetical multi-partition image dataset of (rather standardised) hotel rooms which features $J$ facets $C_1, C_2, \dots, C_J$ with $K_j$ possible discrete values for each characteristic, for example, the colour of the bed sheets, walls, interiors, whether a phone is present or not, the view of the room (beach, forest, city, \dots). 
We now attempt to find reasonable clusters in this dataset. In principle, a single-partition model could learn all cross-combinations $C_1 \times C_2 \times \dots \times C_J$. 
In general, this requires to learn ``at least''  $\mathcal{O}(\prod_{j=1}^{J} K_j)$ latent clusters~\footnote{This assumes that every cluster in the latent space corresponds to exactly one cross-combination of the data facets. 
Empirically, we find that for statistical reasons (``having more shots''), it can be desirable to have more latent clusters per facet than values possible for each facet.}.
Compare this with a multi-partition model such as MFCVAE. Here, we need to learn ``at least'' $\mathcal{O}(\sum_{j=1}^{J} K_j)$. If $K_j = K$ is equally large for all facets, the number of latent clusters to learn is $\mathcal{O}(K^J)$ for a single-partition model and $\mathcal{O}(K \cdot J)$ for a multi-partition model.

\section{VaDE Trick Proofs}
\label{app:proofs}
\subsection{Single-Facet VaDE Trick}
\label{app:proof1}

\begin{proof}(Theorem~\ref{thm:1}: Single-Facet VaDE Trick)
\begin{align}
    \expect_{q_\phi(\v z | \v x)}\KL\left[q_\phi(c|\v x)||p_\theta(c|\v z)\right] &= \expect_{q_\phi(\v z | \v x)} \expect_{q_\phi(c|\v x)} \log \frac{q_\phi(c|\v x)}{p_\theta(c|\v z)}\\
    &= \expect_{q_\phi(c|\v x)} \log \frac{q_\phi(c|\v x)}{\exp( \expect_{q_\phi(\v z | \v x)} \log p_\theta(c|\v z))}\\
    &= \KL\left[q_\phi(c|\v x)|| \v{\pi}( c | q_\phi(\v z | \v x))\right] - \log Z(q_\phi(\v z | \v x))
\end{align}
which is minimised w.r.t. $q_\phi(c|\v x)$ by setting the KL term to zero by $q_\phi(c|\v x) = \v{\pi}(c | q_\phi(\v z | \v x))$, where
\begin{align}
   \v{\pi}(c| q_{\phi}(\v{z} | \v{x})) &:= \frac{\exp \left( \mathbb{E}_{q_{\phi}(\v{z} | \v{x})} \log p(c | \v{z})\right)}{Z(q_{\phi}(\v{z} | \v{x}))}\ \ \text{for}\ c =1, \ldots, K \label{eq:pi} \\
    Z(q_{\phi}(\v{z} | \v{x})) & := \sum_{c=1}^{K}\exp \left( \mathbb{E}_{q_{\phi}(\v{z} | \v{x})} \log p(c | \v{z})\right)\ \label{eq:normConst}.
\end{align}
as required. Here, $Z(q_{\phi}(\v{z} | \v{x}))$ is the appropriate normalization constant for Eq.~\eqref{eq:pi} to define a probability mass function.
\end{proof}

\subsubsection{Misapprehension in original statement}
\label{app:misapp}
In the original paper, \cite{vade}, they reach Eq.~\eqref{eq:expand_elbo_single_vade}:
\[\ELBO(\v x;\theta,\phi) =\mathbb{E}_{q_\phi(\v{z} | \v{x})}\log p_\theta(\v x | \v z) - \KL\left[q_\phi(\v z | \v x)||p_\theta(\v z)\right] - \underbrace{\expect_{q_\phi(\v z | \v x)}\KL\left[q_\phi(c|\v x)||p_\theta(c|\v z)\right]}_{\circled{A}}.\]

The claim made (appendix A of~\cite{vade}) is that $q(c|\v x) = \expect_{q_\phi(\v{z}'|\v x)}p(c|\v{z}')$ makes the final term, $\circled{A}=\expect_{q_\phi(\v z | \v x)}\KL\left[q_\phi(c|\v x)||p_\theta(c|\v z)\right]$, equal to zero.
Substituting this form for $q_\phi(c|\v x)$ in Eq.~\eqref{eq:substVaDE}, we get:
\begin{align}
\circled{A}&=\expect_{q_\phi(\v z | \v x)}\KL\left[q_\phi(c|\v x)||p_\theta(c|\v z)\right]\nonumber \\
&=\int \intd \v z\, q_\phi(\v z | \v x) \sum_{c=1}^{K} q_\phi(c|\v x)\log\frac{q_\phi(c|\v x)}{ p_\theta(c | \v z)}\nonumber \\
    &=\int \intd \v z\, q_\phi(\v z | \v x) \sum_{c=1}^{K} \expect_{q_\phi(\v{z}'|\v x)}p(c|\v{z}')\log\frac{ \expect_{q_\phi(\v{z}''|\v x)}p(c|\v{z}'')}{p_\theta(c | \v z)} \label{eq:substVaDE} \\
    &=\int \intd \v z\, q_\phi(\v z | \v x) \sum_{c=1}^{K} \left(\int \intd \v z'\,{q_\phi(\v{z}'|\v x)}p(c|\v{z}')\right)\log\frac{ \int \intd \v z''\,{q_\phi(\v{z}''|\v x)}p(c|\v{z}'')}{p_\theta(c | \v z)}\nonumber \\
    &=\sum_{c=1}^{K} \underbrace{\left(\int \intd \v z'\,{q_\phi(\v{z}'|\v x)}p(c|\v{z}')\right)}_{\neq 0}\underbrace{\left[\log \int \intd \v z''\,{q_\phi(\v{z}''|\v x)}p(c|\v{z}'') - \int \intd \v z\, q_\phi(\v z | \v x) \log {p_\theta(c | \v z)}\right]}_{\stackrel{?}{=}0} \label{eq:misappr_final} \\
    &\stackrel{?}{=}0 \nonumber
\end{align}
In the above derivations, we use $\v{z}$, $\v{z}'$ and $\v{z}''$ to mark separate occurrences of the variable $\v{z}$ in different integrals.
The first term in Eq.~\eqref{eq:misappr_final} is strictly positive. To satisfy the claim, the second term in Eq.~\eqref{eq:misappr_final} would have to be equal to zero for all $c \in \{ 1, \dots, K \}$, which in general does not hold. We note that in the original codebase for~\cite{vade}, training is not done using the form of the ELBO as above, Eq.~\eqref{eq:expand_elbo_single_vade} with $\circled{A}=0$, instead, using the general form where all terms are calculated.

\textbf{Monte Carlo Sampling of the VaDE-Trick objective: }
These results and analysis raise the natural question: how is it that this misapprehension has lasted?
Perhaps this is because of the following lucky accident when performing MC sampling.

If one substitutes the optimal forms of Theorem~\ref{thm:1} back into the initial $\ELBO$ and then estimates the resulting objective using a \textit{single} MC sample from $\v z$, then the resulting estimator \textit{looks} like is an estimator of Eq~\eqref{eq:expand_elbo_single_vade} with the final term set to zero, that is:
\[\ELBO(\v x;\theta,\phi) \stackrel{?}{=}\mathbb{E}_{q_\phi(\v{z} | \v{x})}\log p_\theta(\v x | \v z) - \KL\left[q_\phi(\v z | \v x)||p_\theta(\v z)\right] .\]
Equivalently, in reverse, taking a \textit{single} MC sample for $\v z$ and using the above misapprehension as the training objective results in the same estimator as one gets from taking one MC sample for the true objective. 

Let us push through the former of these, constructing the objective and MC estimator for the correct optimal objective:
\begin{align}
    \ELBO(\v x;\theta,\phi) &=\mathbb{E}_{q_\phi(\v z|\v x)q_\phi(c|\v x)}\left[\log \frac{p_\theta(\v x | \v z)p_\theta(\v z , c)}{q_\phi(\v z|\v x)q_\phi(c|\v x)}\right]\\
    \label{eq:second_line_general}
    &= \mathbb{E}_{q_\phi(\v z | \v x)}[\log p_\theta(\v x | \v z) - \log q_\phi(\v z|\v x)] - \expect_{q_\phi(\v z | \v x)} \expect_{q_\phi(c|\v x)} \log \frac{q_\phi(c|\v x)}{p_\theta(\v z, c)}\\
    &= \mathbb{E}_{q_\phi(\v z | \v x)}[\log p_\theta(\v x | \v z) - \log q_\phi(\v z|\v x)] - \expect_{q_\phi(c|\v x)} \log \frac{q_\phi(c|\v x)}{\exp(\expect_{q_\phi(\v z | \v x)} \log p_\theta(\v z, c))}\\
    &= \mathbb{E}_{q_\phi(\v z | \v x)}[\log p_\theta(\v x | \v z) - \log q_\phi(\v z|\v x)] \nonumber \\
    &  - \KL\left[q_\phi(c|\v x)|| \pi( c | q_\phi(\v z | \v x))\right] + \log \vs Z (q_\phi(\v z | \v x)) 
\end{align}
where
\begin{align}
  \v{\pi}( c | q_\phi(\v z | \v x)) &:= \frac{\exp(\expect_{q_\phi(\v z | \v x)} \log p_\theta(\v z, c))}{\vs Z(q_\phi(\v z | \v x))}\ \ \text{for}\ c \in \mathcal{C} \\
  \label{eq:zarrow_def}
    \vs Z (q_\phi(\v z | \v x))  & := \sum_{c \in \mathcal{C}} \exp(\expect_{q_\phi(\v z | \v x)} \log p_\theta(\v z, c))\ .
\end{align}
Setting $q_\phi(c|\v x) = \v{\pi}( c | q_\phi(\v z | \v x))$ and substituting $\v z^{(l)}$ for $l=1,\ldots,L$ Monte Carlo samples from $q_\phi(\v z | \v x)$:
\begin{align}
    \ELBO(\v x;\theta,\phi) &\approx \frac{1}{L}\sum_{l=1}^L\log p_\theta(\v x | \v z^{(l)}) - \log q_\phi(\v z^{(l)}|\v x) \nonumber \\
    & \hspace{1cm}  + \log \sum_{c \in \mathcal{C}} \exp\left(\frac{1}{L}\sum_{l=1}^L \log p_\theta(\v z^{(l)}, c)\right)
    \label{eqn:vade_estimator}
\end{align}
which reduces for $L=1$ to
\begin{align}
\label{eq:MC_est_L_equals_1_general}
    \ELBO(\v x;\theta,\phi) &\approx \log p_\theta(\v x | \v z^{(1)}) - \log q_\phi(\v z^{(1)}|\v x) + \log p_\theta(\v z^{(1)}).
\end{align}
This \textit{\textbf{appears}} to be a MC estimator for
\begin{align}
    \ELBO(\v x;\theta,\phi) &=\mathbb{E}_{q_\phi(\v{z} | \v{x})}\log p_\theta(\v x | \v z) - \KL\left[q_\phi(\v z | \v x)||p_\theta(\v z)\right].    
\end{align}
This appearance is purely because the $\log \sum \exp \sum \log$ in Eq~\eqref{eqn:vade_estimator} luckily simplifies when $L=1$. Because of this lucky coincidence, all empirical results in~\cite{vade} are valid.

\subsection{Multi-Facet VaDE Trick (factorised distribution)}
\label{app:proof2}

\begin{proof}(Theorem~\ref{thm:2}: Multi-Facet VaDE Trick for factorised distribution $q_\phi(\vv{z} | \v{x}) = \prod_j q_\phi(\v{z}_j | \v{x})$)
\begin{align}
\label{eq:starting_point_factorized}
    \expect_{q_\phi(\vv z | \v x)}\KL\left[q_\phi(\v c|\v x)||p_\theta(\v c|\vv z)\right] &= \expect_{q_\phi(\vv z | \v x)} \expect_{q_\phi(\v c|\v x)} \log \frac{q_\phi(\v c|\v x)}{p_\theta(\v c|\vv z)}\\
    &= \expect_{q_\phi(\v c|\v x)} \log \frac{q_\phi(\v c|\v x)}{\exp(\expect_{q_\phi(\vv z | \v x)} \log p_\theta(\v c|\vv z))}\\
    &= \expect_{q_\phi(\v c|\v x)} \log \frac{q_\phi(\v c|\v x)}{\exp(\sum_j \expect_{q_\phi(\v z_j | \v x)} \log p_\theta(c_j|\v z_j))}\\
    &= \expect_{q_\phi(\v c|\v x)} \log \frac{q_\phi(\v c|\v x)}{\prod_j \exp(\expect_{q_\phi(\v z_j | \v x)} \log p_\theta(c_j|\v z_j))}\\
    &= \KL\left[q_\phi(\v c|\v x)|| \prod_j \v{\pi}_j( c_j | q_\phi(\v z_j | \v x))\right] - \sum_j \log Z_j(q_\phi(\v z_j | \v x))
\end{align}
which is minimised w.r.t. $q_\phi(c|\v x)$ by setting the KL term to zero by $q_\phi(c|\v x) = \prod_j \v{\pi}_j( c_j | q_\phi(\v z_j | \v x))$, where 
\begin{align}
   \v{\pi}_j( c_j | q_\phi(\v z_j | \v x)) &:= \frac{\exp(\expect_{q_\phi(\v z_j | \v x)} \log p_\theta(c_j|\v z_j))}{Z_j(q_\phi(\v z_j | \v x))}\ \ \text{for}\ c_j =1, \ldots, K_j \\
   \label{eq:Z_in_factorized_derivation}
    Z_j(q_\phi(\v z_j | \v x)) & := \sum_{c_j=1}^{K_j}\exp(\expect_{q_\phi(\v z_j | \v x)} \log p_\theta(c_j|\v z_j))\ .
\end{align}

where $Z_j(q_\phi(\v z_j | \v x))$ is a normalisation constant for $\v{\pi}_j( c_j | q_\phi(\v z_j | \v x))$, and we have used the relations
\begin{align}
    q_\phi(\vv z | \v x) &= \prod_j q_\phi(\v z_j | \v x)\\
    p_\theta(\v c|\vv z) &= \prod_j p_\theta(c_j|\v z_j) 
\end{align}
as required. 
\end{proof}

\subsection{Multi-Facet VaDE Trick (general distribution)}
\label{app:proof3}
While we use a posterior over $\vv{z}$ that factorises between facets, $q_\phi(\vv{z}|\v{x})=\prod_{j=1}^J q_\phi(\v{z}_j|\v{x})$, there is the question as to whether one can use a VaDE trick in the case where the posterior for $\vv{z}$ has a general factorisation (e.g. an autoregressive factorisation over facets).
An example would be $q_\phi(\vv{z}|\v{x})=\prod_{j=1}^J q_\phi(\v{z}_j|\v{z}_{<j}, \v{x})$, the posterior factorisation used in many hierarchical VAEs~\cite{Kingma2016,Vahdat2020,Child2020}.
We answer this question in the affirmative:
\begin{theorem}(Multi-Facet VaDE Trick for general $q_\phi(\vv z | \v x)$, $p(\vv z, \v c)$)
\label{thm:3}
For any probability distribution $q_\phi(\vv z | \v x)$, the distribution $q_\phi(\v c|\v x)$ that minimises $\expect_{q_\phi(\vv z | \v x)}\KL\left[q_\phi(\v c|\v x)||p_\theta(\v c|\vv z)\right]$ is 
\begin{align}
    \underset{q_\phi(\v c|\v x)}{\mathrm{argmin}} \expect_{q_\phi(\vv z | \v x)}\KL\left[q_\phi(\v c|\v x)||p_\theta(\v c|\vv z)\right] &= \vvs{\pi}( \v c | q_\phi(\vv z | \v x))
\end{align}
where the minimum value is attained at
\begin{align}
\underset{q_\phi(\v c|\v x)}{\mathrm{min}} \expect_{q_\phi(\vv z | \v x)}\KL\left[q_\phi(\v c|\v x)||p_\theta(\v c|\vv z)\right] &= -\log \vvs Z (q_\phi(\vv z | \v x)) 
\end{align}
where
\begin{align}
  \vvs{\pi}( \v c | q_\phi(\vv z | \v x)) &:= \frac{\exp(\expect_{q_\phi(\vv z | \v x)} \log p_\theta(\v c|\vv z))}{\vvs Z(q_\phi(\vv z | \v x))}\ \ \text{for}\ \v c \in \mathcal{C} \\
    \vvs Z (q_\phi(\vv z | \v x))  & := \sum_{\v c \in \mathcal{C}} \exp(\expect_{q_\phi(\vv z | \v x)} \log p_\theta(\v c|\vv z))\ .
\end{align}
\end{theorem}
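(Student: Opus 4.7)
The plan is to adapt the proof of Theorem~\ref{thm:1} almost verbatim, treating $\v c \in \mathcal{C}$ as a single categorical variable over the product space $\mathcal{C} = C_1 \times \cdots \times C_J$. Crucially, unlike Theorem~\ref{thm:2}, we make no factorisation assumption on either $q_\phi(\vv z|\v x)$ or $p_\theta(\v c|\vv z)$; this is why the normaliser $\vvs Z$ ends up being a single sum over $\mathcal{C}$ rather than a product of per-facet normalisers $Z_j$.

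First, I would interchange the two expectations (Fubini applies since $\log q_\phi(\v c|\v x)$ does not depend on $\vv z$, and the integrand $q_\phi(\v c|\v x)\log p_\theta(\v c|\vv z)$ is integrable by assumption as in the single-facet case), writing
\begin{align}
\expect_{q_\phi(\vv z | \v x)}\KL\!\left[q_\phi(\v c|\v x)\,\|\,p_\theta(\v c|\vv z)\right]
&= \expect_{q_\phi(\v c|\v x)}\!\left[\log q_\phi(\v c|\v x) - \expect_{q_\phi(\vv z|\v x)}\log p_\theta(\v c|\vv z)\right].
\end{align}
Then I would rewrite the second term inside the logarithm by exponentiating-and-logging, obtaining
\begin{align}
\expect_{q_\phi(\vv z | \v x)}\KL\!\left[q_\phi(\v c|\v x)\,\|\,p_\theta(\v c|\vv z)\right]
&= \expect_{q_\phi(\v c|\v x)} \log \frac{q_\phi(\v c|\v x)}{\exp\bigl(\expect_{q_\phi(\vv z|\v x)}\log p_\theta(\v c|\vv z)\bigr)}.
\end{align}

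The next step is to recognise the denominator as $\vvs{\pi}(\v c|q_\phi(\vv z|\v x))\cdot \vvs Z(q_\phi(\vv z|\v x))$, and pull the (constant in $\v c$) normaliser $\vvs Z$ out of the expectation. This yields the key decomposition
\begin{align}
\expect_{q_\phi(\vv z | \v x)}\KL\!\left[q_\phi(\v c|\v x)\,\|\,p_\theta(\v c|\vv z)\right]
&= \KL\!\left[q_\phi(\v c|\v x)\,\|\,\vvs{\pi}(\v c | q_\phi(\vv z|\v x))\right] - \log \vvs Z(q_\phi(\vv z|\v x)).
\end{align}
Since the second term does not depend on $q_\phi(\v c|\v x)$ and the first is a KL divergence (hence non-negative, with equality iff the two distributions coincide), minimisation in $q_\phi(\v c|\v x)$ is achieved precisely by setting $q_\phi(\v c|\v x) = \vvs{\pi}(\v c|q_\phi(\vv z|\v x))$, attaining the value $-\log \vvs Z(q_\phi(\vv z|\v x))$, as claimed.

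There is no real obstacle here: the argument mirrors Appendix~\ref{app:proof1} line-for-line. The only conceptual subtlety worth flagging is why the general statement does not need any independence assumption. In Theorem~\ref{thm:2}, the factorisation $q_\phi(\vv z|\v x)=\prod_j q_\phi(\v z_j|\v x)$ together with $p_\theta(\v c|\vv z)=\prod_j p_\theta(c_j|\v z_j)$ is what makes $\exp\bigl(\expect_{q_\phi(\vv z|\v x)}\log p_\theta(\v c|\vv z)\bigr)$ split as a product across $j$, which in turn forces the optimal $q_\phi(\v c|\v x)$ to factorise. Without such assumptions, $\exp\bigl(\expect_{q_\phi(\vv z|\v x)}\log p_\theta(\v c|\vv z)\bigr)$ is simply an unnormalised mass function on $\mathcal{C}$, and the same Gibbs-type argument delivers the minimiser over all distributions on $\mathcal{C}$; Theorem~\ref{thm:2} is then recovered as a corollary by checking that under the stated factorisations the minimiser $\vvs{\pi}$ does indeed factorise and $\vvs Z$ splits as $\prod_j Z_j$.
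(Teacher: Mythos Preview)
Your proposal is correct and follows essentially the same route as the paper's proof in Appendix~\ref{app:proof3}: interchange the expectations, rewrite the inner ratio using $\exp(\expect_{q_\phi(\vv z|\v x)}\log p_\theta(\v c|\vv z))$, and decompose into a KL plus $-\log \vvs Z$. Your added remarks on Fubini and on recovering Theorem~\ref{thm:2} as a corollary are sound but go slightly beyond what the paper writes.
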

\begin{proof}
\begin{align}
    \expect_{q_\phi(\vv z | \v x)}\KL\left[q_\phi(\v c|\v x)||p_\theta(\v c|\vv z)\right] &= \expect_{q_\phi(\vv z | \v x)} \expect_{q_\phi(\v c|\v x)} \log \frac{q_\phi(\v c|\v x)}{p_\theta(\v c|\vv z)}\\
    &= \expect_{q_\phi(\v c|\v x)} \log \frac{q_\phi(\v c|\v x)}{\exp(\expect_{q_\phi(\vv z | \v x)} \log p_\theta(\v c|\vv z))}\\
    &=  \KL\left[q_\phi(\v c|\v x)|| \vvs{\pi}( \v c | q_\phi(\vv z | \v x))\right] - \log \vvs Z (q_\phi(\vv z | \v x)) 
\end{align}
which is minimised by setting the KL term to zero as required.
\end{proof}

\section{Monte Carlo estimator of Evidence Lower Bound}
\label{app:mc_estimator}

\subsection{Primary form}

We start the derivation from the ELBO in Eq.~\eqref{eq:elbo_initial}: 
\begin{align}
    \ELBO(\v x;\theta,\phi) &=\mathbb{E}_{q_\phi(\vv z|\v x)q_\phi(\v c|\v x)}[\log \frac{p_\theta(\v x | \vv z)p_\theta(\vv z , \v c)}{q_\phi(\vv z|\v x)q_\phi(\v c|\v x)}]\\
    &=\mathbb{E}_{q_\phi(\vv z|\v x)q_\phi(\v c|\v x)}[\log \frac{p_\theta(\v x | \vv z)p_\theta(\vv z |\v c)p_\theta(\v c)}{q_\phi(\vv z|\v x)q_\phi(\v c|\v x)}] \\
    &= \mathbb{E}_{q_\phi(\vv z | \v x)}\left[\log p_\theta(\v x | \vv z) \right] + \expect_{q_\phi(\vv z | \v x)q_\phi(\v c|\v x)}\left[ \log p_\theta(\vv z \mid \v c)\right] \nonumber \\
    & \hspace{0.2cm} + \expect_{q_\phi(\v c|\v x)} \left[ \log p_\theta(\v c)\right] - \mathbb{E}_{q_\phi(\vv z | \v x)}\left[\log q_\phi(\vv z|\v x)\right] - \expect_{q_\phi(\v c|\v x)}\left[\log q_\phi(\v c|\v x)\right] \label{eqn:elbo_5terms}
\end{align}

Next, we note that Theorem 2 has the optimal value of $q(\v c | \v x)$ taking the factorised form
\begin{equation}
    q(\v c | \v x) = \prod_{j=1}^J q(c_j | \v x).
\end{equation}

Combining this with the factorised prior introduced in Eq. \eqref{eqn:facet_independence_assumption}, the loss can then be simplified and approximated as
\begin{align}
    \mathcal{L}(\v{x};\theta,\phi) &\approx  \mathbb{E}_{q_\phi(\vv z | \v x)}\left[\log p_\theta(\v x | \vv z) \right] + \sum_{j=1}^J \expect_{q_\phi(\v z_j | \v x)q_\phi(c_j|\v x)}\left[ \log p_\theta(\v z_j \mid c_j)\right] \nonumber \\
    & \hspace{-0.5cm} + \sum_{j=1}^J \expect_{q_\phi(c_j|\v x)} \left[ \log p_\theta(c_j)\right] - \sum_{j=1}^{J} \mathbb{E}_{q_\phi(\v z_j | \v x)}\left[\log q_\phi(\v z_j |\v x)\right] - \sum_{j=1}^J \expect_{q_\phi(c_j |\v x)}\left[\log q_\phi(c_j|\v x)\right]
\end{align}

Then we approximate the ELBO using MC estimation by drawing samples from $q_\phi(\vv z | \v x)$: 
\begin{align}
    \tilde{\mathcal{L}}(\v{x};\theta,\phi) &= \frac{1}{L} \sum^L_{l=1} \log p_{\bm{\theta}}(\v{x} | \vv z^{(l)}) + \frac{1}{L} \sum^L_{l=1} \sum_{j=1}^J \sum_{c_j=1}^{K_j} q_\phi(c_j |\v{x}) \log p_\theta(\v z_j^{(l)} | c_j) \nonumber \\
    & \hspace{-0.5cm} + \sum_{j=1}^J \sum_{c_j=1}^{K_j} q_\phi(c_j | \v{x}) \log p_\theta(c_j) - \frac{1}{L}\sum^L_{l=1} \sum_{j=1}^{J} \log q_\phi(\v z_j^{(l)}| \v{x}) - \sum_{j=1}^J \sum_{c_j=1}^{K_j} q_\phi(c_j |\v{x}) \log q_\phi(c_j | \v{x}) 
\end{align}

where the optimal value of $q_\phi(c_j | \v x)$ is obtained from Theorem~\ref{thm:2}: 
\begin{align}
   q_\phi(c_j | \v x) &:= \frac{\exp\left[\frac{1}{L}\sum^L_{l=1}\log p_\theta(c_j|\v z_j^{(l)})\right]}{Z_j(q_\phi(\v z_j | \v x))}\ \ \text{for}\ c_j =1, \ldots, K_j \label{eqn:opt_q_mc}\\
    Z_j(q_\phi(\v z_j | \v x)) & := \sum_{c_j=1}^{K_j}\exp\left[\frac{1}{L}\sum^L_{l=1}\log p_\theta(c_j|\v z_j^{(l)})\right]\ .
\end{align}
which reduces to
\begin{align}
    q_\phi(c_j | \v x) &= p_\theta(c_j|\v z_j^{(1)})\ \ \text{for}\ j = 1, \ldots, J
\end{align}
when $L=1$.

As a result, we obtain the loss for $L=1$:
\begin{align}
    \tilde{\mathcal{L}}(\v{x};\theta,\phi) &= \log p_\theta(\v{x} | \vv z^{(1)}) + \sum_{j=1}^J \sum_{c_j=1}^{K_j} p_\theta(c_j|\v z_j^{(1)}) \big( \log p_\theta(\v z_j^{(1)} | c_j) + \log p_\theta(c_j) \big) \nonumber \\
    & \hspace{0.2cm} - \sum_{j=1}^J \log q_\phi(\v z_j^{(1)}| \v{x}) - \sum_{j=1}^J \sum_{c_j=1}^{K_j} p_\theta(c_j|\v z_j^{(1)}) \log p_\theta(c_j|\v z_j^{(1)})
    \label{eq:5term_loss}
\end{align}

\subsection{Alternate form} 

Again, we start the derivation of an MC estimator from Eq.~\eqref{eq:elbo_initial}: 

\begin{align}
    \ELBO(\v x;\theta,\phi) &=\mathbb{E}_{q_\phi(\vv z|\v x)q_\phi(\v c|\v x)}[\log \frac{p_\theta(\v x | \vv z)p_\theta(\vv z , \v c)}{q_\phi(\vv z|\v x)q_\phi(\v c|\v x)}]\\
    &= \mathbb{E}_{q_\phi(\vv z|\v x)q_\phi(\v c|\v x)}[\log \frac{p_\theta(\v x | \vv z)p_\theta(\vv z )p_\theta(\v c|\vv z)}{q_\phi(\vv z|\v x)q_\phi(\v c|\v x)}] \label{eq:factorisation_joint} \\
    &= \mathbb{E}_{q_\phi(\vv z | \v x)}\left[\log p_\theta(\v x | \vv z) - \log q_\phi(\vv z|\v x) + \log p_\theta(\vv z)\right] \nonumber \\
    & \hspace{0.5cm} + \expect_{q_\phi(\v c|\v x)}\left[ \expect_{q_\phi(\vv z | \v x)} \log p_\theta(\v c \mid \vv z) - \log q_\phi(\v c|\v x)\right]
\end{align}
where an alternative factorisation of $p_{\theta}(\vv z, \v c)$ is used in Eq.~\eqref{eq:factorisation_joint}, resulting in a different, but equivalent formulation of the ELBO in Eq.~\eqref{eqn:elbo_5terms}.

Next, we draw MC samples from $q_\phi(\vv z | \v x)$:
\begin{align}
    \tilde{\mathcal{L}}(\v{x};\theta,\phi) &= \frac{1}{L} \sum^L_{l=1} \left[\log p_\theta(\v{x} | \vv z^{(l)}) - \log q_\phi(\vv z^{(l)} | \v{x}) + \log p_\theta(\vv z^{(l)}) \right]\\
    & \hspace{0.5cm}  + \sum_{\B{c} \in \mathcal{C}}\left\{ q_\phi(\v c | \v x)\left[  \frac{1}{L}\sum^L_{l=1}\log p_\theta(\v c \mid \vv z^{(l)}) -  \log q_\phi(\v c | \v x) \right]\right\} \label{eqn:elbo_3terms_last_term}
\end{align}
where the optimal value of $q_\phi(\v c | \v x)$ when $L=1$ is similarly obtained from Theorem~\ref{thm:2}:
\begin{align}
    q_\phi(\v c | \v x) &= \prod_{j=1}^J q_\phi(c_j | \v x)\\
    q_\phi(c_j | \v x) &= p_\theta(c_j|\v z_j^{(1)})\ \ \text{for}\ j = 1, \ldots, J
\end{align}

In this case, the term in \eqref{eqn:elbo_3terms_last_term} evaluates to zero, because from Theorem 2
\begin{equation}
    \log p_\theta(\v c \mid \vv z^{(1)}) = \sum_{j=1}^J \log p_\theta(c_j \mid \v z_j^{(1)})  = \sum_{j=1}^J \log q_\phi(c_j | \v x) = \log q_\phi(\v c | \v x).
\end{equation}

Consequently, we obtain the loss for $L=1$:
\begin{align}
    \tilde{\mathcal{L}}(\v x;\theta,\phi) &= \log p_\theta(\v x | \vv z^{(1)}) - \log q_\phi(\vv z^{(1)} | \v x) + \log p_\theta(\vv z^{(1)})
\end{align}
where
\begin{align}
    & q_\phi(\vv z^{(1)} | \v x) = \prod_{j=1}^{J} q_\phi(\v z_j^{(1)} | \v x) \\
    & p_\theta(\vv z^{(1)}) = \prod_{j=1}^J p_\theta(\v z_j^{(1)}) = \prod_{j=1}^J \sum_{c_j=1}^{K_j} p_\theta(\v z^{(1)}_j | c_j) p_\theta(c_j)
\end{align}

\subsection{Empirical comparison of primary and alternate form}

Here, we empirically compare the primary and alternate form with five and three terms, respectively. 
Each loss comes from a different factorization of $p_\theta(\vv z , \v c)$ as we show above, but are equivalent.

We verified in our implementation that both losses yield the exact same loss values on the same mini-batch, but gradients computed during optimisation are different as both losses have non-overlapping terms and consequently convergence behavior may differ during training. 
We are interested in whether these differences are substantial. 
In particular, \cite{vade} used a 5-term loss function (similar to the primary loss in Eq.~\eqref{eq:5term_loss}, even though a 3-term loss function (similar to the alternate loss) could also be obtained and is arguably more compact. 

We investigate this question with the following experimental setup, which is close to the one in~\cite{vade}: 
On MNIST, we conduct 10 training runs of our model with varying random seeds for both the primary and alternate form of the loss. 
We use the following hyperparameters with a shared architecture and refer to Appendix~\ref{app:Experimental details} for a more detailed understanding on these configurations: 

\begin{itemize}[topsep=0pt,parsep=0pt,partopsep=0pt,leftmargin=*]
    \item Number of facets: $J=1$
    \item Batch size: 512
    \item Learning rate: 0.002
    \item Dimension of $\v z$: 10
    \item Number of $c$ (number of clusters): 50
    \item Covariance structure of $p_{\theta}(\v z | c)$: diagonal
    \item Output dimensions for layers in $g(\v x; \phi)$: $\left[500, 500, 2000\right]$
    \item Output dimensions for layers in $f(\v z; \theta)$: $\left[2000, 500, 500\right]$
\end{itemize}

In Fig.~\ref{fig:loss_comp}, we show unsupervised clustering accuracy on the test set over training epochs for the 10 runs and the primary (left) and alternate (right) form of the loss. 
Our results indicate that there is no significant difference in performance between the two loss forms. 
We decide to use the primary form in all our experiments going forward as it is simpler for our implementation, e.g. when combined with progressive training, and is also more intuitively following the generative process of our model. 

\begin{figure}[h]
    \centering
    \includegraphics[width=.47\linewidth]{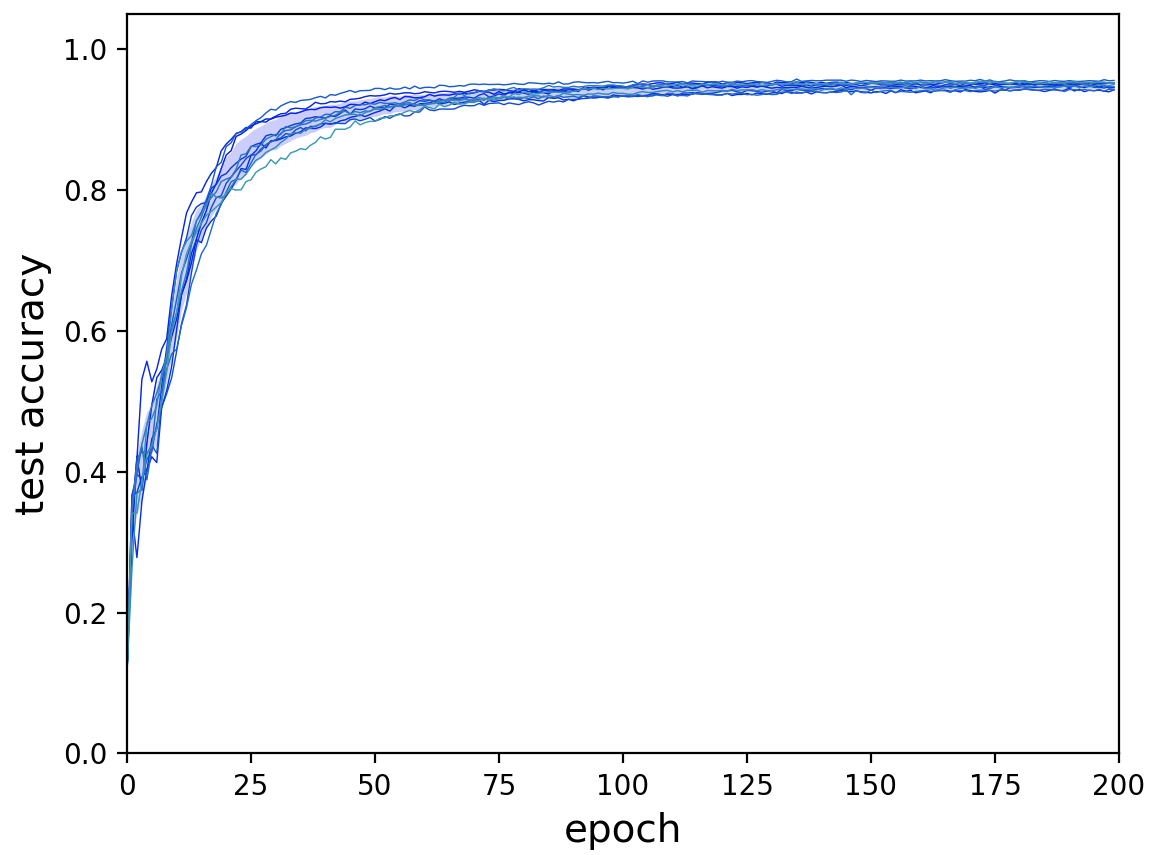}
    \includegraphics[width=.47\linewidth]{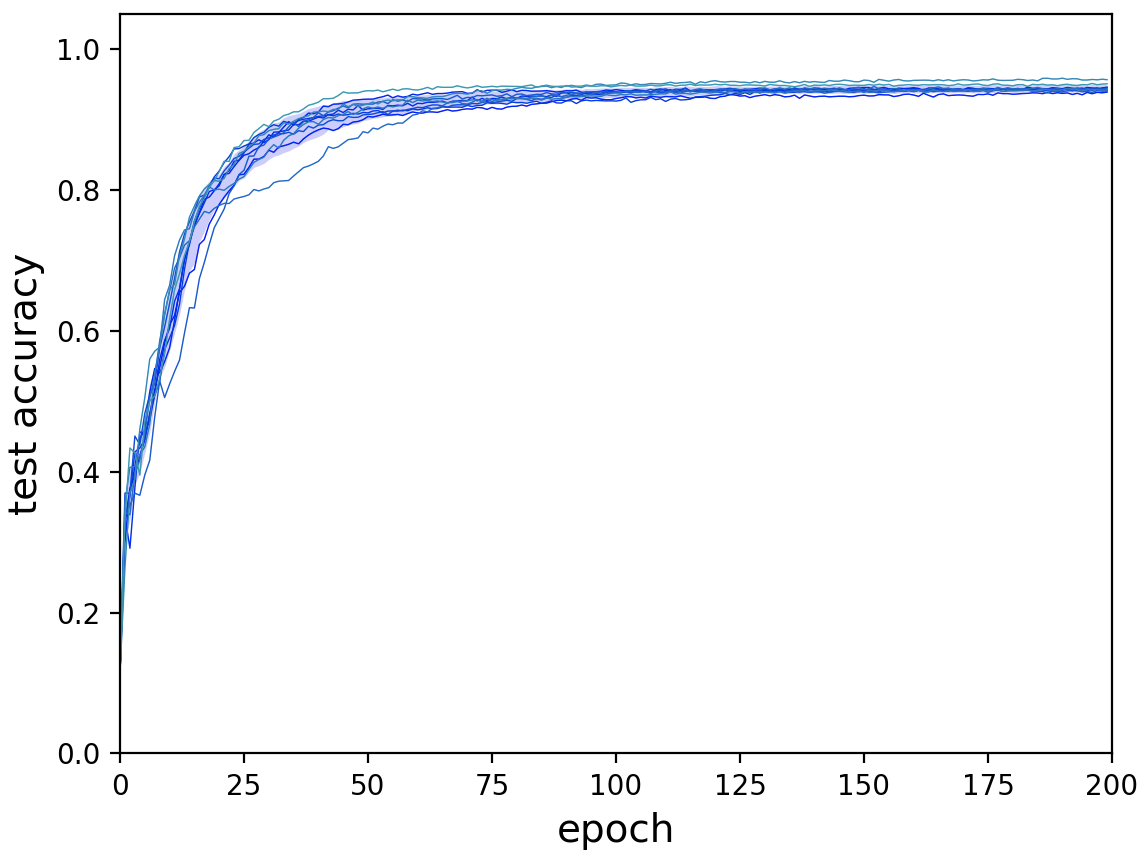}
    \caption{Unsupervised clustering accuracy on the test set for 10 runs, comparing the primary (left) and alternate (right) loss form.
    Each run is illustrated by one curve.
    The blue shade is bounded by the mean accuracy plus and minus one standard deviation across the ten runs.}
    \label{fig:loss_comp}
\end{figure}

\section{Experimental details}
\label{app:Experimental details}

We provide our code implementing MFCVAE, using \textit{PyTorch Distributions}~\cite{pytorch}, together with detailed instructions setup, training and evaluating our model, as well as reproducing the main results of this paper via shell scripts at \textcolor{blue}{\href{https://github.com/FabianFalck/mfcvae}{\url{https://github.com/FabianFalck/mfcvae}}}.

\subsection{Datasets and preprocessing}
\label{app:Datasets}

Throughout our experiments, we use three datasets: MNIST~\cite{lecun2010mnist}, 3DShapes~\cite{3dshapes18}, and SVHN~\cite{svhn}. 
In the following, we briefly introduce these datasets, particularly focusing on their abstract characteristics which might be separated out by a multi-facet clustering model, as well as ethical considerations with regards to their collection.
For MNIST and SVHN, we use their implementations as PyTorch Dataset classes as part of the torchvision package to process~\cite{pytorch}.
For 3DShapes, we provide a custom PyTorch Dataset class which contains several preprocessing steps (detailed below) and the selection of arbitrary combinations of factors.

\textbf{MNIST. } The MNIST database~\cite{lecun2010mnist} consists of grey-scale (almost binary) handwritten digits from 10 classes  ('0' to '9'). 
There are 60,000 training examples and 10,000 test examples. 
The handwritten digits are written by 500 writers (250 writers for training and test set, respectively), introducing a large variation in terms of style of these characters. 
The most prominent characteristics of MNIST are 1) the digit class, given as a supervised label 2) stroke width (e.g. 'bold', 'thin', \dots) 3) the slant of the digits (e.g. 'right-tilted', 'left-tilted', 'upright', \dots).
During preprocessing, we transform the images to a 0 to 1 scale using min-max scaling.

To the best of our knowledge, the dataset is highly curated and cropped to individual digits, so that we can exclude offensive content or important personally identifiable information in these images.
However, we note that as the images are handwritten, there is a possibility that they can be linked to these individuals.

MNIST ``was constructed from NIST's Special Database 3 [SD-3] and Special Database 1 [SD-1]''~\cite{lecun2010mnist}. 
To the best of our knowledge, SD-3 and SD-1 are no longer available for download (see \url{https://www.nist.gov/srd/shop/special-database-catalog}), as opposed to other Special Databases. 
We thus cannot comment on whether and if so in what form consent was obtained from subjects providing the handwritten digits.
 
\textbf{3DShapes. }The 3DShapes dataset~\cite{3dshapes18} consists of images of three-dimensional shapes in front of a background, generated from six independent ground truth latent factors. 
These latent factors are floor colour (10 values), wall colour (10 values), object colour (10 values), scale (8 values), shape (4 values), and orientation (15 values). 
Since all ground truth latent factors are discrete, the nature of this dataset makes it particularly suited for a multi-facet clustering task.

The dataset is preprocessed as follows: 
We transform each factor's values to a scale of integers between 0 and the number of values of that factor minus one.
Then, to be consistent with the SVHN dataset, we resize the original $64 \times 64$ images to the size $32 \times 32$ using bilinear interpolation.
Lastly, we transform the images to a 0 to 1 scale using min-max scaling.

From this 3DShapes dataset, we extract the following 2 configurations which are used during our experiments (note that other configurations can be easily created using our provided Dataset class): 
\begin{itemize}[noitemsep,topsep=0pt,parsep=0pt,partopsep=0pt,leftmargin=*]
    \item \textit{Configuration 1} (4,800 images): 10 values for floor colour, 1 value for wall colour, 1 value for object colour, 8 values for scale, 4 values for shape, 15 values for orientation
    \item \textit{Configuration 2} (4,800 images): 1 value for floor colour, 10 values for wall colour, 1 value for object colour, 8 values for scale, 4 values for shape, 15 values for orientation
\end{itemize}

3DShapes is a simulated dataset. 
The dataset was generated using the QUery Networks Mujocu environment~\cite{gqn}. 

\textbf{SVHN. }SVHN~\cite{svhn} is a real-world image dataset of cropped digits obtained of house numbers in Google Street View images. 
We focus on the 73,257 training digits and the 26,032 test digits available in the $\texttt{torchvision}$ package in PyTorch. 
SVHN is similar to MNIST in the sense that it is a labelled digit dataset, however, was collected with the aim of being significantly more complex and diverse: 
As the images were extracted from random Google Street View images in various countries, for example they have varying backgrounds, different number of digits per image (the central digit is used as the label), varying resolutions, and different digit styles, rendering them a challenging dataset for supervised and unsupervised learning tasks, and an interesting test bed for multi-facet clustering, as for some of these characteristics, it might be possible to separate them out.
During preprocessing, we transform the images to a 0 to 1 scale using min-max scaling.

\subsection{Neural architectures and Variational Ladder Autoencoder}
\label{app:Neural architectures and Variational Ladder Autoencoder}

In the following, we define two architectures which we implemented in our experiments: 
\begin{itemize}[noitemsep,topsep=0pt,parsep=0pt,partopsep=0pt,leftmargin=*]
    \item A Variational Ladder Autoencoder (VLAE) architecture, as defined in~\cite{vlae} and illustrated in Fig.~\ref{fig:ladder_architecture}.
    \item A shared encoder and decoder architecture (we refer to it as ``shared architecture'' in the following), illustrated in Fig.~\ref{fig:shared_enc_dec}.
\end{itemize}

\textbf{VLAE architecture. }
The VLAE architecture consists of an encoder (recognition model) and a decoder (generative model) which are symmetric to each other.
Both encoder and decoder have a set of backbone layers $b_j^\text{enc}$ and $b_j^\text{dec}$, respectively, which share parameters across the layers of latent variables, and thus naturally build a hierarchy of abstractions.
From and into these backbones, a set of rung layers $r_j^\text{enc}$ and $r_j^\text{dec}$ emerge which parameterise the latent variables $\vv{z}$ (encoder) and process their samples towards reconstructions (decoder).

Formally, following the notation in~\cite{vlae}, we define the recognition model as
\begin{align}
    \hat{\v{h}}_j &= \v{b}_j^\text{enc}(\hat{\v{{h}}}_{j-1}) \\ 
    \v{z}_j &\sim \mathcal{N} \left(\v{z}_j ;   r_{j,\mu}^\text{enc}(\hat{\v{h}}_j), r_{j,\sigma^2}^\text{enc}(\hat{\v{h}}_j)\right)
\end{align}
where $j = 1, \dots, J$; $b_j^\text{enc}$ and $r_j^\text{enc}$ are neural networks, $r_{j,\mu}^\text{enc}$ and $r_{j,\sigma^2}^\text{enc}$ refer to the elements of the output vector of $r_{j}$ corresponding to the mean and variance of the parameterised Gaussian distribution $q(\v{z}_j \mid \v{x})$ with diagonal covariance matrix, and $h_0 \equiv \v{x}$.

For each $j$, $q(c_j \mid \v{x})$ is not directly parameterised by neural networks, but instead computed by Theorem~\ref{thm:2} as described in Section~\ref{sec:vade_tricks}.

We define the generative model as
\begin{align}
    c_j &\sim p(c_j) \text{, for }j=1, \dots,J \\
    \v{z}_j &\sim p(\v{z}_j | c_j) \text{, for }j=1, \dots,J \\
    \tilde{\v{z}}_J &= b_J^\text{dec} \circ r_J^\text{dec}(\v{z}_J) \\
    \tilde{\v{z}}_j &= b_j^\text{dec}  \left( \left[ \tilde{\v{z}}_{j+1}, r_j^\text{dec}(\v{z}_j) \right] \right) \text{, for }j=1, \dots, J-1 \label{eq:proTraining} \\
    \v{x} &\sim u (\v{x} ;  \tilde{\v{z}}_1 )
\end{align}
where $b_j^\text{enc}$ and $r_j^\text{enc}$ are neural networks, $\left[ \cdot, \cdot \right]$ denotes concatenation of two vectors, and $u(\v{x})$ is the likelihood model of $\v{x}$.

We refer to Appendix~\ref{app:Implementation details and hyperparameters} for the exact implementation of all neural networks $b_j^\text{enc}$, $r_j^\text{enc}$, $b_j^\text{dec}$, and $r_j^\text{dec}$ and the likelihood model $u(\cdot)$ for each of the three datasets.

\begin{figure}[h!]
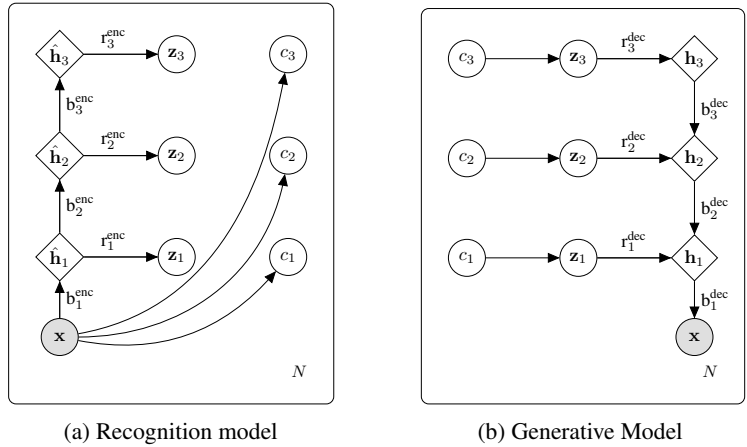

\noindent\makebox[\linewidth]{%
\subfloat[Recognition model]{\scalebox{0.7}{\raisebox{0ex}{
\tikz{ %
      \node[obs] (x) {$\v{x}$} ; %
    \node[det, above=0.7cm of x,minimum size=25pt] (h1) {$\hat{\v{h}}_1$} ; %
    \node[det, above=of h1,minimum size=25pt] (h2) {$\hat{\v{h}}_2$} ; %
    \node[det, above=of h2,minimum size=25pt] (h3) {$\hat{\v{h}}_3$} ; %
    \node[latent, right=1.4cm of h1] (z1) {$\v{z}_1$} ; %
    \node[latent, right=1.4cm of h2] (z2) {$\v{z}_2$} ; %
    \node[latent, right=1.4cm of h3] (z3) {$\v{z}_3$} ; %
    \node[latent, right=1.4cm of z1] (c1) {$c_1$} ; %
    \node[latent, right=1.4cm of z2] (c2) {$c_2$} ; %
    \node[latent, right=1.4cm of z3] (c3) {$c_3$} ; %
	\edge {x} {h1} ; %
	\edge {h1} {h2} ; %
	\edge {h2} {h3} ; %
	\edge {h3} {z3} ; %
	\edge {h1} {z1} ; %
	\edge {h2} {z2} ; %
	\draw [->] (x) to[bend right] (c1);
	\draw [->] (x) to[out=0,in=-100] (c2);
	\draw [->] (x) to[out=10,in=-100] (c3);
    \draw (x)  --  (h1) 
        node [midway,right](b_1^{enc}){$\text{b}_1^{\text{enc}}$};
    \draw (h1)  --  (h2) 
        node [midway,right](b_2^{enc}){$\text{b}_2^{\text{enc}}$};
    \draw (h2)  --  (h3) 
        node [midway,right](b_3^{enc}){$\text{b}_3^{\text{enc}}$};
    \draw (h1)  --  (z1) 
        node [pos=0.4,above](r_1^enc){$\text{r}_1^\text{enc}$};
    \draw (h2)  --  (z2) 
        node [pos=0.4,above](r_2^enc){$\text{r}_2^\text{enc}$};
    \draw (h3)  --  (z3) 
        node [pos=0.4,above](r_3^enc){$\text{r}_3^\text{enc}$};
    \plate[inner sep=0.5cm] {plate1} {(x) (c1) (c2) (c3) (z1) (z2) (z3) (h1) (h2) (h3) } {\scalebox{1}{{$N$}}}; %
  }}
  }}
\hspace*{9mm}
\subfloat[Generative Model]{\scalebox{0.7}{
\tikz{
    \node[obs] (x) {$\v{x}$} ; %
    \node[det, above=0.7cm of x,minimum size=25pt] (h1) {$\v{h}_1$} ; %
    \node[det, above=of h1,minimum size=25pt] (h2) {$\v{h}_2$} ; %
    \node[det, above=of h2,minimum size=25pt] (h3) {$\v{h}_3$} ; %
    \node[latent, left=1.4cm of h1] (z1) {$\v{z}_1$} ; %
    \node[latent, left=1.4cm of h2] (z2) {$\v{z}_2$} ; %
    \node[latent, left=1.4cm of h3] (z3) {$\v{z}_3$} ; %
    \node[latent, left=1.4cm of z1] (c1) {$c_1$} ; %
    \node[latent, left=1.4cm of z2] (c2) {$c_2$} ; %
    \node[latent, left=1.4cm of z3] (c3) {$c_3$} ; %
    \edge {c3} {z3} ; %
	\edge {c2} {z2}; %
	\edge {c1} {z1} ;
	\edge {z3} {h3} ; %
	\edge {z2} {h2}; %
	\edge {z1} {h1} ;
	\edge {h3} {h2} ; %
	\edge {h2} {h1} ; %
	\edge {h1} {x};
    \draw (h1)  --  (x) 
        node [midway,right](b_1^dec){$\text{b}_1^\text{dec}$};
    \draw (h2)  --  (h1) 
        node [midway,right](b_2^dec){$\text{b}_2^\text{dec}$};
    \draw (h3)  --  (h2) 
        node [midway,right](b_3^dec){$\text{b}_3^\text{dec}$};
    \draw (z3)  --  (h3) 
        node [pos=0.5,above,sloped](r_1^dec){$\text{r}_3^\text{dec}$};
    \draw (z2)  --  (h2) 
        node [pos=0.5,above,sloped](r_2^dec){$\text{r}_2^\text{dec}$};
    \draw (z1)  --  (h1) 
        node [pos=0.5,above,sloped](r_3^dec){$\text{r}_1^\text{dec}$};
    \plate[inner sep=0.5cm] {plate1} {(x) (c1) (c2) (c3) (z1) (z2) (z3) (h1) (h2) (h3)}
    {\scalebox{1}{{$N$}}};
    }}
    }}
 \caption{Ladder-MFCVAE architecture with $J=3$ as an example. (a) The recognition model and (b) generative model. Each \emph{labelled} arrow corresponds to a neural network. The posterior for each $c_j$ is defined using the multi-facet VaDE trick.
}
 \label{fig:ladder_architecture}
\end{figure}

\textbf{Shared architecture. }
We use the shared architecture as a simple comparison to test our hypothesis that a VLAE helps stabilise training.
In the shared architecture, each facet has an equal depth of neural networks and shares the parameters.
The encoder and decoder are both fully shared, except for the last hidden layers in both. 

More precisely, the recognition model is defined as
\begin{align}
    \hat{\v{h}} &= s^\text{enc}(\v{x}) \\
    \v{z}_j &\sim \mathcal{N} \left( \v{z}_j; t_{j,\mu}^\text{enc}(\hat{\v{h}}), t_{j,\sigma^2}^\text{enc}(\hat{\v{h}}) \right)
\end{align}
where $s^\text{enc}$ and each $t_j^\text{enc}$ are neural networks, and $t_{j,\mu}^\text{dec}$ and $t_{j,\sigma^2}^\text{dec}$ again refer to those elements of the output vector of $t_{j}$ corresponding to the mean and variance of the parameterised Gaussian distribution $q(\v{z}_j \mid \v{x})$ with diagonal covariance matrix.  $q(c_j \mid \v{x})$ is computed by Theorem~\ref{thm:2} as described in Section~\ref{sec:vade_tricks}.

The generative model is defined as
\begin{align}
    c_j &\sim p(c_j) \text{, for }j=1, \dots,J \\
    \v{z}_j &\sim p(\v{z}_j | c_j) \text{, for }j=1, \dots,J \\
    \v{h} &= t^\text{dec}(\left[ \v{z}_1, \dots, \v{z}_J \right]) \\
    \tilde{\v{x}} &= s^\text{dec}(\v{h}) \\
    \v{x} &\sim u(\tilde{\v{x}})
\end{align}
where $t^\text{dec}$ and $s^\text{dec}$ are neural networks, $\left[ \cdot, \dots, \cdot \right]$ refers to vector concatenation, and $u(\cdot)$ is the likelihood model of $\v{x}$.

Again, we refer to Appendix~\ref{app:Implementation details and hyperparameters} for the exact implementation of all neural networks $s^\text{enc}$, $t_j^\text{enc}$, $s^\text{dec}$, and $t^\text{dec}$, as well as for the likelihood model $u(\cdot)$ for each of the three datasets.

\begin{figure}[h!]
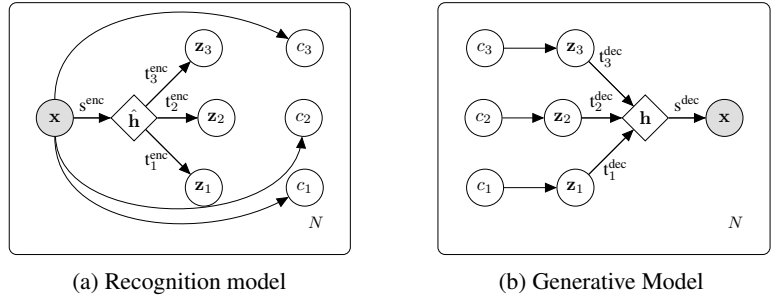

\noindent\makebox[\linewidth]{%
\subfloat[Recognition model]{\scalebox{0.7}{\raisebox{0ex}{
\tikz{ %
      \node[obs] (x) {$\v{x}$} ; %
    \node[det, right=0.7cm of x,minimum size=25pt] (h) {$\hat{\v{h}}$} ; %
    \node[latent, below right=1.2cm of h] (z1) {$\v{z}_1$} ; %
    \node[latent, right=0.76cm of h] (z2) {$\v{z}_2$} ; %
    \node[latent, above right=1.2cm of h] (z3) {$\v{z}_3$} ; %
    \node[latent, right=1.2cm of z1] (c1) {$c_1$} ; %
    \node[latent, right=0.94cm of z2] (c2) {$c_2$} ; %
    \node[latent, right=1.2cm of z3] (c3) {$c_3$} ; %
	\edge {x} {h} ; %
	\edge {h} {z1} ; %
	\edge {h} {z2} ; %
	\edge {h} {z3} ; %
	\draw [->] (x) to[out=-90, in=-150] (c1);
	\draw [->] (x) to[out=-88,in=-95] (c2);
	\draw [->] (x) to[out=90,in=150] (c3);
    \draw (x)  --  (h) 
        node [midway,above](b^{enc}){$\text{s}^{\text{enc}}$};
    \draw (h)  --  (z1) 
        node [pos=0.25,below=.1cm](t_1^enc){$\text{t}_1^\text{enc}$};
    \draw (h)  --  (z2) 
        node [pos=0.5,above](t_2^enc){$\text{t}_2^\text{enc}$};
    \draw (h)  --  (z3) 
        node [pos=0.25,above=.1cm](t_3^enc){$\text{t}_3^\text{enc}$};
    \plate[inner sep=0.5cm] {plate1} {(x) (c1) (c2) (c3) (z1) (z2) (z3) (h)} {\scalebox{1}{{$N$}}}; %
  }}
  }}
\hspace*{9mm}
\subfloat[Generative Model]{\scalebox{0.7}{
\tikz{
    \node[obs] (x) {$\v{x}$} ; %
    \node[det, left=0.7cm of x,minimum size=25pt] (h) {$\v{h}$} ; %
    \node[latent, below left=1.2cm of h] (z1) {$\v{z}_1$} ; %
    \node[latent, left=1cm of z1] (c1) {$c_1$} ; %
    \node[latent, left=0.76cm of h] (z2) {$\v{z}_2$} ; %
    \node[latent, left=0.79cm of z2] (c2) {$c_2$} ; %
    \node[latent, above left=1.2cm of h] (z3) {$\v{z}_3$} ; %
    \node[latent, left=1cm of z3] (c3) {$c_3$} ; %
	\edge {h} {x} ; %
	\edge {z1} {h} ; %
	\edge {z2} {h} ; %
	\edge {z3} {h} ; %
	\edge {c1} {z1} ; %
	\edge {c2} {z2} ; %
	\edge {c3} {z3} ; %
    \draw (x)  --  (h) 
        node [midway,above](s^{dec}){$\text{s}^{\text{dec}}$};
    \draw (h)  --  (z1) 
        node [pos=0.4,below=.1cm](t_1^dec){$\text{t}_1^\text{dec}$};
    \draw (h)  --  (z2) 
        node [pos=0.5,above](t_2^dec){$\text{t}_2^\text{dec}$};
    \draw (h)  --  (z3) 
        node [pos=0.5,above=.2cm](t_3^dec){$\text{t}_3^\text{dec}$};
    \plate[inner sep=0.5cm] {plate1} {(x) (z1) (z2) (z3) (c1) (c2) (c3) (h)} {\scalebox{1}{{$N$}}}; 
    }}
    }}
 \caption{Shared encoder and decoder MFCVAE architecture with $J=3$ as an example. (a) The recognition model and (b) generative model. Each labelled arrow corresponds to a neural network. The posterior for each $c_j$ is defined using the multi-facet VaDE trick.
}
 \label{fig:shared_enc_dec}
\end{figure}

\subsection{Progressive training algorithm}
\label{app:Progressive training algorithm}

We use a progressive training algorithm to train our VLAE architectures. 
We strongly base its implementation on~\cite{ProVLAE} and refer to this source for a more complete introduction, but will point out differences to this formulation below.

The idea of progressive training is to start with training a single facet (typically the one of highest depth in the VLAE architecture), and then progressively loop in the other facets one after the other in a smooth manner.
To formalise this, we define a progressive step $s = 1, 2, \dots, J-1$, where in step $s$, facets $J-s+1$ to $J$ (both including) are contributing to the network (and might be currently looped in), and $\alpha_j$, the fade-in coefficient of layer $j$.
$\alpha_j$ linearly increases from 0.0 to 1.0 during the first 15,000 (for MNIST and SVHN) or 2,000 (for 3DShapes) batches of a progressive step (except for $s=1$), is 0.0 if the facet has not yet been looped in, and is 1.0, otherwise.
\cite{ProVLAE} used 5,000, but we increased this number for MNIST and SVHN to have a smoother loop-in of facets.

In contrast to the formulation in~\cite{ProVLAE} which excludes from the model latent facets that are not looped in yet in a certain progressive step, in our formulation, all latent facets are part of the model throughout all progressive steps, yet do not contribute to the KL-divergences or the reconstruction term in Eq.~\eqref{eqn:mfc_elbo}. 
We achieve this by applying the fade-in coefficient only to the decoder rungs, not the encoder rungs (compare Eq.~(9) in~\cite{ProVLAE}, where both the encoder and decoder rungs are faded in); and to weigh the KL-divergences in $\v{z}_j$ and $c_j$, as before.
In other words, this is similar to the implementation of progressive training in~\cite{ProVLAE} with $\alpha_j=1.0$ for the encoder rungs throughout all progressive steps, and the regular, smoothly increasing $\alpha_j$ value for the decoder rungs.
Precisely, to implement the progressive training algorithm, we amend Eq.~\eqref{eq:proTraining} as follows: 
\begin{align}
    \tilde{\v{z}}_j &= b_j^\text{dec}  \left( \left[ \tilde{\v{z}}_{j+1}, \alpha_j r_j^\text{dec}(\v{z}_j) \right] \right) \ \ \text{, for} \ \ j=1, \dots, J,  \\
    \ELBO^{\mathrm{MFCVAE}}(\mathcal{D};\theta,\phi) &= \expect_{\v x \sim \mathcal{D}}\Big[ \expect_{q_\phi({\vv{z}|\v{x})}} \log p_\theta(\v x | \vv z ) \nonumber \\
    &- \sum_{j=1}^J \alpha_j \left[ \expect_{q_\phi(c_j|\v x)}\KL(q_\phi(\v{z}_j|x) ||p_\theta(\v{z}_j|c_j)) + \KL(q_\phi(c_j|\v x)||p(c_j))\right]\Big]
\end{align}
where 
\begin{align}
    \alpha_j &= 1.0, \ \ \text{for} \ \ j = (J-s+1), \dots, J \\
    \alpha_{J-s} &\in [0, 1] \ \ \text{(looped in)} \ \ \\
    \alpha_j &= 0.0, \ \ \text{for all} \ \ j = 1, \dots, (J-s-1)
\end{align}
and all other equations of the VLAE remain unchanged.
Thus, when $\alpha_j = 0.0$, the gradient w.r.t. any parameters in $b_j^\text{enc}, r_j^\text{enc}, b_j^\text{dec}$, $r_j^\text{dec}$, as well as the parameters of the priors $p(c_j)$ and $p(\v{z}_j)$ are 0, and we achieve the same effect as if those components would not be part of the model.

Lastly, while we have tested ``pretraining'' the latent facets which are not looped in yet through a KL-regularisation terms in $\v{z}_j$ and $c_j$ (see Eq.~(10) in~\cite{ProVLAE}), we could not see a beneficial effect on stability of model training in our model. 
As this would add complexity to the training algorithm, we do not pursue this type of pretraining here.

\subsection{Implementation details and hyperparameters}
\label{app:Implementation details and hyperparameters}

\textbf{New assets. }
We publish the following new assets accompanying this paper:
\begin{itemize}[noitemsep,topsep=0pt,parsep=0pt,partopsep=0pt,leftmargin=*]
    \item \textit{Code}: We provide our source code with detailed instructions on setup, reproducing our results via shell scripts, training, evaluation in a README file at \textcolor{blue}{\href{https://github.com/FabianFalck/mfcvae}{\url{https://github.com/FabianFalck/mfcvae}}}. 
    We follow the code templates and NeurIPS guidelines for code submissions.
    Our code is provided under MIT license.
    The initial implementation of our model are inspired by the codebase of VaDE in \url{https://github.com/eelxpeng/UnsupervisedDeepLearning-Pytorch/blob/master/udlp/clustering/vade.py}, as well as the official VLAE implementation (\cite{vlae} and \url{https://github.com/ermongroup/Variational-Ladder-Autoencoder}), and the official ProVLAE implementation (\cite{ProVLAE} and \url{https://github.com/Zhiyuan1991/proVLAE/blob/master/model_ladder_progress.py}).  %
     Our convolutional VLAE architecture is largely based on the neural architecture for the CelebA dataset in the ProVLAE codebase mentioned above.
    \item \textit{Pretrained models}: We further provide pretrained models with the hyperparameters reported in this section as part of the folder \texttt{pretrained\_models/}. 
\end{itemize}

\textbf{Existing assets used. }
Our work uses the following Python software packages with accompanying licenses (if known): 
PyTorch~\cite{pytorch} (in particular the PyTorch Distributions and Torchvision packages; custom license), 
Numpy~\cite{harris2020array} (BSD 3-Clause License), 
Weights\&Biases~\cite{wandb} (MIT License), 
Matplotlib~\cite{matplotlib} (PSF License), 
Seaborn~\cite{seaborn} (BSD 3-Clause License), 
Pickle~\cite{pickle} (N/A), 
H5Py~\cite{h5py} (BSD 3-Clause License), 
OpenCV 2~\cite{opencv} (Apache License), 
Scikit-learn~\cite{scikit-learn} (BSD 3-Clause License), 
boilr (\url{https://github.com/addtt/boiler-pytorch}) (MIT License).
Regarding data assets used, we refer to Section~\ref{app:Datasets}.

\textbf{Data splits. }
For all three datasets, we split data into training and test dataset (no validation dataset used). 
For MNIST and SVHN, we use the standard data splits as provided with these datasets and in the TorchVision PyTorch package.
For 3DShapes, in both configurations, we use 80\% for training and the remaining 20\% for testing. 
Here, we sample the images uniformly at random and without replacement.
We also refer to Appendix~\ref{app:Datasets} for a more detailed discussion on preprocessing of these datasets.

\textbf{Likelihood models. }For MNIST data, we define its likelihood $p(\v x|\vv z)$ as a product of independent Bernoulli likelihoods, where each dimension is Bernoulli-distributed with respect to some learnt parameter and independent of other dimensions.
Bernoulli likelihood is a reasonable assumption, because most pixels in MNIST images have values close or equal to 0 and 1.

For 3DShapes and SVHN data, we define their likelihood $p(\v x|\vv z)$ to be a product of independent Gaussian likelihoods, where each dimension is Gaussian-distributed with its mean learnt as a parameter and its variance fixed as a hyperparameter.

\textbf{Other design choices. } Apart from the neural architectures, hyperparameters and likelihood models of MFCVAE, there were other design choices which were made on a per-dataset basis: 
\begin{itemize}[noitemsep,topsep=0pt,parsep=0pt,partopsep=0pt,leftmargin=*]
    \item \textit{Covariance structure of the Gaussian $p(\v z_j | c_j)$ for each $j$ and $c_j$}: The covariance matrices can be set to either diagonal or full. In this paper, diagonal covariance is found to be sufficient for MNIST. 
    Full covariance is chosen for 3DShapes and SVHN as it results in a stronger disentanglement of facets.
    \item \textit{Whether to fix $\bm{\pi}_j$ or train them as parameters}: In order to encourage clusters to have similar sizes in each facet, one option is to fix $\bm{\pi}_j$ to be $1/K_j$ componentwise for each facet $j$. 
    We fix $\bm{\pi}_j$ in models trained on 3DShapes and SVHN.
    \item \textit{Activation functions: } To avoid vanishing gradients and encourage a more stable training, we tested three activation functions, in particular, ReLU, leaky ReLU and ELU. 
    For MNIST, we found ReLU to be sufficient. 
    For 3DShapes and SVHN, where convolutional neural networks are involved, we sometimes observed vanishing gradients in training runs, which is why we used the ELU activation function where we no longer observed this problem (leaky ReLU likewise worked, but we chose ELU for consistency with previous VLAE implementations mentioned above).
\end{itemize}

\textbf{Hyperparameter tuning. } 
We performed several large exploratory hyperparameter sweeps over wide grids of possible hyperparameter values, looking at the qualitative improvement in facet disentanglement and, where available, the training accuracy of supervised labels (often only one label of the two facets of interest available).
In these exploratory hyperparameter sweeps, we observed the following hyperparameter patterns that generalise across all datasets: 

We noticed that results are stable w.r.t. to a large set of hyperparameters and ranges of possible values. 
In particular, this applies to batch size, learning rate, the number of batches used during fade-in, and to some degree the number of clusters in both facets.
However, we noticed that some hyperparameters must be set rather carefully to achieve strong disentanglement between facets. 
In particular, we find that the style/colour facet's latent dimension must be rather precisely set to a narrow range of values yielding strong disentanglement of facets: 
For MNIST, $\dim(\v z_1)$ has to be around 5.
For SVHN, $\dim(\v z_2)$ has to be around 5.
For 3DShapes, $\dim(\v z_2)$ has to be around 2.

\textbf{Hyperparameters of the reported results. } 
In the following, we report the hyperparameters for training our models reported and presented in Section~\ref{sec:Experiments} (note that the hyperparameters for the models trained on the two different 3DShapes configurations are the same). 
In Table~\ref{tab:hyperparameters_final_models}, we report chosen values of scalar hyperparameters.
For full details on each of these hyperparameters, we refer to our code and in particular the help message of the respective command line arguments in the training script.

\begin{table}[h!]
\caption{Scalar hyperparameters and design choices for our three model configurations on MNIST, 3DShapes and SVHN, with results presented in Section~\ref{sec:Experiments} and Appendix~\ref{app:Additional experimental results}.}
\medbreak
\centering
\begin{tabular}{cccc} \toprule
 & MNIST & 3DShapes & SVHN \\ \midrule
Batch size & 512 & 150 & 150 \\
Learning rate  &  0.0005 & 0.0003 & 0.0005 \\
Latent dimension of the first facet, $\dim(\v z_1)$  & 5 & 20 & 22 \\
Number of clusters in the first facet, $\dim(c_1)$ & 25 & 60 & 200 \\
Latent dimension of the second facet, $\dim(\v z_2)$ & 5 & 2 & 7 \\
Number of clusters in the second facet, $\dim(c_2)$  & 25 & 20 & 50 \\
Number of training batches for each fade-in  & 15000 & 2000  & 15000 \\
Likelihood model for $p(\v x | \vv z)$ & Bernoulli & Gaussian & Gaussian \\
Standard deviation of $p(\v x | \vv z)$ componentwise (if Gaussian) & N/A & 0.6 & 0.3 \\
Data dependent initialisation for $g(\v x; \phi)$ and $f(\vv z; \theta)$ & No & Yes & Yes \\
Covariance structure of $p(\v z_j | c_j)$  & diagonal  & full  & full \\
Fix $\bm{\pi}_j$  & No  &  Yes &  Yes \\
Diagonal entries during initialisation for covariance of $p(\v z_j | c_j)$ & 0.01 & 0.01 & 0.01 \\
Activation function  & ReLU  & ELU & ELU \\  \bottomrule
\end{tabular}
\label{tab:hyperparameters_final_models}
\end{table}

\textbf{Neural architectures.} Following up Appendix~\ref{app:Neural architectures and Variational Ladder Autoencoder}, we here provide the detailed initialisation of hidden layers of our neural architectures.

We first discuss the fully-connected ladder architecture which we use to train MFCVAE on MNIST, with results presented in Section~\ref{sec:Experiments} and Appendix~\ref{app:Additional experimental results}.
We initialise the VLAE architecture as detailed in Table~\ref{tab:fc_vlae_architecture}.

\begin{table}[h!]
\caption{Details of the fully-connected ladder architecture for our model trained on MNIST and reported in Section~\ref{sec:Experiments} and Appendix~\ref{app:Additional experimental results}.}
\medbreak
\centering
\begin{tabular}{cc} \toprule
Recognition Network & Generative Network \\ \midrule
$b_1^{enc}$: $\dim(\v x) \times 500$ linear layer & $r_2^{enc}$: $\dim(\v z_2) \times 2000$ linear layer \\ 
ReLU activation & ReLU activation \\
$r_1^{enc}$: $500 \times (2\cdot \dim(\v z_1))$ linear layer & $b_2^{enc}$: $2000 \times 500$ linear layer \\
& ReLU activation\\
$b_2^{enc}$: $500 \times 2000$ linear layer & $r_1^{enc}$: $\dim(\v z_1) \times 500$ linear layer \\
ReLU activation & ReLU activation\\
$r_2^{enc}$: $2000 \times (2\cdot \dim(\v z_2))$ linear layer & $b_1^{enc}$: $500 \times \dim(\v x)$ linear layer\\
 & Sigmoid activation\\ \bottomrule
\end{tabular}
\label{tab:fc_vlae_architecture}
\end{table}

Next, we describe the convolutional ladder architecture which we use to train MFCVAE on 3DShapes and SVHN, with results presented in Section~\ref{sec:Experiments} and Appendix~\ref{app:Additional experimental results}.
We initialise the VLAE architecture as detailed in Table~\ref{tab:conv_vlae_architecture}.

\begin{table}[h!]
\caption{Details of the convolutional ladder architecture trained on 3DShapes and SVHN.
Conv2d is the 2D convolutional operation, and ConvTranspose2d is the 2D transposed convolutional operation.
We implement both operations using the \texttt{torch.nn} package in PyTorch.
For both operations, the four numbers represent output channels, input channels, kernel size (height) and kernel size (width) respectively ($C_{out}, C_{in}, H, W$).
Convolutional operations marked with (*) have stride 1 to ensure valid dimensions. All remaining convolutional operations have stride 2.
For experimental results of models from this architecture, see Section~\ref{sec:Experiments} and Appendix~\ref{app:Additional experimental results}.}
\medbreak
\centering
\begin{tabular}{cc} \toprule
Recognition Network & Generative Network \\ \midrule
$b_1^{enc}$: $64 \times \dim(\v x) \times 4 \times 4$ Conv2d &  $r_2^{enc}$: $\dim(\v z_2) \times 16384$ linear layer  \\
ELU activation, batch norm & ELU activation, batch norm \\
$r_1^{enc}$: $64 \times 64 \times 4 \times 4$ Conv2d  & $b_2^{enc}$: $128 \times 256 \times 4 \times 4$ ConvTranspose2d  \\
ELU activation, batch norm & ELU activation, batch norm \\
$64 \times 64 \times 4 \times 4$ Conv2d (*)   & $64 \times 128 \times 4 \times 4$ ConvTranspose2d (*)  \\
ELU activation, batch norm  & ELU activation, batch norm \\
$1024 \times (2\cdot \dim(\v z_1))$ linear layer             & $r_1^{enc}$: $\dim(\v z_1) \times 16384$ linear layer  \\
$b_2^{enc}$: $128 \times 64 \times 4 \times 4$ Conv2d                             & ELU activation, batch norm \\
ELU activation, batch norm      & $b_1^{enc}$: $\dim(\v x) \times 128 \times 4 \times 4$ ConvTranspose2d  \\
$r_2^{enc}$: $128 \times 128 \times 4 \times 4$ Conv2d                              & ELU activation, batch norm \\ 
ELU activation, batch norm            & Sigmoid activation (only for SVHN) \\
$256 \times 128 \times 4 \times 4$ Conv2d                      &  \\ 
ELU activation, batch norm         &   \\
$3136 \times (2\cdot \dim(\v z_2))$ linear layer    & \\  \bottomrule
\end{tabular}
\label{tab:conv_vlae_architecture}
\end{table}

Lastly, in Table~\ref{tab:shared_architecture}, we provide details on the shared architecture for MNIST training, with its results presented in Appendix~\ref{app:stability}.

\begin{table}[h!]
\caption{Details of the shared architecture trained on MNIST. For its results, see Appendix~\ref{app:stability}.}
\medbreak
\centering
\begin{tabular}{cc} \toprule
Recognition Network & Generative Network \\ \midrule
$s^{enc}$: $\dim(\v x) \times 500$ linear layer & $t_1^{dec}$: $ \dim(\v z_1) \times 2000$ linear layer  \\
ReLU activation & ReLU activation \\
$500 \times 2000$ linear layer & $t_2^{dec}$: $ \dim(\v z_2) \times 2000$ linear layer \\
ReLU activation & ReLU activation \\
$t_1^{enc}$: $2000\times (2\cdot \dim(\v z_1))$ linear layer & $s^{dec}$: $2000 \times 500$ linear layer \\
 & ReLU activation \\
$t_2^{enc}$: $2000\times (2\cdot \dim(\v z_2))$ linear layer & $500 \times \dim(\v x)$ linear layer \\  
& Sigmoid activation \\ \bottomrule
\end{tabular}
\label{tab:shared_architecture}
\end{table}

\textbf{Initialisation. }
In MFCVAE, all parameters in the deep neural networks $g(\v x; \phi)$ and $f(\vv z; \theta)$ are initialised using either Glorot normal initialisation \cite{glorot_init} for MNIST, and using a data-dependent initialisation method~\cite{data_init} for 3DShapes and SVHN. 
The idea of the data-dependent initialisation is to set the parameters in the deep neural network such that all layers in the network are encouraged to train at roughly the same rate, with the aim of avoiding vanishing or exploding gradients.
Data-dependent initialisation is particularly useful for convolutional neural networks. 
Therefore, we use it as a starting point for model training of 3DShapes and SVHN datasets, where convolutional neural networks are used.

For the parameters of the MoGs, we initialise them facet-wise as follows:
\begin{itemize}[noitemsep,topsep=0pt,parsep=0pt,partopsep=0pt,leftmargin=*]
    \item Mixing weights $\bm{\pi}_j$ are initialised to be $1/K_j$ component-wise.  %
    \item For each $k_j \in \{1,...,K_j\}$, means $\bm{\mu}_{j, k_j}$ of the Gaussians are initialised with the means on an MoG (implemented with the package \texttt{sklearn.mixture.GaussianMixture}) fitted on a dataset consisting of latent observations $\v z_j$ sampled from $q(\v z_j | \v x)$, where $\v x$ are all batches from the corresponding training dataset of MFCVAE.
    Note that encoder parameterising $q(\v z_j | \v x)$ is not trained at this point.
    The aim is to encourage a smoother and faster learning of the multiple MoG prior by starting from an MoG fitted to the initial state instead of one initialised at random.
    \item Covariance matrices $\Sigma_{j, k_j}$ are not initialised by the outputs from the fitted MoG above.
    Instead, a fixed value is assigned to all diagonal entries of the covariance matrices.
    The fixed value is the same across all clusters in all facets, which is a hyperparameter set to be much larger than the output variances from the trained fitted MoGs.
    This initialisation is favoured because at the start of the training, the MoGs do not contain much useful information as they were fitted on latent observations obtained from a randomly initialised model.
    An overly small variance at the start of the training could result in the model being stuck in a local optimum prematurely.
\end{itemize}
We note that we found these prior initialisations to be reasonable choices, but have not extensively explored alternatives.

\textbf{Potential negative societal impacts. } 
Our work is mainly of theoretical and methodological nature, thus, we do not have a direct application of our model on which it could cause immediate negative societal impacts. 
Since we provide a general clustering algorithm, MFCVAE can be used in malicious or potentially unethical ways for any clustering task at hand, suited particularly for high-dimensional data.
Our model does not account for fairness of clusters, which should be taken particular care of when dealing with data from human subjects.
As our model is a generative model by nature, we mention the possibility to abuse our model for the generation of deepfakes for disinformation.
Further, we have not investigated the vulnerability of our model to aversarial attacks, which might cause a significant security problem when applied in front-end applications and tasks.

\textbf{Compute resources. }
We had access to two GPU clusters: 
One internal cluster with 12 Nvidia GeForce GTX 1080 graphic cards each with 8GB of memory that was shared with many other users (access for 5 months ongoing), and one Microsoft Azure cluster with initially two, later four Nvidia Tesla M60 each with 8GB of memory that was used only by the authors (access for approximately 4 months). 

To train one model on each of the 4 dataset (configurations) on the Azure cluster detailed above, it takes approximately 31 min for MNIST, 36 min for 3DShapes (in both configurations), and 5h 54 min for SVHN.
Since we performed a seed sweep over 10 runs on each of the 4 dataset (configurations), the total computational time to reproduce the main results in this paper is (31 min + 2 $\cdot$ 36 min + 354 min) $\cdot$ 10 = 4570 min $\approx$ 76 hours of GPU time.

\subsection{Differences between VaDE and (\texorpdfstring{$J=1$}{J=1}) MFCVAE} 
\label{app:diff_to_vade}

In the following, we describe the (pre-)training algorithm of VaDE~\cite{vade} and compare it with the training of MFCVAE with one facet ($J=1$). 
Throughout, VaDE uses a symmetric, shared encoder and decoder architecture (see Appendix~\ref{app:Neural architectures and Variational Ladder Autoencoder}).
VaDE has the following two differences compared to MFCVAE ($J=1$): 
\begin{itemize}[noitemsep,topsep=0pt,parsep=0pt,partopsep=0pt,leftmargin=*]
\item \textit{Stacked Denoising Autoencoder (SAE)}~\cite{vincent2010stacked} pretraining: 
VaDE uses a two-stage SAE deterministic pretraining algorithm which is in detail described in~\cite{dec} to find good initialisations for the parameters $\theta$ of the decoder and $\phi$ of the encoder.
During the first stage, denoising autoencoders, which are two-layer neural networks of symmetric shapes, are trained using a least-squares reconstruction loss (i.e. deterministically as a plain autoencoder). 
In every iteration of this first stage of the pretraining routine, the outermost layers (at the front of the encoder and the back of the decoder), which have been trained in previous iterations, are frozen, and the next denoising autoencoder towards the centre of the architecture is trained.
Then, in the second stage of training, the entire architecture, which has been trained in this sequential fashion, is fine-tuned, again using a deterministic reconstruction loss. 
For a detailed description of this pretraining algorithm, we refer to~\cite{dec}.
\newline
Once the pretraining routine is complete, VaDE uses the weights $\theta$ and $\phi$ obtained as the initialisation of regular VaDE training, maximizing the ELBO with Monte Carlo sampling.---
We note that MFCVAE does not require any SAE pretraining.
Instead, we either initialise these weights randomly (MNIST) or using data-dependent initialization (3DShapes and SVHN; details see Appendix~\ref{app:Implementation details and hyperparameters}).
\item VaDE restricts the covariance matrices $\Sigma_c$ of the conditional Gaussian distributions $p(\v{z} \mid c) = \mathcal{N} (\mu_c, \Sigma_c)$ to be \textit{diagonal}, i.e. each $p(\v{z} \mid c)$ is a product of $\dim(\v{z})$ independent univariate Gaussian distributions.
In contrast, MFCVAE allows $\Sigma_c$ to be \textit{full} and enables MFCVAE to express more complex (facet-wise) dependencies in the prior.
\end{itemize}
The SAE pretraining routine adds significant complexity to the training algorithm which MFCVAE does not require in order to obtain comparable performance.
Once the encoder and decoder are initialised, both VaDE and MFCVAE use a Gaussian-Mixture model to initialise the prior $p(\v{z})$ and its parameters $\pi, \mu_c$ and $\Sigma_c$, fitted with an EM-algorithm.
We note that in the single-facet case, training MFCVAE simplifies to only one progressive step, i.e. the main training stage of VaDE is equivalent to that of MFCVAE (but with different initialisation).

\clearpage

\section{Additional experimental results}
\label{app:Additional experimental results}

\subsection{On the stability of training}
\label{app:stability}

In this appendix, we analyse the stability of MFCVAE with respect to different neural architectures and discuss the stability of deep clustering models in this context.

A natural starting point for a neural architecture of MFCVAE is a shared encoder and decoder architecture (as detailed in Appendix~\ref{app:Neural architectures and Variational Ladder Autoencoder}), which was previously used in VaDE \cite{vade} and other deep clustering models.
When using this architecture, we observe a high variation between runs which only vary in their (partly) random initialisation (determined by the random seed; see Fig.~\ref{fig:stability} [Top left]). 
However, when being lucky, drawing the right lottery ticket, this architecture can yield excellent disentanglement of facets, just like our progressively trained VLAE architecture can do (but in a stable manner).
We visualise input examples assigned to clusters from such a lucky run (which is not part of Fig.~\ref{fig:stability} [Top left]) in Fig.~\ref{fig:serene_cluster_examples}.
We point out that this run is cherry-picked from over 100 runs with different random initialisation. 
We could not produce stable results with a shared encoder and decoder architecture, neither for $J=1$ nor $J>1$.
Given these stability issues of deep clustering models, it is not only crucial to address them (which we do next), but this even more highlights the importance of providing error bars, and that picking the best run of many (as has been common practice among several deep clustering papers) is particularly here not acceptable.

To overcome these stability issues, we used a combination of a progressive training algorithm and a VLAE architecture.
We found that only using a VLAE architecture (Fig.~\ref{fig:stability} [Top right]) significantly improves the performance of disentangling facets (here only measured in terms of accuracy w.r.t. the supervised label), but is not sufficient to fully stabilise the runs over different random seeds.
Only by additionally using a progressive training schedule (Fig.~\ref{fig:stability} [Bottom]), we achieve very good disentanglement of facets and at the same time stable performance.
While we here report this for one configuration of hyperparameters only and on MNIST, we made this observation throughout all datasets and in diverse hyperparameter settings.

\begin{figure}[p]
    \centering
    \includegraphics[width=.47\linewidth]{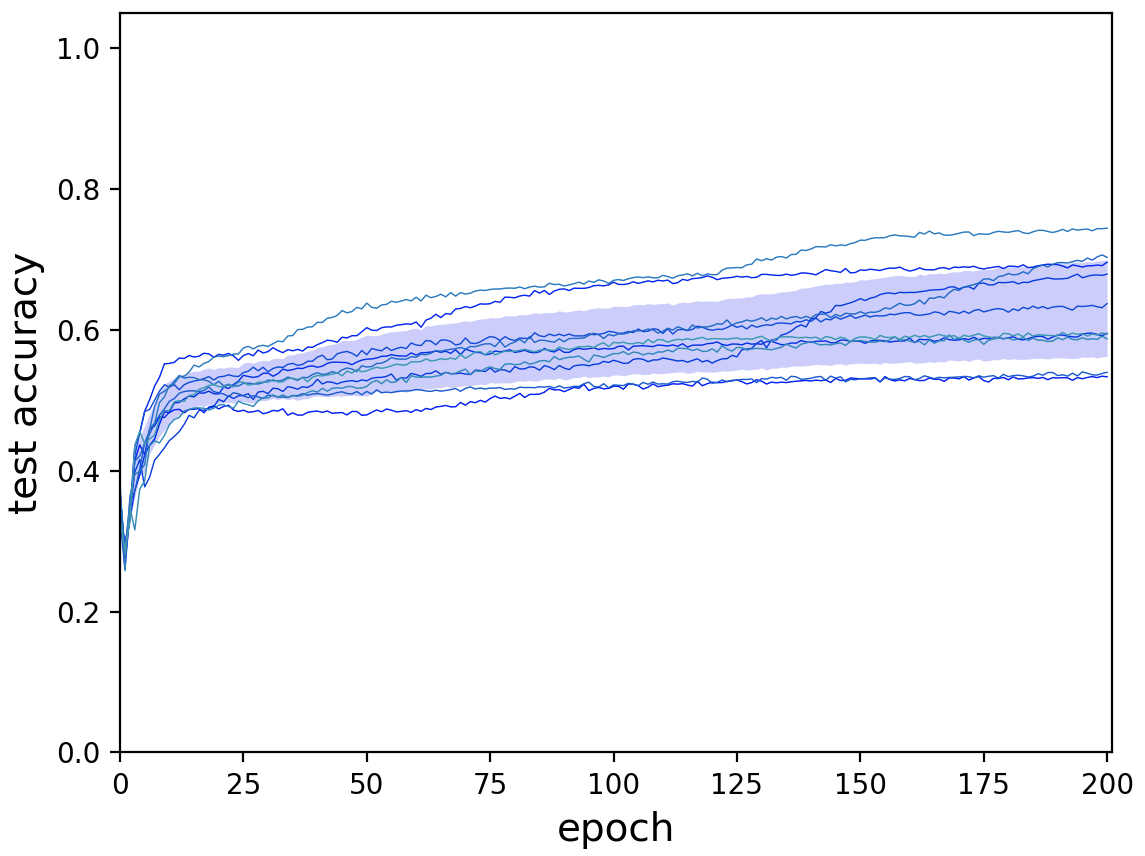}
    \includegraphics[width=.47\linewidth]{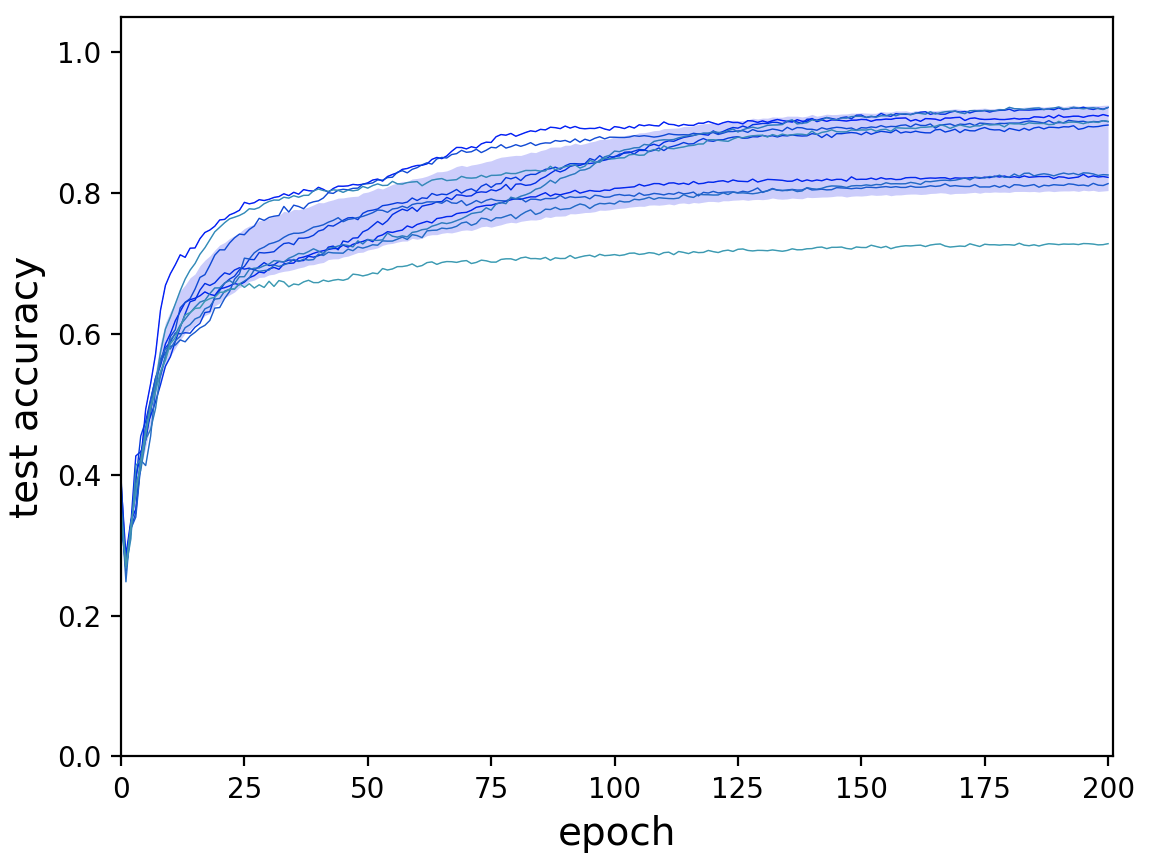}
    \includegraphics[width=.47\linewidth]{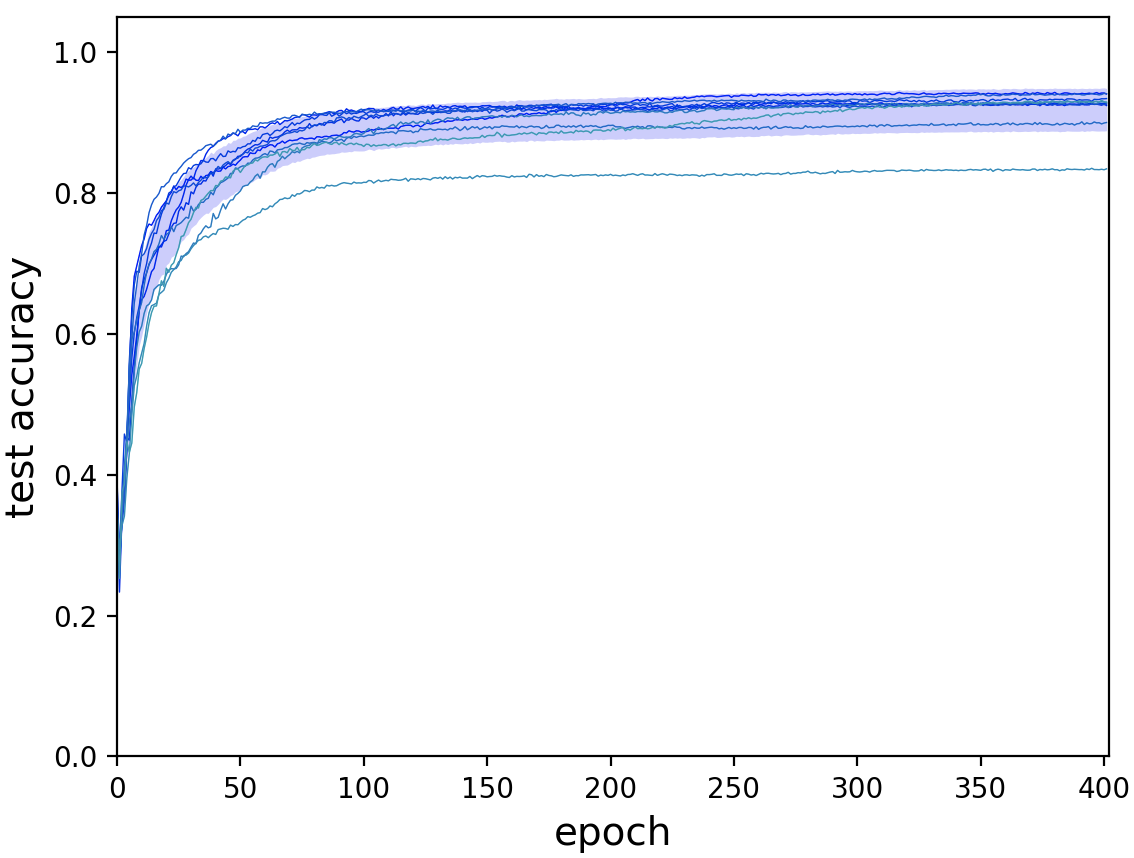}
    \caption{Test accuracy over training epochs for models trained on MNIST for three different architectures.
    Ten runs are performed for each architecture.
    Each run is illustrated by one curve.
    The blue shade is bounded by the mean accuracy plus and minus one standard deviation across the ten runs.
    [Top left] Shared architecture [Top right] VLAE architecture \textit{without} progressive training schedule [Bottom] VLAE architecture \textit{with} progressive training schedule}
    \label{fig:stability}
\end{figure}

\begin{figure}[p]
    \centering
    \includegraphics[width=.8\linewidth]{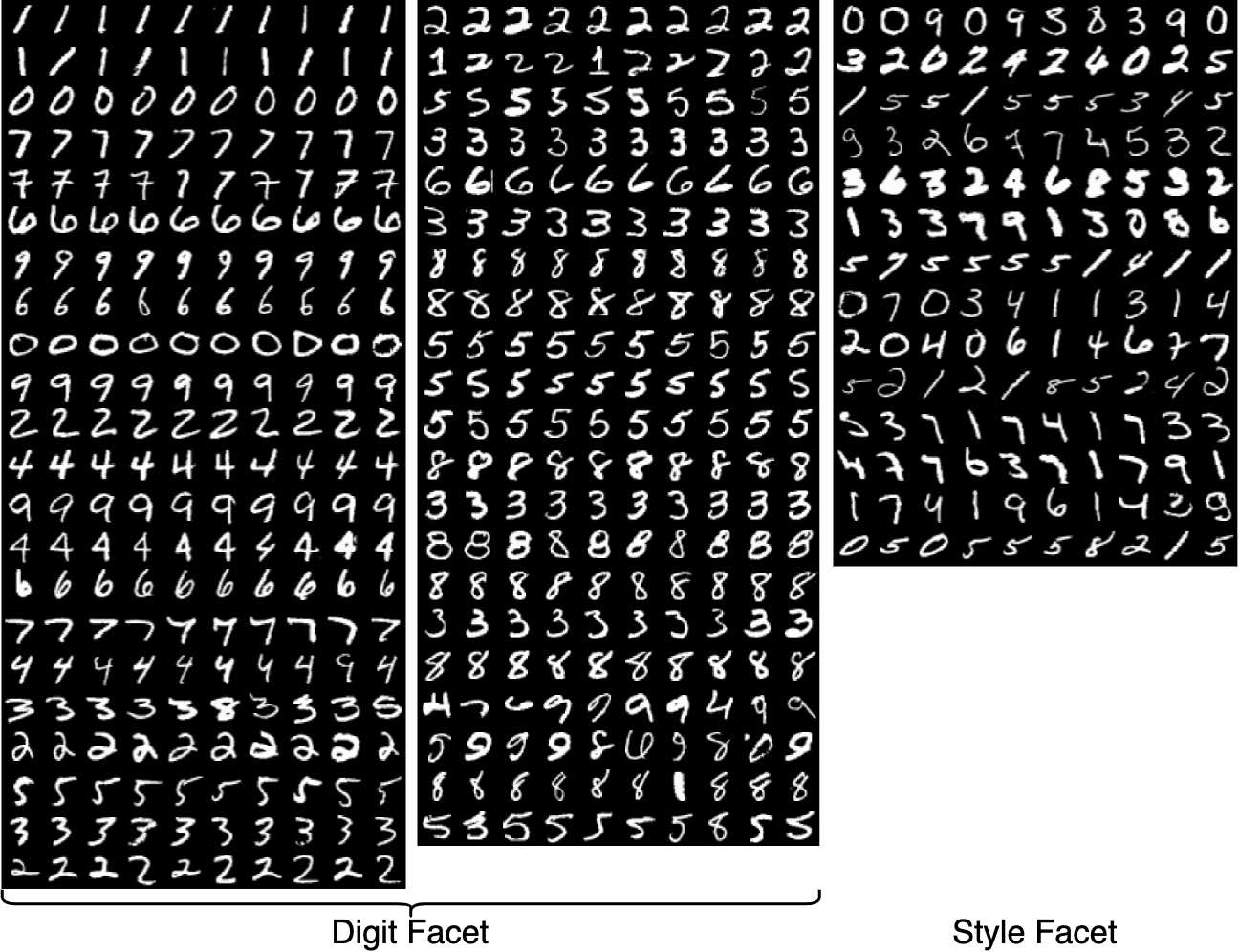}
    \caption{Input examples of a cherry-picked MFCVAE model with a shared architecture, with a lucky lottery ticket drawn as the random initialisation, and two-facets ($J=2$), trained on MNIST.
    Sorting is performed in the same way as in Fig.~\ref{fig:orth_cluster_examples}.}
    \label{fig:serene_cluster_examples}
\end{figure}

\clearpage

\subsection{Generalisation between training and test set}
\label{app:generalisation}

An important question in an exploratory setting is to what degree clustering results generalise from a training to a test set. 
In supervised machine learning, it is common to see a generalisation gap: 
Performance of a model is generally better on the training set than on the test set, often because the model overfits on the training set, and the goal is to minimise this gap, while actually being interested in test set performance. 
Perhaps surprisingly, we observe that MFCVAE has a negligible generalisation gap, i.e. typically performs almost equally well on the training and test set. 

To analyse this, we use the exact experimental setup of our main results as detailed in Appendix~\ref{app:Implementation details and hyperparameters} with $J=2$ facets, training on MNIST. 
Fig.~\ref{fig:generalisation} shows the unsupervised clustering accuracy over training epochs, evaluated both on the training set (left) and the test set (right).
When evaluating unsupervised clustering accuracy after the model is fully trained, a mean training accuracy of $91.85\% \pm 3.09 \%$ is achieved across ten runs, which is slightly lower than the mean test accuracy of $92.02\% \pm 3.02\%$ presented in Section \ref{sec:Generative classification}.
Thus, while we observe small differences between performance on training and test set, also when considering individual runs, these are not significant. 
In summary, MFCVAE generalises well between training and test set.

\begin{figure}[h!]
    \centering
    \includegraphics[width=.47\linewidth]{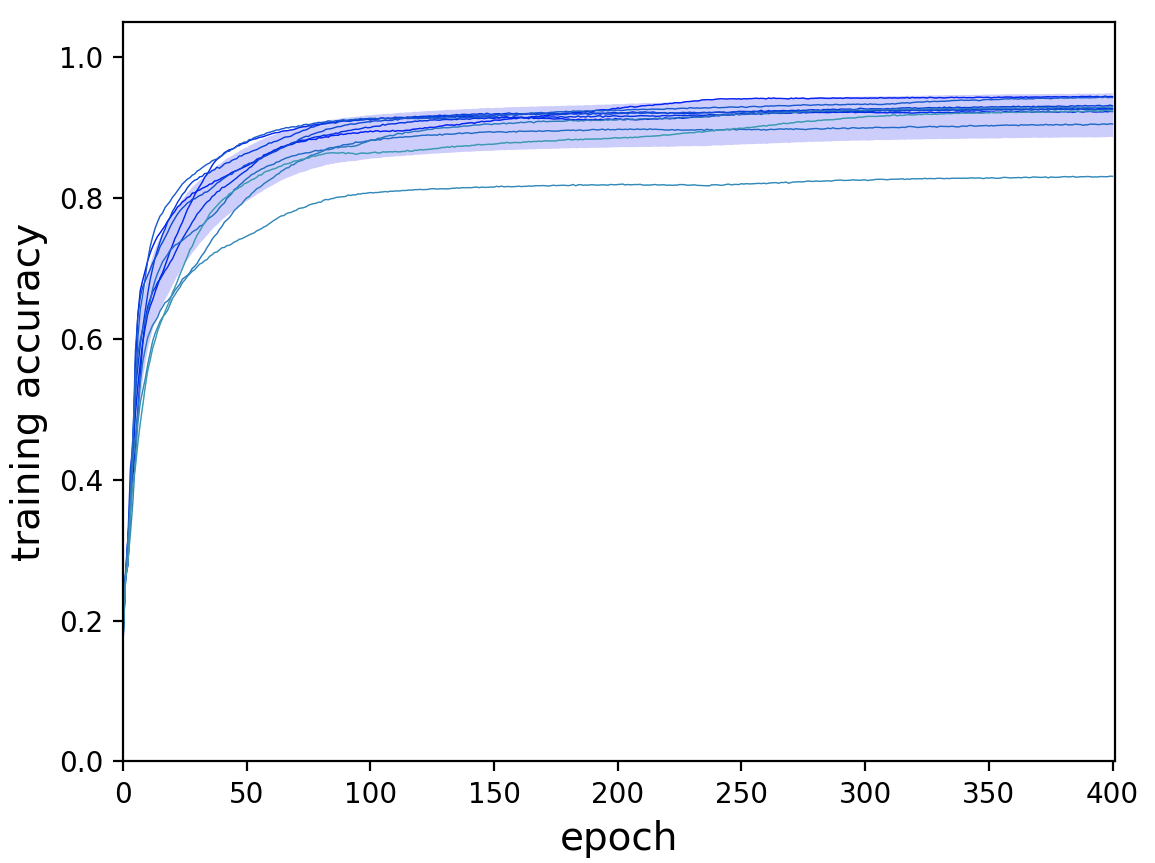}
    \includegraphics[width=.46\linewidth]{Figures/Appendix/test_acc_MNIST_final_sweep.png}
    \caption{Unsupervised clustering accuracy over training epochs for MFCVAE trained on (the training set of) MNIST as detailed in Appendix ~\ref{app:Implementation details and hyperparameters}, and evaluated on the training set [Left] and the test set [Right], respectively.
    Ten runs are performed, with each run being illustrated by one curve on the left and right, respectively.
    The blue shade is bounded by the mean accuracy plus and minus one standard deviation across the ten runs.}
    \label{fig:generalisation}
\end{figure}

\subsection{Discovering a multi-facet structure}
\label{app:Discovering a multi-facet structure}

This appendix provides the complete results of Section~\ref{sec:Discovering a multi-facet structure}.
As before, we visualise input examples from the test set for clusters of MFCVAE with two-facets ($J=2$) trained on MNIST, 3DShapes and SVHN.
Here, we show all clusters of our results in Fig.~\ref{fig:orth_cluster_examples}, visualised in Figs.~\ref{fig:orth_clusters_full_mnist} to \ref{fig:orth_clusters_full_svhn}.
In all figures, inputs (columns) are sorted in decreasing order by their assignment probability $\mathrm{max}_{c_j}  \v{\pi}_j( c_j | q_\phi(\v z_j | \v x))$.

In particular, in Fig.~\ref{fig:orth_clusters_full_mnist}, we directly compare our model to the results shown in LTVAE~\cite{ltvae}, the model closest to ours in its attempt to learn a clustered latent space with multiple disentangled facets.
As can be seen, LTVAE struggles to separate data characteristics into separate facets (see Fig.~\ref{fig:orth_clusters_full_mnist} (b)). 
In particular, LTVAE learns digit class in both facets, i.e. this characteristic is not properly disentangled between facets. 
In comparison, MFCVAE better isolates the two characteristics, and does not learn digit class in the style facet.

\begin{figure}[h!]
    \centering
    \includegraphics[width=.8\linewidth]{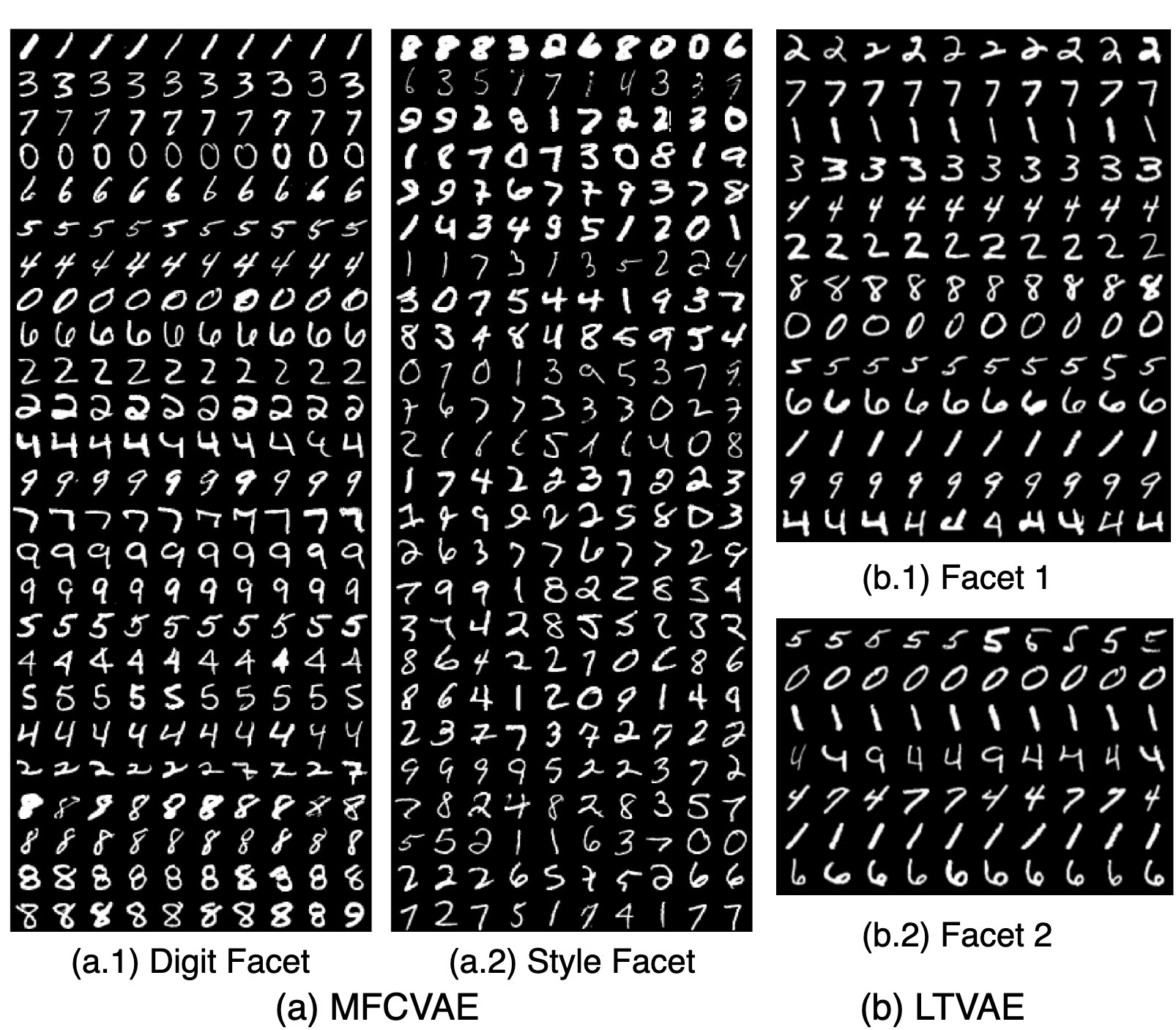}
    \caption{(a) Input examples for clusters of MFCVAE with two-facets ($J=2$) trained on MNIST. Rows and columns are sorted as in Fig.~\ref{fig:orth_cluster_examples}. (b) Input examples for clusters of LTVAE with two-facets, likewise trained on MNIST. 
    Plot is taken as reported in \cite{ltvae}, Fig.~5.}
    \label{fig:orth_clusters_full_mnist}
\end{figure}

\begin{figure}[h!]
    \centering
    \includegraphics[width=\linewidth]{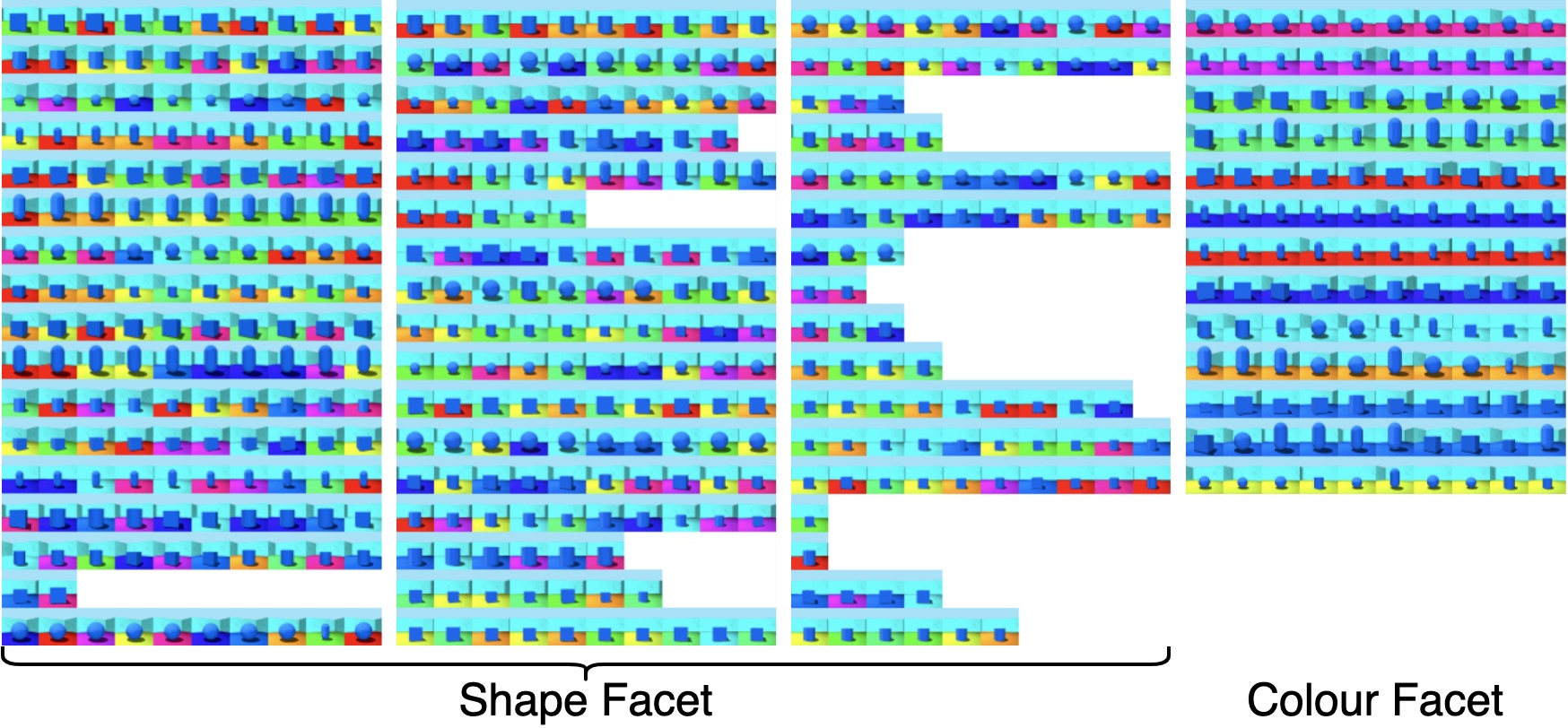}
    \caption{Input examples for clusters of MFCVAE with two-facets ($J=2$) trained on 3DShapes (configuration 1). Rows and columns are sorted as in Fig.~\ref{fig:orth_cluster_examples}.}
    \label{fig:orth_clusters_full_3dshapes_1}
\end{figure}

\begin{figure}[h!]
    \centering
    \includegraphics[width=\linewidth]{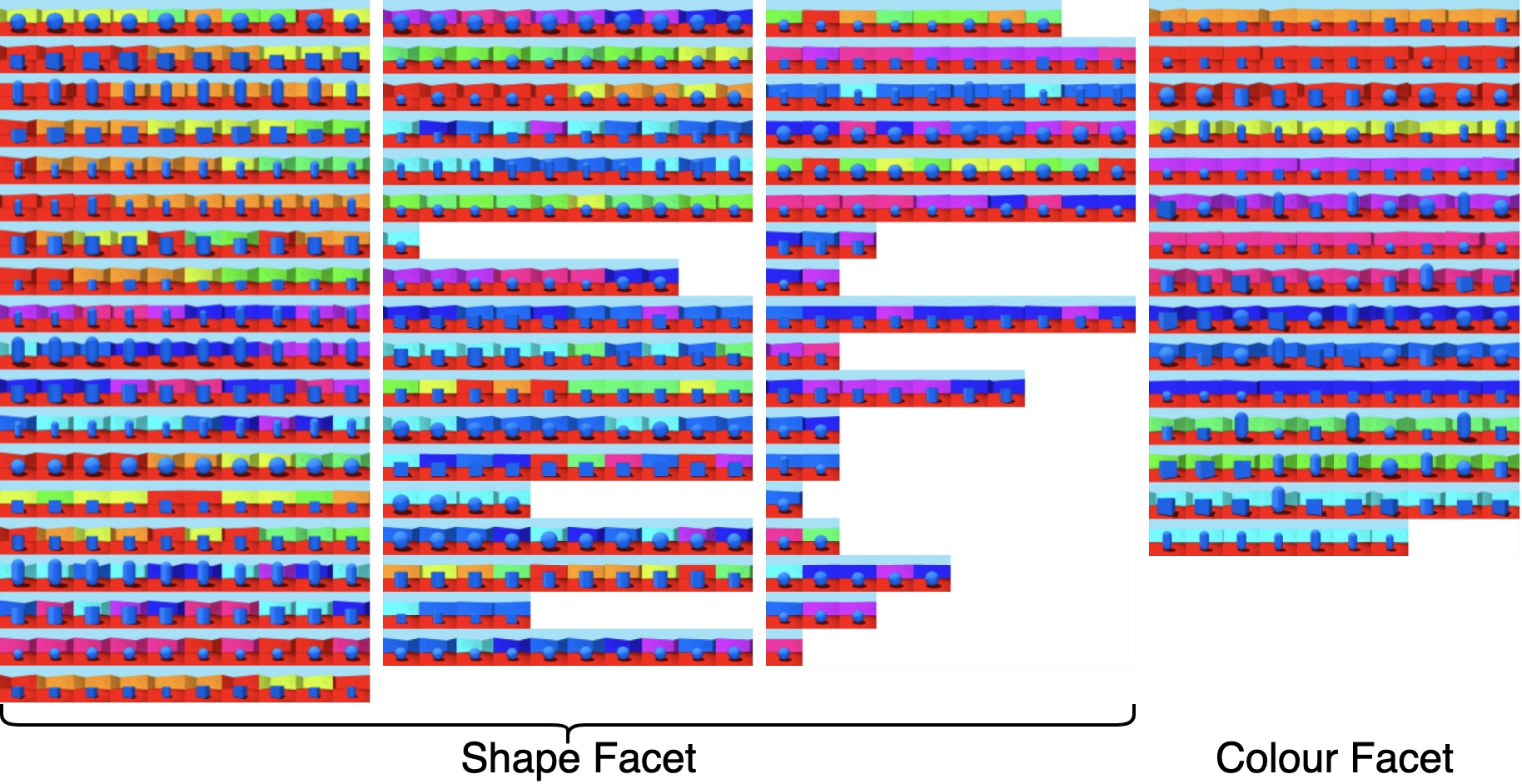}
    \caption{Input examples for clusters of MFCVAE with two-facets ($J=2$) trained on 3DShapes (configuration 2). Rows and columns are sorted as in Fig.~\ref{fig:orth_cluster_examples}.}
    \label{fig:orth_clusters_full_3dshapes_2}
\end{figure}

\begin{figure}[h!]
    \centering
    \includegraphics[width=\linewidth]{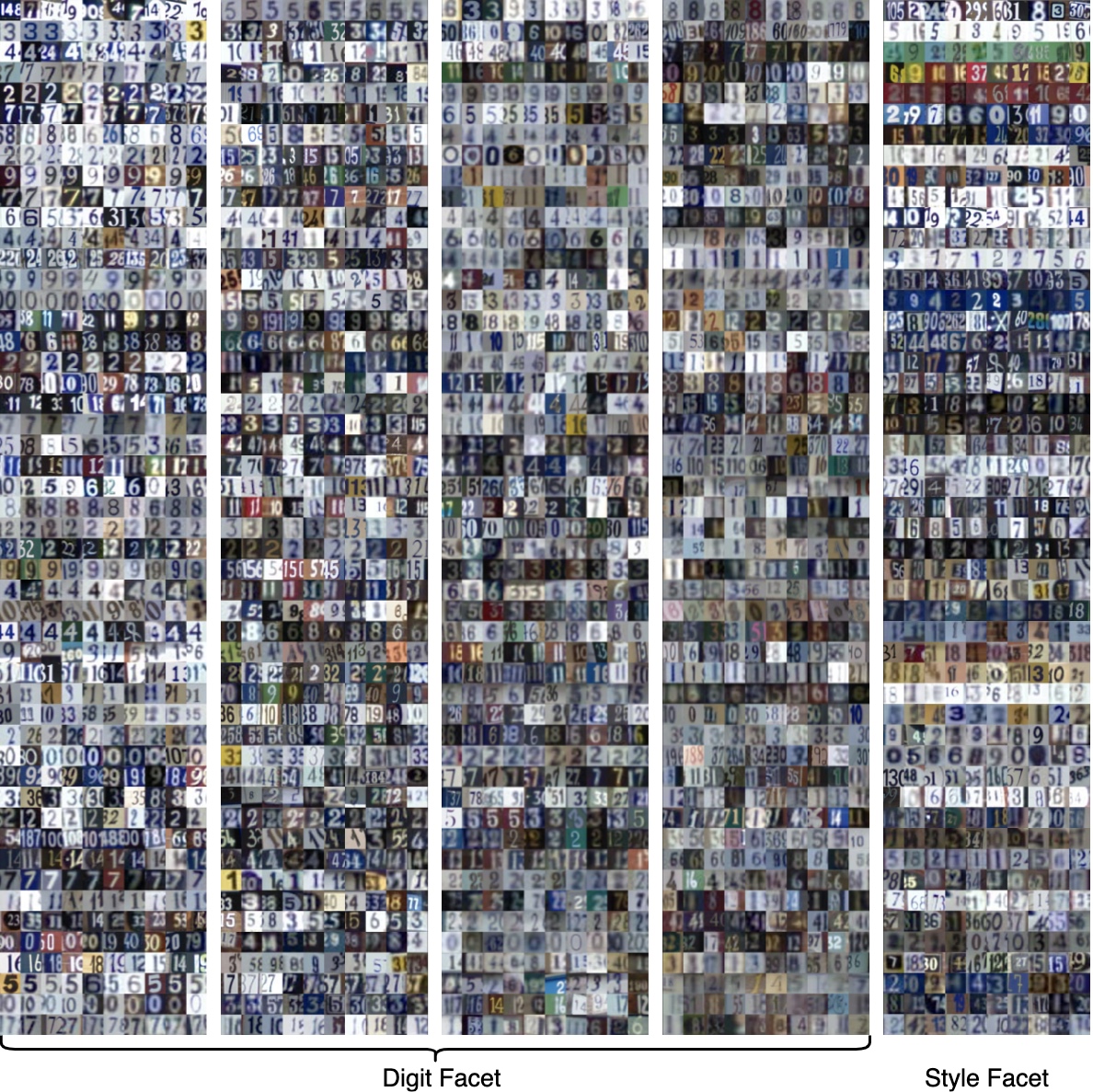}
    \caption{Input examples for clusters of MFCVAE with two-facets ($J=2$) trained on SVHN. Rows and columns are sorted as in Fig.~\ref{fig:orth_cluster_examples}.}
    \label{fig:orth_clusters_full_svhn}
\end{figure}

\clearpage
\subsection{Compositionality of facets}
\label{app:Compositionality of facets}

In this appendix, we provide further combinations of clusters of which the style facet is swapped and give a more rigorous explanation of the swapping procedure applied.

Let us have two input examples $\v{x}^{(1)}$ and $\v{x}^{(2)}$ assigned to two different style clusters according to Eq.~\eqref{eq:opt_q} (and typically two different digit clusters), i.e. $c_j^{(i)} = \mathrm{argmax}_{c_j}  \v{\pi}_j( c_j | q_\phi(\v z_j | \v x^{(i)}))$ and $c_j^{(1)} \neq c_j^{(2)}$, where $j=1$ for MNIST and $j=2$ for 3DShapes and SVHN. 
We can obtain the latent representations $\tilde{\v{z}}_j^{(i)}$ of input examples for both facets, taking the mode of $q_\phi(\v{z}_j | \v{x}^{(i)})$, respectively, which is parameterised via a forward pass.
Now, we swap the style/colour facet's latent representation ($\tilde{\v{z}}_1^{(i)}$ for MNIST, and $\tilde{\v{z}}_2^{(i)}$ for 3DShapes and SVHN) between the two inputs, while fixing the digit/shape facet's latent representation ($\tilde{\v{z}}_2^{(i)}$ for MNIST, and $\tilde{\v{z}}_1^{(i)}$ for 3DShapes and SVHN). 
Once the swapping is complete, we pass these latent representations through the decoder of our model to obtain reconstructions of our model from ``swapped style'' latent representations.
Note that this swapping operation is symmetric in the two-facet case, in the sense that the resulting two reconstructions are the same regardless of whether we swap $\tilde{\v{z}}_1$ or $\tilde{\v{z}}_2$.
Formally, we obtain reconstructions $\hat{\v{x}}^{(1)} = f(\{ \tilde{\v{z}}_1^{(1)}, \tilde{\v{z}}_2^{(2)} \}; {\theta})$ and $\hat{\v{x}}^{(2)} = f(\{ \tilde{\v{z}}_1^{(2)}, \tilde{\v{z}}_2^{(1)} \}; {\theta})$. 
It is such 4-tuples of inputs $\v{x}^{(1)}$ and $\v{x}^{(2)}$ and reconstructions $\hat{\v{x}}^{(1)}$ and $\hat{\v{x}}^{(2)}$ that we visualise. 

In Fig.~\ref{app_fig:compositionality}, we visualise further 4-tuples from all datasets (and configurations), in addition to the results presented in Section~\ref{sec:Compositionality of latent facets}.
For each dataset, the first 4-tuple consists of the first element of both input rows (left and right) and both reconstruction rows (left and right).
Note that we limit ourselves here to few digit/shape and colour/style cluster combinations. 
However, we note that for MNIST and 3DShapes, many other combinations could be found where a similar ``style swapping effect'' can be observed.
For the much richer dataset SVHN on which our model is less well fitted, while we can find cluster combinations where compositionality of facets works somewhat well (see Fig.~\ref{app_fig:compositionality} (d)), we can also find failure cases (see e.g. Fig.~\ref{app_fig:compositionality} (e)). 
Here, the style, represented as a white background, shall be composed with a white digit. 
We hypothesise that as this combination is particularly rare in the real world, the model struggles to reconstruct these latent combinations and as a consequence introduces undesired artefacts into the reconstructions.

\begin{figure}[h]
    \centering
    \includegraphics[width=\linewidth]{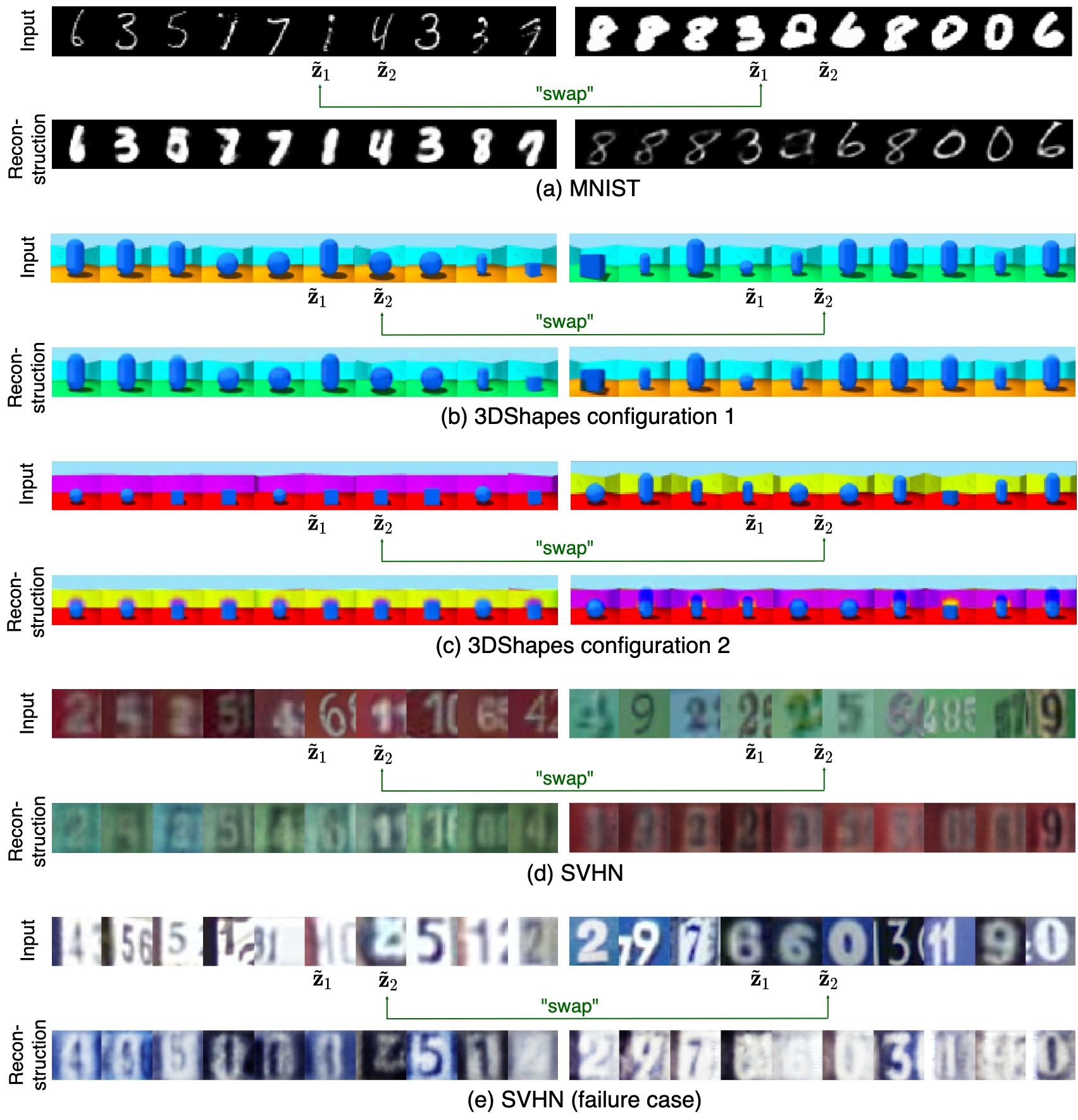}
    \caption{Input examples and reconstructions when swapping their style/colour facet's latent representation.}
    \label{app_fig:compositionality}
\end{figure}

\clearpage
\subsection{Generative, unsupervised classification}
\label{app:Generative, unsupervised classification}

To compare against assumed ground-truth clusterings imposed by the supervised class structure in our datasets, we report generative classification performance in terms of \textit{unsupervised clustering accuracy} on the test set. 
When the number of clusters in a facet is equal to the number of ground-truth classes compared against, one can use the Hungarian algorithm to find the optimal 1-to-1 mapping between clusters and classes~\cite{vade, kuhn1955hungarian}.
When the number of clusters in a facet is greater than the number of ground-truth classes compared against, as is common, one can simply assign each cluster in a facet to the most frequent ground-truth class found within that cluster~\cite{willetts2019disentangling}.

In Figs.~\ref{app_fig:test_acc_MNIST_SVHN} to \ref{app_fig:test_acc_3DShapes_configuration_2}, we plot generative classification performance on the test set measured in terms of unsupervised clustering accuracy over the training epochs. 
The blue shade is bounded by the mean accuracy plus and minus one standard deviation over the ten runs.
The sudden jumps of accuracy after $\approx 100$ epochs are caused by the progressive training algorithm which loops in a new facet at that point.

\clearpage

\begin{figure}[p]
    \centering
    \includegraphics[width=0.47\linewidth]{Figures/Appendix/test_acc_MNIST_final_sweep.png}
    \includegraphics[width=0.47\linewidth]{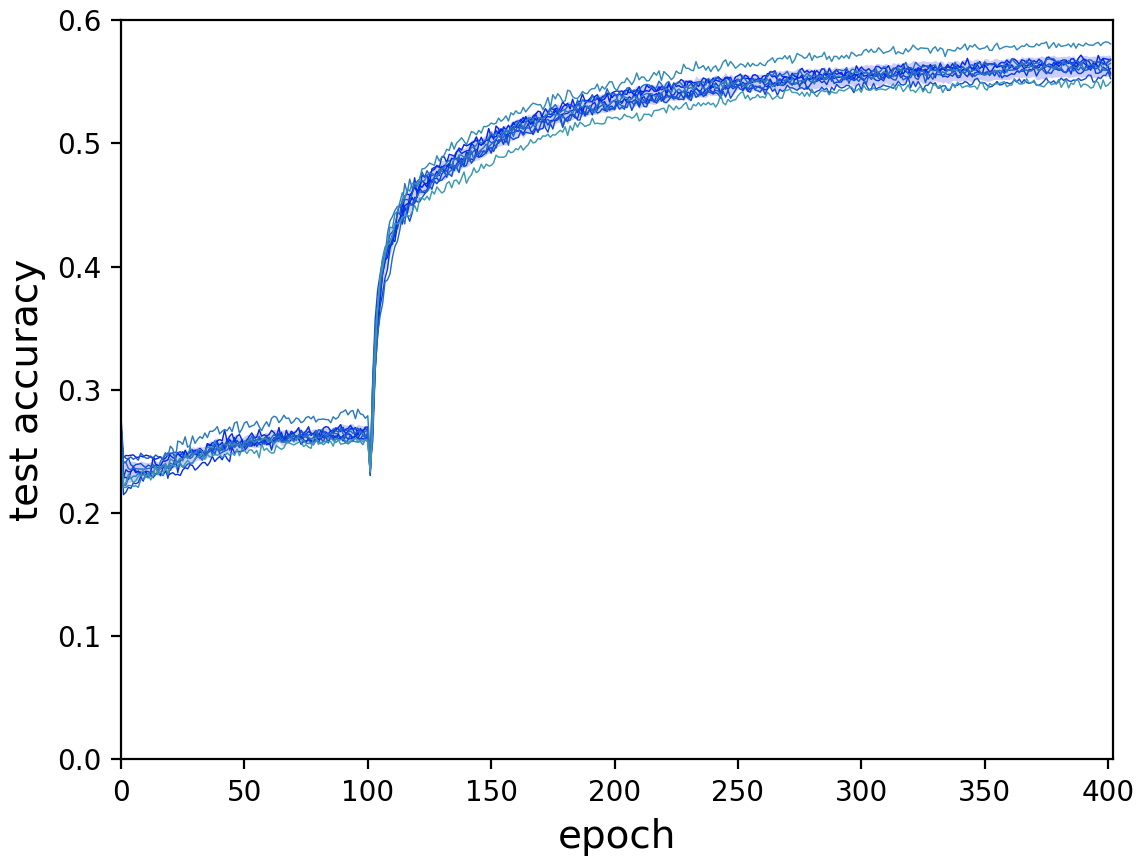}
    \caption{Unsupervised clustering accuracy on the test set w.r.t. the supervised label, when trained on [Left] MNIST, and [Right] SVHN.}
    \label{app_fig:test_acc_MNIST_SVHN}
\end{figure}

\begin{figure}[p]
    \centering
    \includegraphics[width=0.47\linewidth]{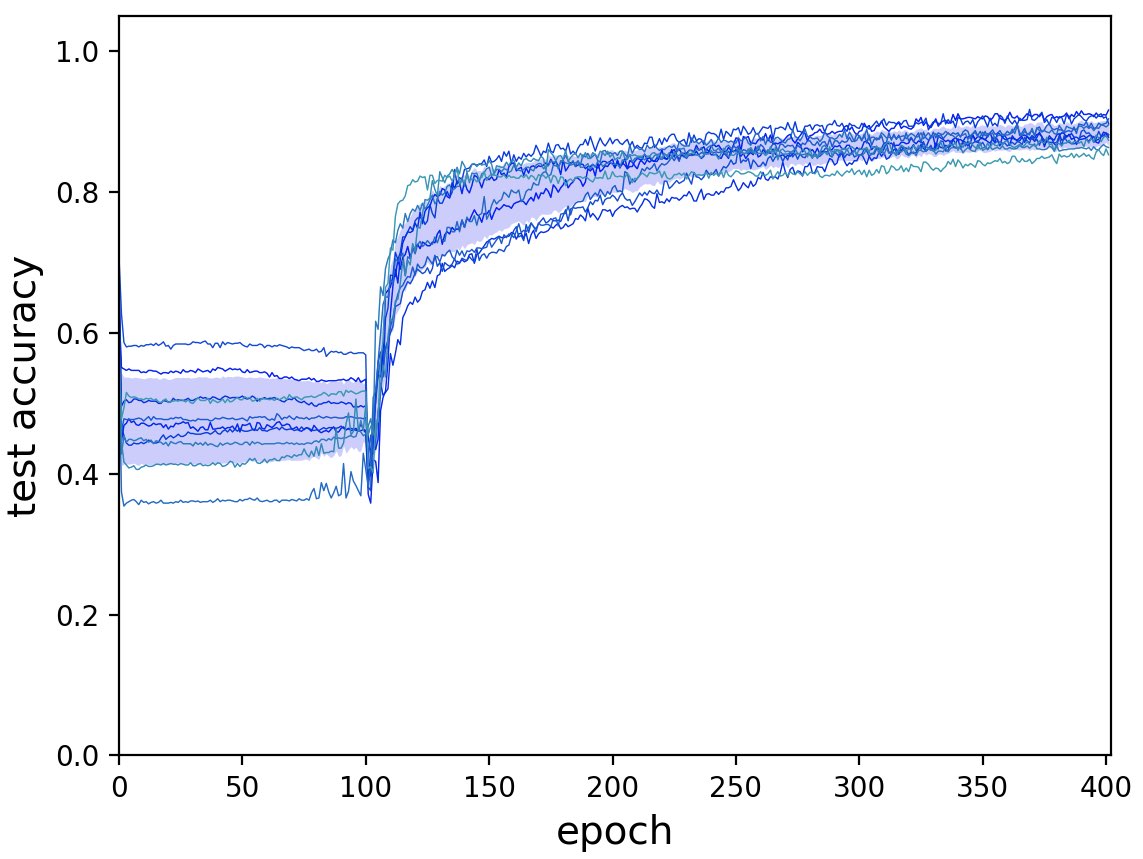}
    \includegraphics[width=0.47\linewidth]{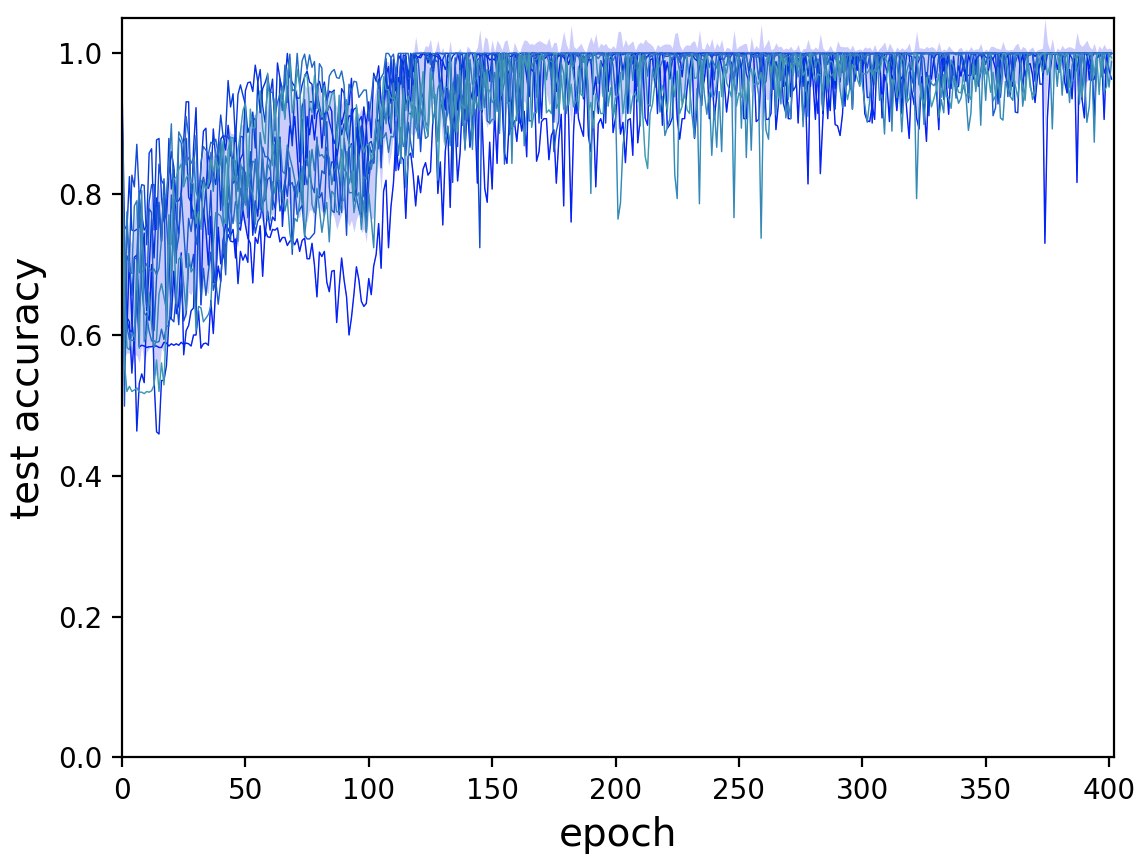}
    \caption{Unsupervised clustering accuracy on the test set for [Left] object shape and [Right] floor colour, when trained on 3DShapes (config. 1).}
    \label{app_fig:test_acc_3DShapes_configuration_1}
\end{figure}

\begin{figure}[p]
    \centering
    \includegraphics[width=0.47\linewidth]{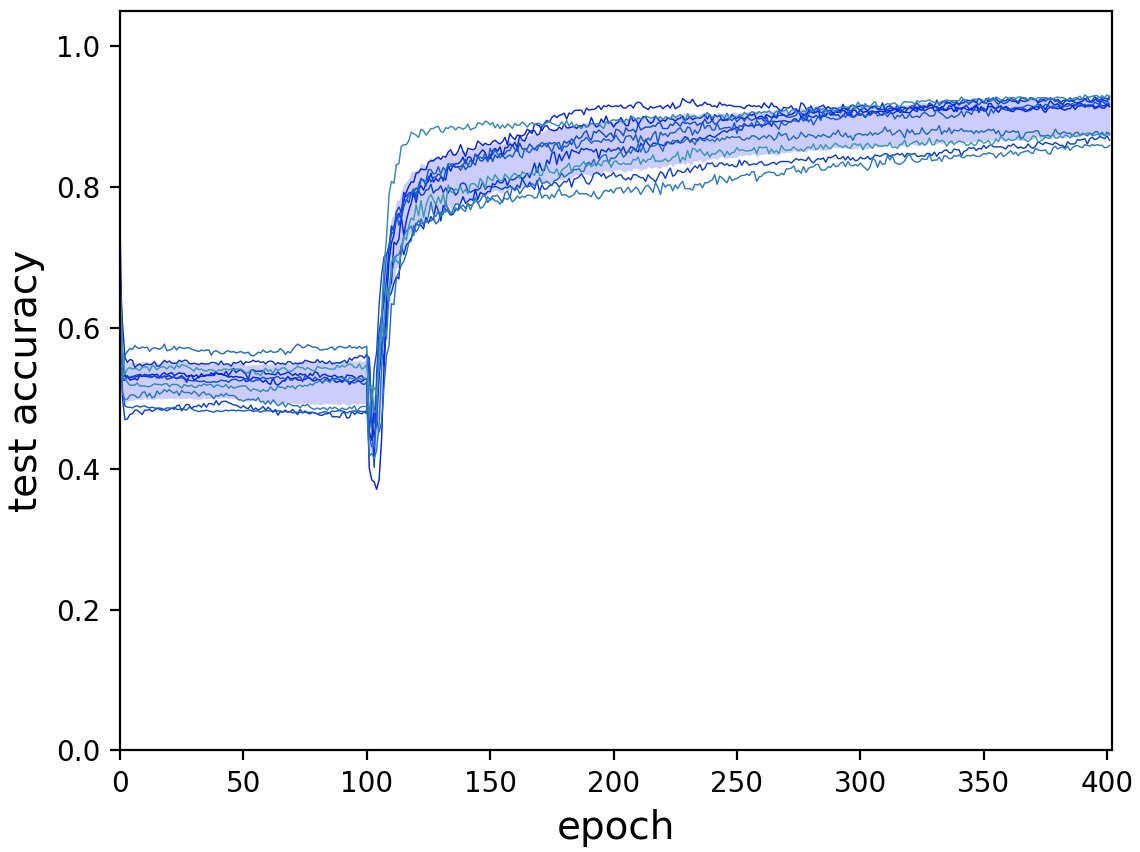}
    \includegraphics[width=0.47\linewidth]{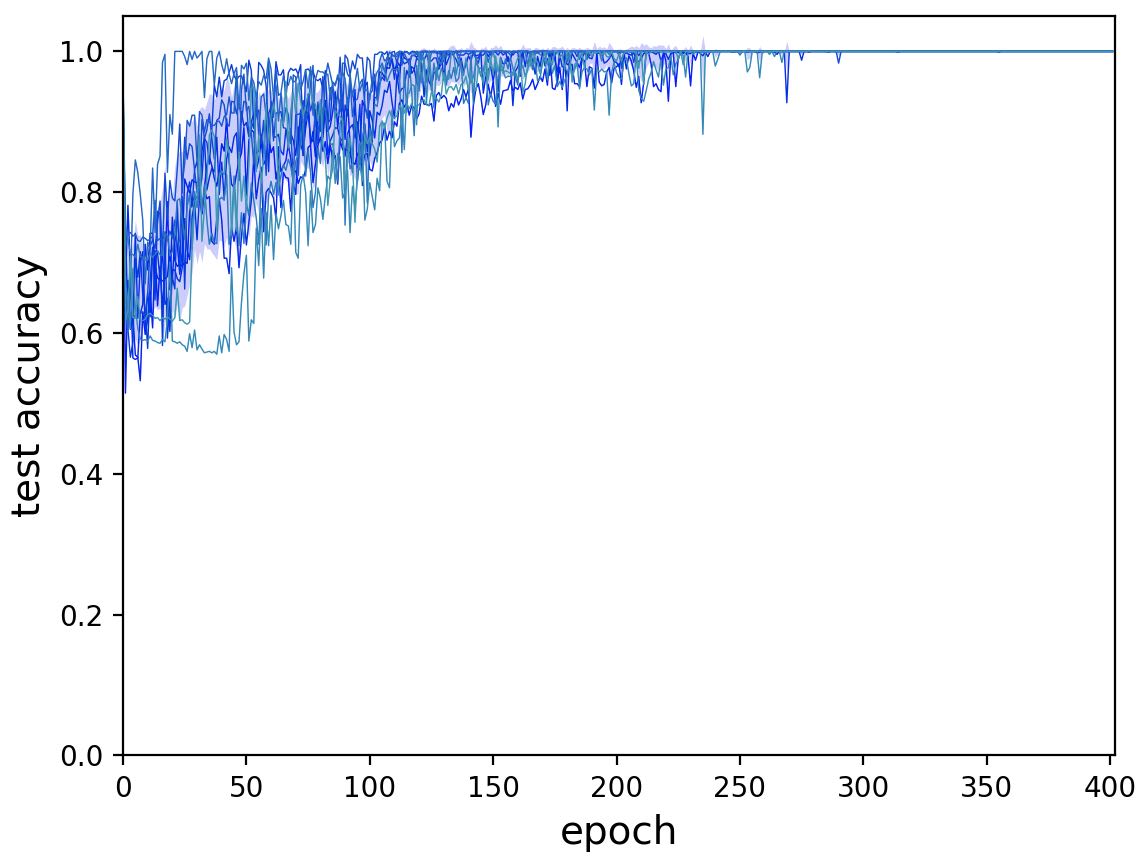}
    \caption{Unsupervised clustering accuracy on the test set for [Left] object shape and [Right] wall colour, when trained on 3DShapes (config. 2).}
    \label{app_fig:test_acc_3DShapes_configuration_2}
\end{figure}

\clearpage

\subsection{Diversity of generated samples}
\label{app:Diversity of generated samples}

First, we compare sample generation performance on MNIST between MFCVAE (a) and LTVAE (b) in Fig.~\ref{app_fig:sample_generation_full_MNIST_with_LTVAE}.
We sample MFCVAE as discussed in Section~\ref{sec:Diversity of generated samples}.
For LTVAE, while not fully clear, LTVAE samples from one Mixture-of-Gaussian, i.e. first samples from a categorical, then from the chosen component, to obtain $\v z$ (\cite{ltvae}, Section 4.5).
This sampling procedure resembles MFCVAE's reconstructions that illustrate its digit facet (e.g. Fig.~\ref{app_fig:sample_generation_full_MNIST_with_LTVAE} (a.1)).
When comparing (a.1) and (b), it can be observed that generation performance is comparable between the two models, with each row representing a certain digit identity cluster.
However, as shown in (a.2) and not demonstrated by LTVAE, our model additionally allows sample generation conditional on style clusters.
This demonstrates the advantage of MFCVAE in its intervention capability for each facet during sample generation, allowing a rich set of options for potential downstream tasks.

To quantitatively compare the sampling diversity of MFCVAE against VaDE, we compute the Learnt Perceptual Image Patch Similarity (LPIPS) \cite{zhang2018unreasonable} of 60,000 samples generated from models trained on MNIST and SVHN. 
We also report LPIPS computed on on real (i.e. non-synthetic) images, where we use the data from both training set and test set.
For LPIPS, higher is better, while it must be ensured that the true data distribution is modelled (and not just a distribution that  artificially maximises the metric).
As shown in Table \ref{app_tab:diversity}, our method generates samples with similar diversity to VaDE on both datasets.
Further, MFCVAE is close to the ``real image diversity'' for MNIST, yet is somewhat lower for SVHN.

\begin{table}[h]
\caption{LPIPS of real images and 60,000 samples generated from VaDE and MFCVAE for MNIST and SVHN.}
\medbreak
\centering
\begin{tabular}{ ccc  } \toprule
  &  MNIST & SVHN \\ \midrule
Real Images  & 0.112 & 0.227 \\
VaDE & 0.111 & 0.187 \\
MFCVAE (ours) & 0.116 & 0.182 \\ \bottomrule
\end{tabular}
\label{app_tab:diversity}
\end{table}

In addition, Figs.~\ref{app_fig:sample_generation_full_3DShapes_1} to \ref{app_fig:sample_generation_full_SVHN} show the complete plots of synthetic samples generated from MFCVAE where all clusters are visualised compared to the main text.
Again, we refer to Section~\ref{sec:Diversity of generated samples} for an explanation on the procedure of how these samples are generated, but provide here additional details: 
During sample generation, the variance of the distributions over latent variables $\v{z}_1$ and $\v{z}_2$ are scaled by a ``temperature'' factor $\tau > 0$. 
This is a common technique in likelihood-based deep generative models to improve the quality of generated samples~\cite{kingma2018glow, parmar2018image}. 
To formalise this, at sampling time, the covariance matrix $\tau \Sigma_{c_j}$ is used for $p(\v{z}_j | c_j)$, instead of  $\Sigma_{c_j}$.
In this set of experiments, temperature scaling is used for 3DShapes and SVHN, where we choose $\tau = 0.3$.
For MNIST, we do not use temperature scaling, i.e. $\tau = 1.0$.

For MNIST and 3DShapes, it can be observed that for clusters with a lower average assignment probability, synthetic samples remain homogeneous w.r.t. their characteristic value in each facet. 
For SVHN, the sample reconstruction quality drops for clusters with lower average assignment probabilities which we can attribute to a smaller separation of facets on this dataset as observed in Section~\ref{sec:Generative classification}.

\clearpage

\begin{figure}[p]
    \centering
    \includegraphics[width=.85\linewidth]{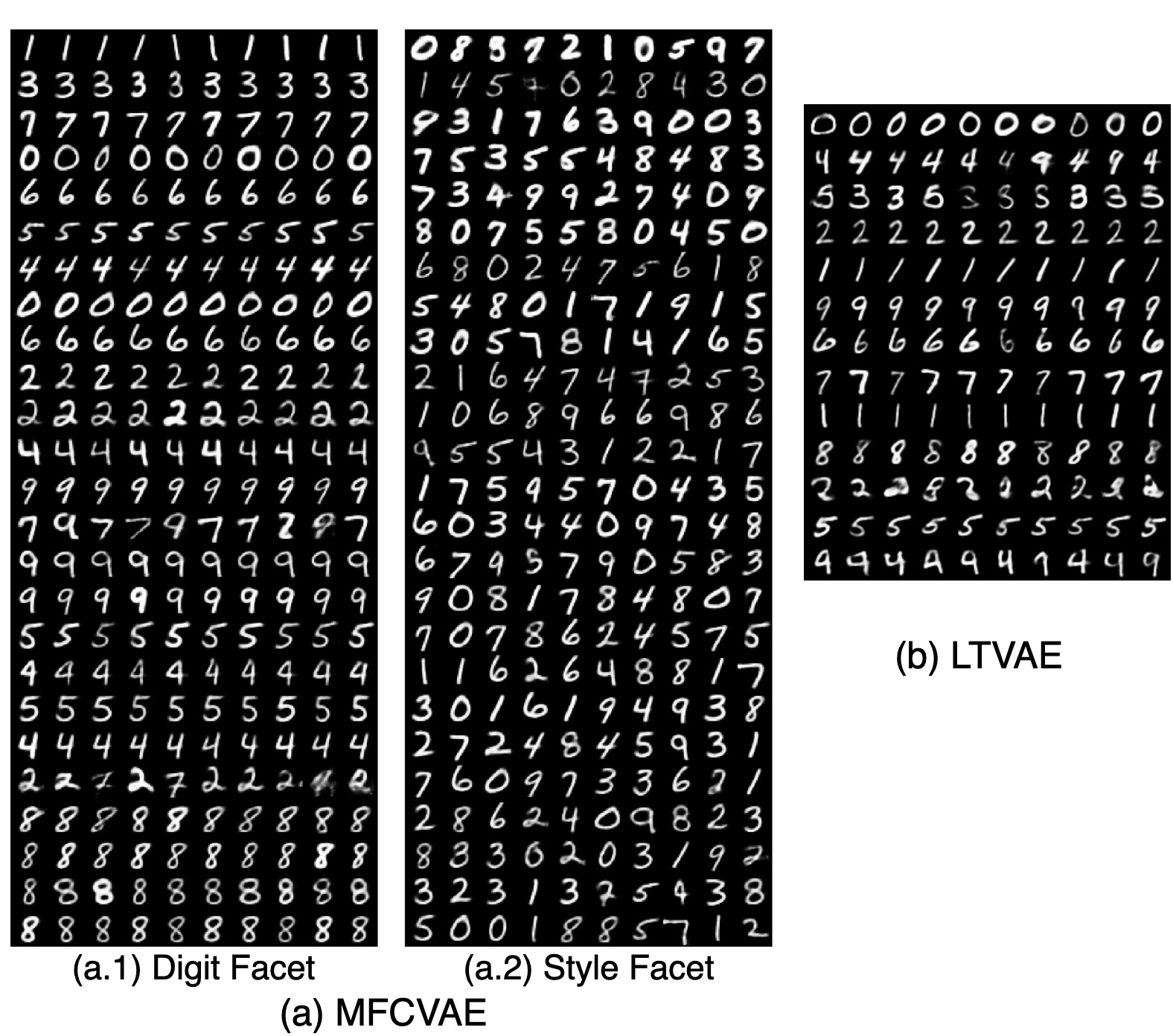}
    \caption{(a) Synthetic samples generated from MFCVAE with two facets ($J=2$) trained on MNIST, with all clusters visualised. 
    Rows are sorted as in Fig.~\ref{fig:sample_generation}.
    (b) Synthetic samples generated from LTVAE trained on MNIST. 
    Plot is taken as reported in \cite{ltvae}, Fig. 7.}
    \label{app_fig:sample_generation_full_MNIST_with_LTVAE}
\end{figure}

\clearpage

\begin{figure}[p]
    \centering
    \includegraphics[width=\linewidth]{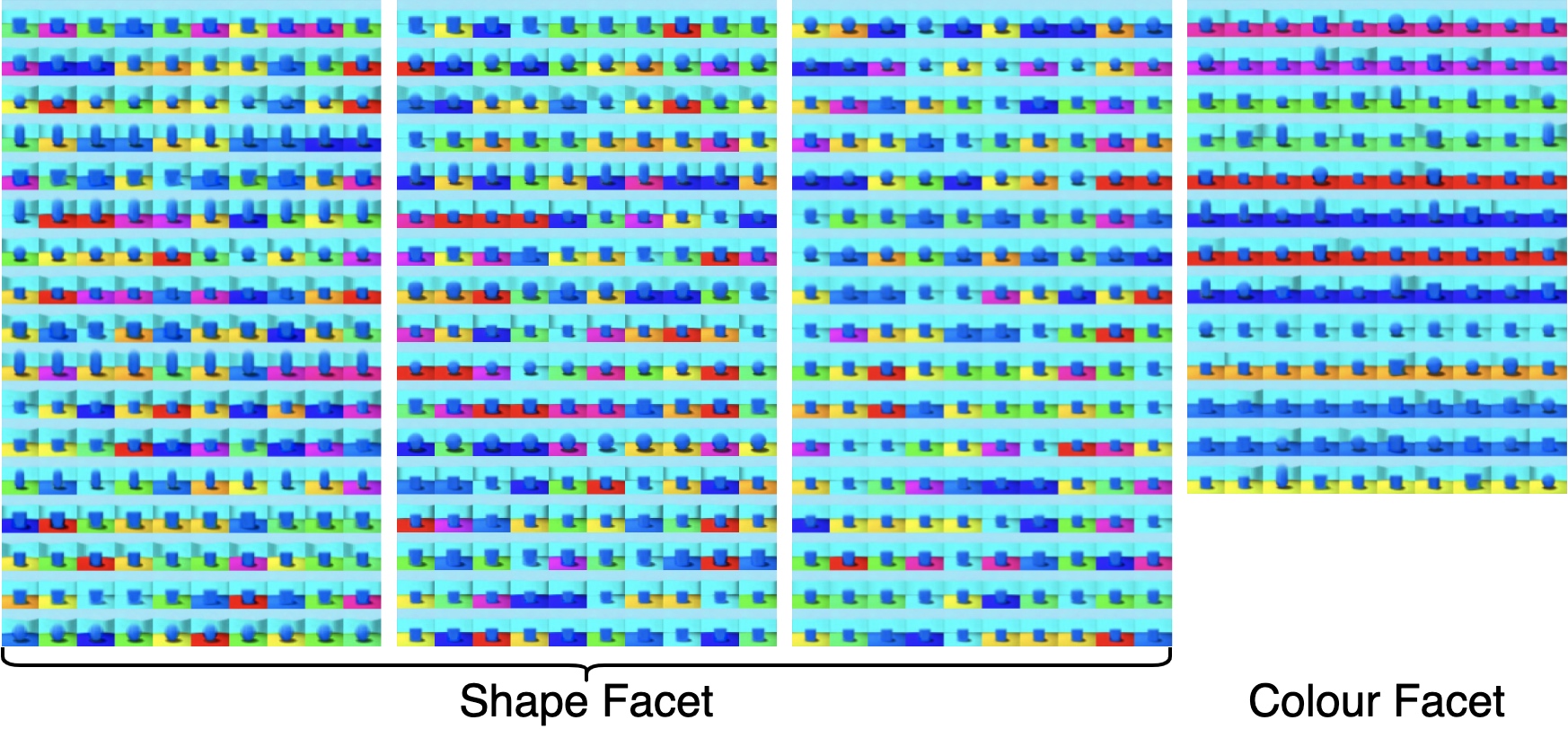}
    \caption{Synthetic samples generated from MFCVAE with two facets ($J=2$) trained on 3DShapes (configuration 1), with all clusters visualised. Rows are sorted as in Fig.~\ref{fig:sample_generation}.}
    \label{app_fig:sample_generation_full_3DShapes_1}
\end{figure}

\begin{figure}[p]
    \centering
    \includegraphics[width=\linewidth]{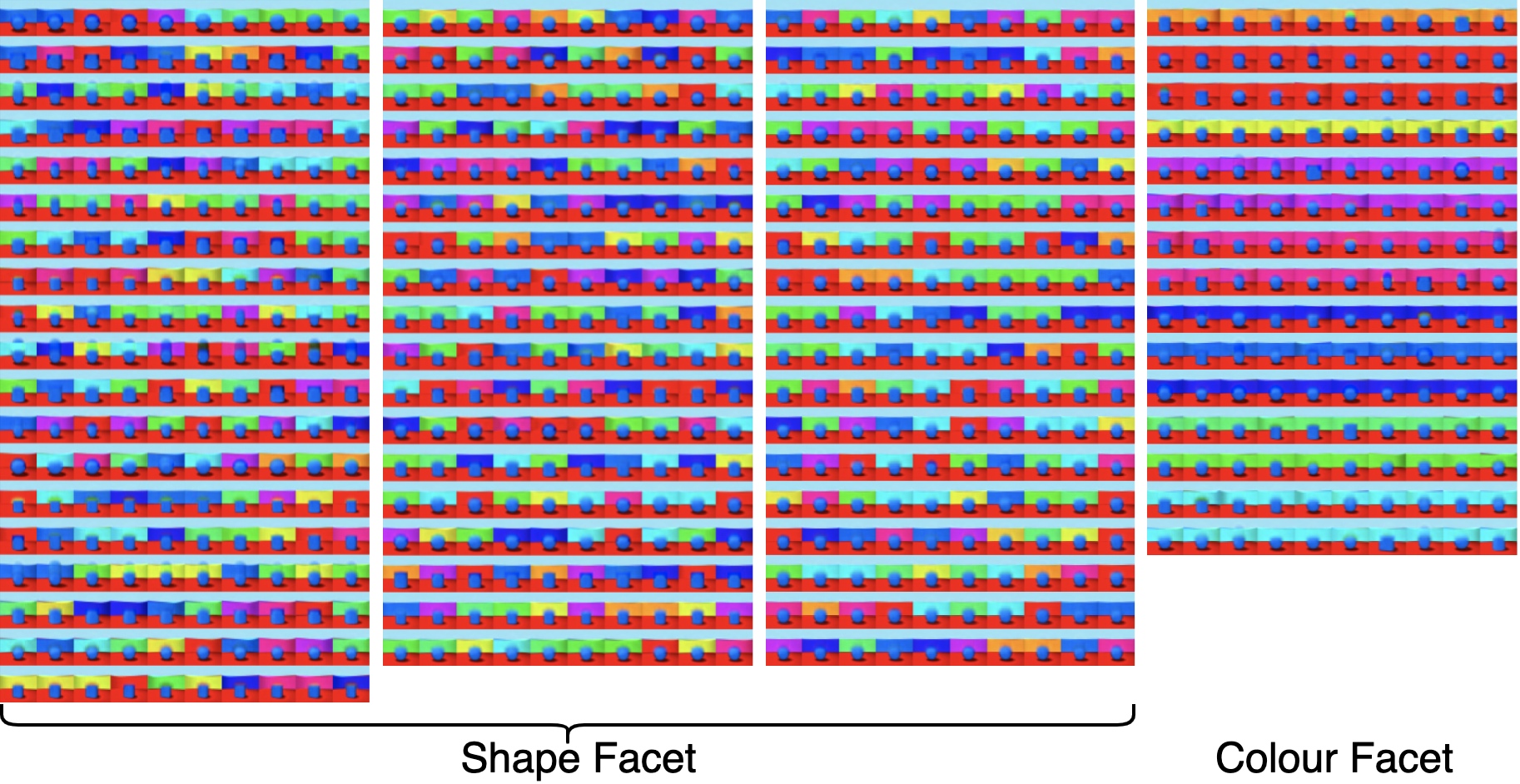}
    \caption{Synthetic samples generated from MFCVAE with two facets ($J=2$) trained on 3DShapes (configuration 2), with all clusters visualised. Rows are sorted as in Fig.~\ref{fig:sample_generation}.}
    \label{app_fig:sample_generation_full_3DShapes_2}
\end{figure}

\clearpage

\begin{figure}[p]
    \centering
    \includegraphics[width=\linewidth]{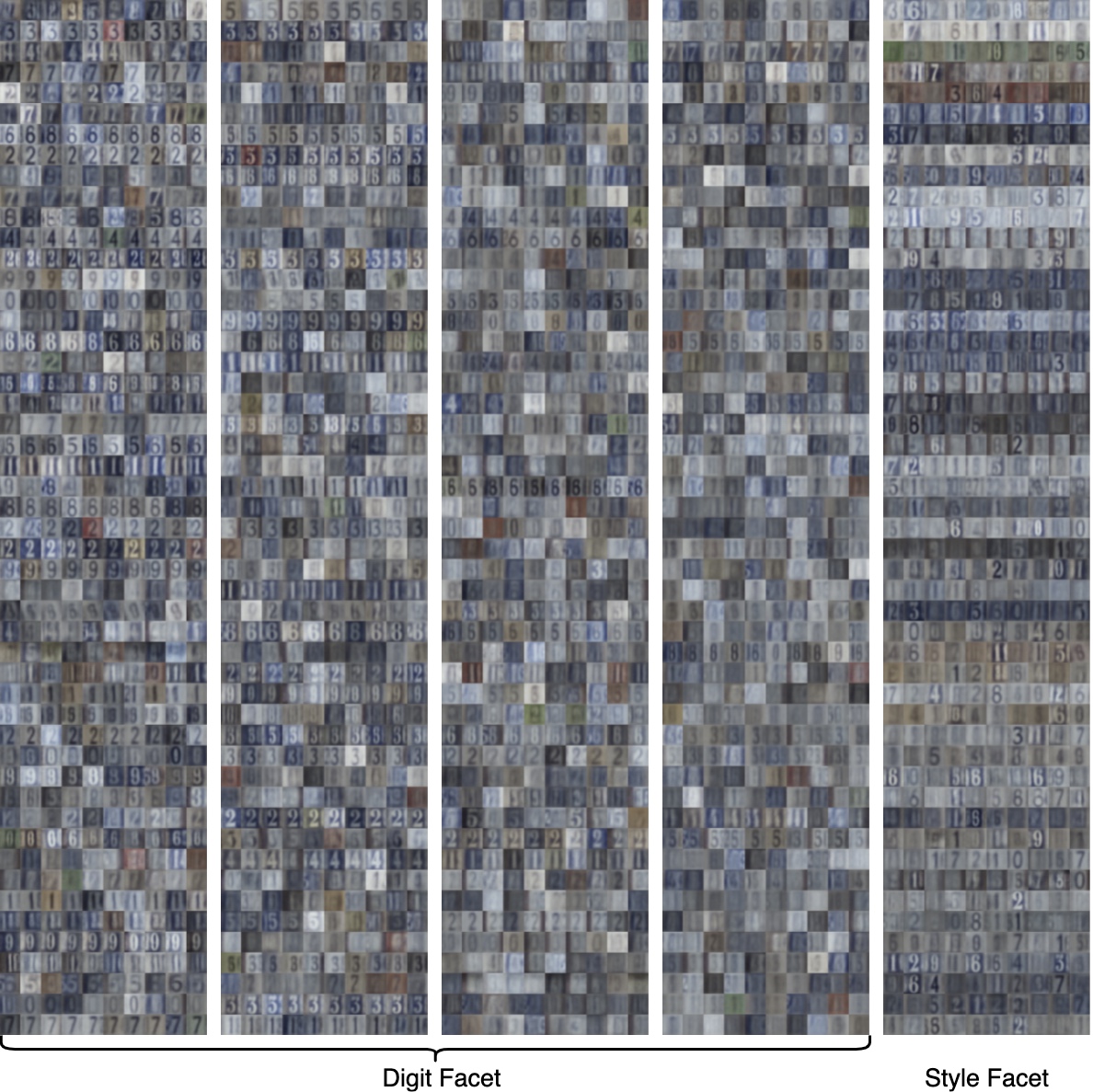}
    \caption{Synthetic samples generated from MFCVAE with two facets ($J=2$) trained on SVHN, with all clusters visualised. 
    Rows are sorted as in Fig.~\ref{fig:sample_generation}.}
    \label{app_fig:sample_generation_full_SVHN}
\end{figure}

\end{document}